\documentclass{article}

\usepackage[accepted]{icml2019nc}

\usepackage{microtype, graphicx, booktabs,verbatim, amsmath, amssymb, subfig, bbm, mathrsfs, amsthm, url,adjustbox,mathtools, xspace}
\usepackage{textcomp}
\usepackage{cleveref}
\usepackage[normalem]{ulem}
\usepackage[framemethod=TikZ]{mdframed}

\usepackage{enumerate, algorithm, algorithmic, pifont, csquotes, dashrule, tikz, bm}
\usepackage[title]{appendix}



\newcommand{\bigmid}{\mathrel{\Big|}}

\newcommand{\durl}[1]{\textcolor{blue}{\underline{\url{#1}}}}


\newcommand{\eps}{\varepsilon}



\newcommand{\mc}[1]{\mathcal{#1}}

\newcommand{\bE}{\mathbb{E}}

\newcommand{\ra}{\rightarrow}




\DeclareMathOperator*{\argmax}{arg\,max}


\newtheorem{theorem}{Theorem}

\newtheorem{lemma}{Lemma}
\newtheorem{proposition}{Proposition}
\newtheorem{theorems}{Theorem}[theorem]

\definecolor{dblue}{RGB}{98, 140, 190}
\definecolor{dgreen}{RGB}{113, 198, 113}
\definecolor{dpink}{RGB}{207, 166, 208}
\definecolor{dgold}{RGB}{197, 193, 170}

\newcounter{DaveDefCounter}
\setcounter{DaveDefCounter}{1}
\newcommand{\ddef}[2]
{
\begin{mdframed}[roundcorner=1pt, backgroundcolor=white]
\vspace{1mm}
{\bf Definition \theDaveDefCounter} (#1): {\it #2}
\stepcounter{DaveDefCounter}
\end{mdframed}
}
\newenvironment{example}{\begin{proof}}{\end{proof}}

\newcommand{\mimoalg}{\textsc{A-MIMO}}
\newcommand{\momialg}{\textsc{A-MOMI}}

\newcommand{\momichoose}{\ensuremath{\textsc{MOMI}_{gen}}\xspace}
\newcommand{\momi}{\textsc{MOMI}\xspace}
\newcommand{\poly}{\text{poly}\xspace}
\newcommand{\OPT}{\text{OPT}\xspace}
\newcommand{\MR}{\text{MR}\xspace}
\newcommand{\cost}{\text{cost}\xspace}
\newcommand{\NP}{\text{\it NP}}
\newcommand{\DTIME}{\text{\it DTIME}}
\newcommand{\PTIME}{\text{\it P}}


\newcommand{\namecite}[1]{\citeauthor{#1}~(\citeyear{#1})}


\begin{document}

\twocolumn[
\icmltitle{Finding Options that Minimize Planning Time}
\icmlkeywords{Planning, Reinforcement Learning}

\begin{icmlauthorlist}
\icmlauthor{Yuu Jinnai}{b}
\icmlauthor{David Abel}{b}
\icmlauthor{D Ellis Hershkowitz}{c}
\icmlauthor{Michael L. Littman}{b}
\icmlauthor{George Konidaris}{b}
\end{icmlauthorlist}

\icmlaffiliation{b}{Brown University, Providence, RI, United States}
\icmlaffiliation{c}{Carnegie Mellon University, Pittsburgh, PA, United States}

\icmlcorrespondingauthor{Yuu Jinnai}{yuu\_jinnai@brown.edu}

\vskip 0.3in
]

\printAffiliationsAndNotice{}

\begin{abstract}
We formalize the problem of selecting the optimal set of options for planning as that of  computing the smallest set of options so that planning converges in less than a given maximum of value-iteration passes.
We first show that the problem is  $\NP$-hard, even if the task is constrained to be deterministic---the first such complexity result for option discovery.
We then present the first polynomial-time boundedly suboptimal approximation algorithm for this setting, and empirically evaluate it against both the optimal options and a representative collection of heuristic approaches in simple grid-based domains including the classic four-rooms problem.
\end{abstract}

\section{Introduction}

Markov Decision Processes or MDPs~\cite{puterman1994markov} are an expressive yet simple model of sequential decision-making environments. However, MDPs are computationally expensive to solve~\cite{papadimitriou87,littman97b,goldsmith97b}.
One approach to solving such problems is to add high-level, temporally extended actions---often formalized as options~\cite{sutton1999between}---to the action space. The right set of options allows planning to probe more deeply into the search space with a single computation. Thus, if options are chosen appropriately, planning algorithms can find good plans with less computation.

Indeed, previous work has offered substantial support that abstract actions can accelerate planning~\cite{mann2014scaling}. However, little is known about how to find the right set of options to use for planning. Prior work often seeks to codify an intuitive notion of what underlies an effective option, such as identifying relatively novel states \cite{Simsek04}, identifying bottleneck states or high-betweenness states \cite{Simsek2005,csimcsek2009skill,bacon2013bottleneck,moradi2012automatic}, finding repeated policy fragments \cite{pickett2002policyblocks}, or finding states that often occur on successful trajectories \cite{mcgovern2001automatic,bakker2004hierarchical}.
While such intuitions often capture important aspects of the role of options in planning, the resulting algorithms are somewhat heuristic in that {\it they are not based on optimizing any precise performance-related metric}; consequently, their relative performance can only be evaluated empirically.

We aim to formalize what it means to find the set of options that is optimal for planning, and to use the resulting formalization to develop an approximation algorithm with a principled theoretical foundation.
Specifically, we consider the problem of finding the smallest set of options so that planning converges in fewer than a given maximum of $\ell$ iterations of the planning algorithm, value iteration (VI).
Our main result shows that this problem is $\NP$-hard. More precisely, the problem:
\begin{enumerate}
    \item is $2^{\log^{1 - \epsilon} n}$-hard to approximate for any $\epsilon > 0$ unless $\NP \subseteq \DTIME(n^{\poly \log n})$\footnote{This is a standard complexity assumption: See, for example, \citet{dinitz2012label}}, where $n$ is the input size;
    \item is $\Omega(\log n)$-hard to approximate even for deterministic MDPs unless $\PTIME = \NP$;
    \item has a $O(n)$-approximation algorithm;
    \item has a $O(\log n)$-approximation algorithm for deterministic MDPs.
\end{enumerate}

In Section \ref{sec:algorithms}, we show \momialg{}, a polynomial-time approximation algorithm that has $O(n)$ suboptimality in general and $O(\log n)$ suboptimality for deterministic MDPs.
The expression $2^{\log^{1-\epsilon}n}$ is only slightly smaller than $n$: if $\epsilon=0$ then $\Omega(2^{\log n}) = \Omega(n)$. Thus, the inapproximability results claim that \momialg{} is close to the best possible approximation factor.

In addition, we consider the complementary problem of finding a set of $k$ options that minimize the number of VI iterations until convergence. We show that this problem is also $\NP$-hard, even for a deterministic MDP.

Finally, we empirically evaluate the performance of two heuristic approaches for option discovery, betweenness options \cite{csimcsek2009skill} and Eigenoptions \cite{machado2018laplacian}, against the proposed approximation algorithms and the optimal options in standard grid domains.



\section{Background}
\label{sec:background}

We first provide background on Markov Decision Processes (MDPs), planning, and options. 

\subsection{Markov Decision Processes and Planning}

An MDP is a five tuple: $\langle \mc{S}, \mc{A}, R, T, \gamma \rangle$, where
$\mc{S}$ is a finite set of states; $\mc{A}$ is a finite set of actions; $R: \mc{S} \times \mc{A} \ra [0, \textsc{RMax}]$ is a reward function; $T: \mc{S} \times \mc{A} \ra \Pr(\mc{S})$ is a transition function, denoting the probability of arriving in state $s' \in \mc{S}$ after executing action $a \in \mc{A}$ in state $s \in \mc{S}$; and $\gamma \in [0, 1]$ is a discount factor, expressing the agent's preference for immediate over delayed rewards.

An action-selection strategy is modeled by a 
{\it policy}, $\pi : \mc{S} \ra \Pr(\mc{A})$, mapping
states to a distribution over actions. Typically, the goal of planning in an MDP is to {\it solve} the MDP---that is, to compute an optimal policy. A policy $\pi$ is evaluated according to the Bellman equation, denoting the long term expected reward received by executing $\pi$:
\begin{equation}
    V^\pi(s) = R(s,\pi(s)) + \gamma \sum_{s' \in \mc{S}} T(s,\pi(s),s') V^\pi(s').
\end{equation}
We denote $\pi^*(s) = \argmax_\pi V^\pi(s)$ and $V^*(s) = \max_\pi V^\pi(s)$ as the optimal policy and value function, respectively. 



The core problem we study is {\it planning}, namely, computing a near optimal policy for a given MDP. The main variant of the planning problem we study we denote the {\it value-planning problem}:
\ddef{Value-Planning Problem}{
    {\bf Given} an MDP $M = \langle \mc{S}, \mc{A}, R, T, \gamma \rangle$ and a non-negative real-value $\epsilon$, {\bf return} a value function, $V$ such that $|V(s) - V^*(s)| < \epsilon$ for all $s \in \mc{S}$.}

The value-planning problem can be solved in time polynomial in the size of the state space.

\subsection{Options and Value Iteration}

Temporally extended actions offer great potential for mitigating the difficulty of solving complex MDPs, either through planning or reinforcement learning \cite{sutton1999between}. Indeed, it is possible that options that are useful for learning are not necessarily useful for planning, and vice versa. Identifying techniques that produce good options in these scenarios is an important open problem in the literature.

We use the standard definition of options \cite{sutton1999between}:
\ddef{option}{An option $o$ is defined by a triple: $(\mc{I}, \pi, \beta)$ where:
\begin{itemize}
    \item $\mc{I} \subseteq \mc{S}$ is a set of states where the option can initiate,
    \item $\pi : \mc{S} \ra \Pr(\mc{A})$ is a policy,
    \item $\beta : \mc{S} \ra [0, 1]$, is a termination condition.
\end{itemize}
We let $\mc{O}_{all}$ denote the set containing all options.
}

In planning, options have a well defined transition and reward model for each state named the multi-time model, introduced by~\namecite{precup1998multi}:
\begin{align}
    T_\gamma(s,o,s') &= \sum_{t=0}^\infty \gamma^t \Pr(s_t = s', \beta(s_t) \mid s, o). \\
    R_\gamma(s,o) &= \underset{o_\pi}{\bE}\left[r_1 + \gamma r_2 + \ldots + \gamma^{k-1} r_k \bigmid s, o\right].
\end{align}
We use the multi-time model for value iteration. The algorithm computes a sequence of functions $V_0, V_1, ..., V_b$ using the Bellman optimality operator on the multi-time model:
\begin{equation}
    V_{i+1} (s) = \max_{o \in A \cup \mc{O}} \left(R_\gamma(s,o) + \sum_{s' \in S} T_\gamma(s,o,s') V_{i}(s') \right).
\end{equation}
The problem we consider is to find a set of options to add to the set of primitive actions that minimize the number of iterations required for VI to converge:\footnote{We can ensure $|V^*(s) - V_{i}(s)| < \epsilon$ by running VI until $|V_{i+1}(s) - V_{i}(s)| < \epsilon (1 - \gamma) / 2 \gamma$ for all $s \in \mc{S}$ \cite{williams1993tight}.}

\ddef{$L_{\epsilon, V_0}(\mc{O})$}{
The number of iterations $L_{\epsilon, V_0}(\mc{O})$ of VI using the joint action set $\mc{A} \cup \mc{O}$, with $\mc{O}$ a non-empty set of options, is the smallest $b$ at which $|V_b'(s) - V^*(s)| < \epsilon$ for all $s \in \mc{S}$, $b' \geq b$. 
}

\subsubsection{Point options.}

The options formalism is immensely general. Due to its generality, a single option can actually encode several completely unrelated sets of different behaviors. Consider the nine-state example MDP pictured in Figure~\ref{fig:point_option_example}; a single option can in fact initiate, make decisions in, and terminate along entirely independent trajectories. As we consider more complex MDPs (which, as discussed earlier, is often a motivation for introducing options), the number of independent behaviors that can be encoded by a single option increases further still.

\begin{figure}
    \centering
    \includegraphics[width=0.5\columnwidth]{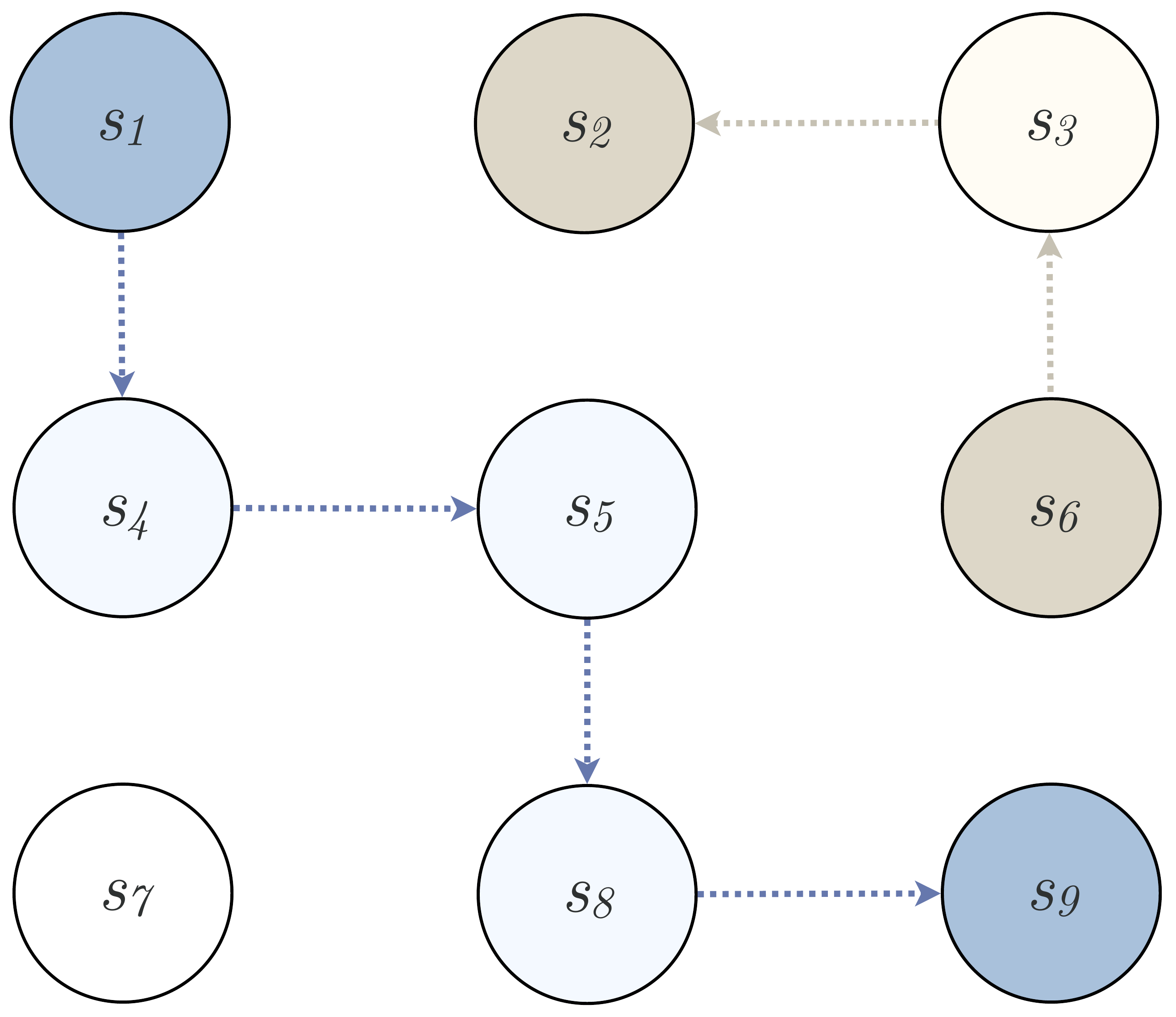}
    \caption{A single option can encode multiple unrelated behaviors. The dark circles indicate where the option can be initiated ($s_1$ \& $s_6$) and terminated ($s_2$ \& $s_9$), whereas the lighter circles denote the states visited by the option policy when applied in the respective initiating state.}
    \label{fig:point_option_example}
\end{figure}

As a result, it can be difficult to reason about the impact of adding a single option, in the traditional sense. As the MDP grows larger, a combinatorial number of different behaviors can emerge from ``one" option. Consequently, it is difficult to address the question: which {\it single} option helps planning the most? As MDPs grow large, one option can encode a large number of possible, independent behaviors. Thus, we instead introduce and study ``point options", which only allow for a single continuous stream of behavior:
\ddef{Point option}{
    A {\bf point option} is any option whose initiation set and termination set are each true for exactly one state each:
\begin{align}
        |\{s \in \mc{S} : \mc{I}(s) = 1\}| &= 1, \\
        |\{s \in \mc{S} : \beta(s) > 0\}| &= 1, \\
        |\{s \in \mc{S} : \beta(s) = 1\}| &= 1.
\end{align}

    We let $\mc{O}_p$ denote the set containing all point options.
}
For simplicity, we denote the initiation state as $\mc{I}_o$ and the termination state as $\beta_o$ for a point option $o$.

To plan with a point option from state $s$, the agent runs value iteration using a model $Q(s, o) = R(s, o) + \gamma^k V(s')$ in addition to the backup operations by primitive actions where $k$ is the duration of the option. We assume that the model of each option is given to the agent and ignore the computation cost for computing the model for the options.

Point options are a useful subclass to consider for several reasons. First, a point option is a simple model for a temporally extended action. Second, the policy of the point option can be calculated as a path-planning problem for deterministic MDPs. 
Third, any other options with a single termination state with termination probability 1 can be represented as a collection of point options. Fourth, a point option has a fixed amount of computational overhead per iteration.

\section{Complexity Results}
\label{sec:complexity}

Our main results focus on two computational problems:
\begin{enumerate}
    \item \textsc{MinOptionMaxIter} (MOMI): Which set of options lets
    value iteration converge in at most $\ell$ iterations?
    \item \textsc{MinIterMaxOption} (MIMO): Which set of $k$ or fewer options minimizes the number of iterations to convergence?
\end{enumerate}

More formally, MOMI is defined as follows.

\ddef{MOMI}{The \textsc{MinOptionMaxIter} problem: \\
    {\bf Given} an MDP $M$, a non-negative real-value $\epsilon$, an initial value function $V_0$, and an integer $\ell$ {\bf return} $\mc{O}$ that minimizes $|\mc{O}|$ subject to $\mc{O} \subseteq \mc{O}_p$ and $L_{\epsilon, V_0}(\mc{O}) \leq \ell$.
}

We then consider the complementary optimization problem: 
compute a set of $k$ options which minimizes the number of iterations. Motivated by this scenario, the second problem we study is
\textsc{MinIterMaxOption} (MIMO).

\ddef{MIMO}{The \textsc{MinIterMaxOption} problem: \\
    {\bf Given} an MDP $M$, a non-negative real-value $\epsilon$, an initial value function $V_0$, and an integer $k$ {\bf return} $\mc{O}$ that minimizes $L_{\epsilon, V_0}(\mc{O})$, subject to $\mc{O} \subseteq \mc{O}_p$ and $|\mc{O}| \leq k$.
}

We now introduce our main result, which shows that both MOMI and MIMO are $\NP$-hard.
\begin{theorem}
    MOMI and MIMO are $\NP$-hard.
    \label{thm:main_result}
\end{theorem}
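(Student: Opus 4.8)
The plan is to prove $\NP$-hardness by a reduction from \textsc{Set Cover}, which handles both problems at once because the construction uses a deterministic MDP (a special case of the general model, so deterministic hardness implies hardness for both \momi and \textsc{MIMO} in general). Given a \textsc{Set Cover} instance with universe $U = \{u_1,\dots,u_m\}$ and sets $S_1,\dots,S_p$, I would build a small-diameter MDP in which sets correspond to candidate point options and elements correspond to states whose convergence is the binding constraint, and then show that a family $\mc{O}$ of point options makes value iteration converge within $\ell$ iterations if and only if the corresponding sets cover $U$.

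Before fixing the gadget I would pin down the right convergence semantics, which I expect to be the first technical hurdle. Under a discount $\gamma < 1$ with a single rewarding state, the per-state error $|V_i(s) - V^*(s)|$ is governed jointly by the global contraction factor $\gamma^i$ and by reachability, and states far from the reward actually have \emph{small} optimal value and so converge early in absolute terms; this makes hop-distance a poor encoding of combinatorial hardness. I would therefore work with an undiscounted one-shot-reward MDP ($\gamma = 1$, a goal $g$ that pays $R_g$ once and then transitions to an absorbing zero-reward sink), for which $V_i(s) = R_g$ exactly when $g$ is reachable from $s$ in at most $i$ backups and $V_i(s) = 0$ otherwise. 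With $V_0 = 0$ and $\epsilon = R_g/2$, a state then ``converges'' at precisely the iteration equal to its shortest-path distance to $g$, and $L_{\epsilon, V_0}(\mc{O})$ equals the maximum over states of this distance in the graph whose edges are the primitive transitions together with one edge per option in $\mc{O}$.

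With this in hand I would build a layered gadget and set $\ell = 2$. Place $g$ at distance $0$; for each set $S_j$ introduce a hub $p_j$ with a native two-step primitive path $p_j \to q_j \to g$ (so every hub converges in $2 \le \ell$ regardless of which options are chosen); and for each element $u_i$ introduce a state $x_i$ with a single-step edge $x_i \to p_j$ for every $S_j \ni u_i$. Natively each $x_i$ sits at distance $3 > \ell$ and so fails to converge in $\ell$ iterations. The point option $o_j$ that runs the optimal sub-policy from $p_j$ to $g$ collapses $p_j$ to distance $1$, and hence every $x_i$ with $u_i \in S_j$ to distance $2$; thus adding $o_j$ simultaneously fixes exactly the elements of $S_j$. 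The core of the argument, and the step I expect to be the main obstacle, is showing that no cheaper or ``cheating'' option family beats this correspondence: because the only states reachable within one backup from several $x_i$ are the hubs, any single option that rescues more than one element must be a hub option, whereas an option initiated at $x_i$ rescues only $x_i$; consequently the minimum number of point options achieving $L_{\epsilon,V_0}(\mc{O}) \le \ell$ equals the minimum number of sets covering $U$ (singleton element-options never help, since every element lies in some set). This gives the reduction for \momi.

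Finally, I would obtain the result for \textsc{MIMO} from the \emph{same} construction: setting the option budget to $k = t$, a budget-$k$ solution achieves $L_{\epsilon,V_0}(\mc{O}) \le 2$ if and only if $U$ admits a cover of size at most $t$ (otherwise some $x_i$ remains at distance $3$ and $L_{\epsilon,V_0}(\mc{O}) \ge 3$), so deciding whether the \textsc{MIMO} optimum is at most $2$ decides \textsc{Set Cover}. Since both reductions run in polynomial time and use a deterministic MDP, both \momi and \textsc{MIMO} are $\NP$-hard, establishing the theorem.
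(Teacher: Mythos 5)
Your proposal is correct and follows essentially the same route as the paper: a reduction from \textsc{Set Cover} to a deterministic shortest-path MDP with $\ell = 2$, where your hubs $p_j$ and intermediate states $q_j$ play exactly the roles of the paper's $X_i$ and $X'_i$ states, and your argument that ``cheating'' options initiated at element states can be swapped for hub options is the paper's type-(b)/(c)-to-type-(a) swap. Your explicit treatment of the undiscounted one-shot-reward convergence semantics is a point the paper leaves implicit, but the construction and correspondence are the same.
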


\begin{proof}
    We consider a problem OI-DEC which is a decision version of MOMI and MIMO. 
    The problem asks if we can solve the MDP within $\ell$ iterations using at most $k$ point options.
    
    \ddef{OI-DEC}{ \\
        {\bf Given} an MDP $M$, a non-negative real-value $\epsilon$, an initial value function $V_0$, and integers $k$ and $\ell$, {\bf return} `Yes' if the there exists an option set $\mc{O}$ such that $\mc{O} \subseteq \mc{O}_p$, $|\mc{O}| \leq k$ and $L(\mc{O}) \leq \ell$. `No' otherwise.
    }
    
    We prove the theorem by reduction from the decision version of the set-cover problem---known to be NP-complete---to OI-DEC.
    The set-cover problem is defined as follows.

    \ddef{SetCover-DEC} { \\
        {\bf Given} a set of elements $\mc{U}$, a set of subsets $\mc{X} = \{X \subseteq \mc{U}\}$, and an integer $k$, {\bf return} `Yes' if there exists a cover $\mc{C} \subseteq \mc{X}$ such that $\bigcup_{X \in \mc{C}} X = \mc{U}$ and $|\mc{C}| \leq k$. `No' otherwise.
    }
    
    If there is some $u \in \mc{U}$ that is not included in at least one of the subsets $X$, then the answer is `No'.
    Assuming otherwise, we construct an instance of a shortest path problem (a special case of an MDP problem) as follows (Figure \ref{fig:reduction}).
    There are four types of states in the MDP: (1) $u_i \in \mc{U}$ represents one of the elements in $\mc{U}$, (2) $X_i \in \mc{X}$ represents one of the subsets in $\mc{X}$, (3) $X'_i \in \mc{X}'$: we make a copy for every state $X_i \in \mc{X}$ and call them $X'_i$, (4) a goal state $g$.
    Thus, the state set is $\mc{U} \cup \mc{X} \cup \mc{X}' \cup \{g\}$.
    We build edges between states as follows:
    (1) $e(u, X) \in E$ iff $u \in X$: For $u \in \mc{U}$ and $X \in \mc{X}$, there is an edge between $u$ and $X$.
    (2) $\forall X_i \in \mc{X}$, $e(X_i, X'_i) \in E$: For every $X_i \in \mc{X}$, we have an edge from $X_i$ to $X'_i$. (3) $\forall e(X', g) \in E$: for every $X' \in \mc{X}'_i$ we have an edge from $X_i$ to the goal $g$.
    This construction can be done in polynomial time. 

    Let $M$ be the MDP constructed in this way.
    We show that SetCover($\mc{U}, \mc{X}, k$) = OI-DEC($M, V_0 = 0, k, 2$).
    Note that by construction every state $X_i$, $X'_i$, and $g$ converges to its optimal value within 2 iterations as it reaches the goal state $g$ within 2 steps. A state
    $u \in \mc{U}$ converges within 2 steps if and only if there exists a point option (a) from $X$ to $g$ where $u \in X$, (b) from $u$ to $X'$ where $u \in X$, or (c) from $u$ to $g$. For options of type (b) and (c), we can find an option of type (a) that makes $u$ converge within 2 steps by setting the initial state of the option to $\mc{I}_o = X$, where $u \in X$, and the termination state to $\beta_o = g$.
    Let $\mc{O}$ be the solution of OI-DEC($M, k, 2$).
    If there exists an option of type (b) or (c), we can swap them with an option of type (a) and still maintain a solution.
    Let $\mc{C}$ be a set of initial states of each option in $\mc{O}$ ($\mc{C} = \{\mc{I}_o | o \in \mc{O}\}$).
    This construction exactly matches the solution of the SetCover-DEC.
    
    \begin{figure}
        \centering
        \begin{tikzpicture}
            \node [draw, circle] (x1) at (0, 3) {$u_1$};
            \node [draw, circle] (x2) at (1, 3) {$u_2$};
            \node [draw, circle] (x3) at (2, 3) {$u_3$};
            \node [draw, circle] (x4) at (3, 3) {$u_4$};
            \node [draw, circle] (x5) at (4, 3) {$u_5$};

            \node [draw, circle, label=center:$X_1$] (s1) at (1, 2) {\phantom{$u_1$}};
            \node [draw, circle, label=center:$X_2$] (s2) at (3, 2) {\phantom{$u_1$}};

            \node [draw, circle, label=center:$X'_1$] (sp1) at (1, 1) {\phantom{$u_1$}};
            \node [draw, circle, label=center:$X'_2$] (sp2) at (3, 1) {\phantom{$u_1$}};

            \node [draw, circle, minimum size=0.8cm] (g) at (2, 0) {$g$};
            \node [draw, circle, minimum size=0.5cm] at (2, 0) {};

            \draw[->] (x1) -- (s1);
            \draw[->] (x2) -- (s1);
            \draw[->] (x3) -- (s1);
            \draw[->] (x3) -- (s2);
            \draw[->] (x4) -- (s2);
            \draw[->] (x5) -- (s2);
            \draw[->] (s1) -- (sp1);
            \draw[->] (s2) -- (sp2);
            \draw[->] (sp1) -- (g);
            \draw[->] (sp2) -- (g);
        \end{tikzpicture}
        \caption{Reduction from SetCover-DEC to OI-DEC. The example shows the reduction from an instance of SetCover-DEC which asks if we can pick two subsets from $\mc{X} = \{X_1, X_2\}$ where $X_1 = \{1, 2, 3\}, X_2 = \{3, 4, 5\}$ to cover all elements $\mc{U} = \{1, 2, 3, 4, 5\}$. The SetCover-DEC can be reduced to an instance of OI-DEC where the question is whether the MDP can be solved with 2 iterations of VI by adding at most two point options. The answer of OI-DEC is `Yes' (adding point options from $X_1$ and $X_2$ to $g$ will solve the problem), thus the answer of the SetCover-DEC is `Yes'. Here the set of initial states corresponds to the cover for the SetCover-DEC.
        }
        \label{fig:reduction}
    \end{figure}

    
\end{proof}





\subsection{Generalizations of MOMI and MIMO}
A natural question is whether Theorem~\ref{thm:main_result} extends to more general option-construction settings. We consider two possible extensions,
which we believe offer significant coverage of finding optimal options for planning in general.

We first consider the case where the options are not necessarily point options. There is little sense in considering MOMI where one can choose any option since clearly the best option is the option whose policy is the optimal policy.  Thus, using the space of all options $\mc{O}_{all}$ we generalize MOMI as follows:
\ddef{MOMI$_{gen}$}{\\
    {\bf Given} an MDP $M$, a non-negative real-value $\epsilon$, an initial value function $V_0$, $\mc{O'} \subseteq \mc{O}_{all}$, and an integer $\ell$, {\bf return} $\mc{O}$ minimizing $|\mc{O}|$ subject to $L_{\epsilon, V_0}(\mc{O}) \leq \ell$ and $\mc{O} \subseteq \mc{O'}$.
}


\begin{theorem}
    MOMI$_{gen}$ and MIMO$_{gen}$ are $\NP$-hard.
\end{theorem}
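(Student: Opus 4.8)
The plan is to observe that MOMI and MIMO are literally the special cases of their generalizations obtained by taking the admissible option class to be the point options, i.e.\ $\mc{O}' = \mc{O}_p$. Comparing the two definitions, MOMI$_{gen}$ with $\mc{O}' = \mc{O}_p$ has exactly the feasible set $\{\mc{O} : \mc{O} \subseteq \mc{O}_p,\ L_{\epsilon, V_0}(\mc{O}) \leq \ell\}$ and the same objective $|\mc{O}|$ as MOMI, and likewise MIMO$_{gen}$ with $\mc{O}' = \mc{O}_p$ coincides with MIMO. Since point options are a subclass of options, $\mc{O}_p \subseteq \mc{O}_{all}$, so this is a legal choice of $\mc{O}'$. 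Hence it suffices to give a polynomial-time reduction producing instances of the generalized problems with $\mc{O}' = \mc{O}_p$ and to invoke the hardness already established in Theorem~\ref{thm:main_result}.

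Concretely, I would reuse the SetCover-DEC reduction from the proof of Theorem~\ref{thm:main_result}: given a set-cover instance, build the same deterministic shortest-path MDP $M$ on states $\mc{U} \cup \mc{X} \cup \mc{X}' \cup \{g\}$, and output the generalized-problem instance $(M, \epsilon, V_0 = 0, \mc{O}' = \mc{O}_p, \ell = 2)$, together with the bound $k$ for the MIMO$_{gen}$ variant. The only ingredient beyond Theorem~\ref{thm:main_result} is that $\mc{O}'$ now appears explicitly as part of the input and must therefore be written down. Because $M$ is deterministic, every point option is pinned down by its single initiation state and single termination state together with a shortest-path policy between them, so $|\mc{O}_p| \leq |\mc{S}|^2$ and the list $\mc{O}' = \mc{O}_p$ can be enumerated in polynomial time. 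This keeps the reduction polynomial.

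Correctness is then immediate: since the feasible sets and objectives of the generalized and original problems agree when $\mc{O}' = \mc{O}_p$, their optimal values are identical, and the decision-level equivalence from Theorem~\ref{thm:main_result} (SetCover $=$ OI-DEC with $\ell = 2$) transfers verbatim to the generalized decision problems. I expect the only genuine obstacle to be this representability point, namely establishing that the admissible class $\mc{O}'$ can be specified compactly so that the reduction really runs in polynomial time. Determinism of the constructed MDP resolves it, since it caps the number of distinct point options at $|\mc{S}|^2$; for a general stochastic $\mc{O}'$ one would instead have to argue that only polynomially many options are relevant, but that complication never arises here because the Theorem~\ref{thm:main_result} instance is already a shortest-path problem.
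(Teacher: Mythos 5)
Your proposal is correct and follows exactly the paper's (one-sentence) argument: MOMI and MIMO are the special cases of their generalizations with $\mc{O}' = \mc{O}_p$, so hardness transfers directly via Theorem~\ref{thm:main_result}. Your additional care about writing down $\mc{O}' = \mc{O}_p$ explicitly in polynomial time (bounded by $|\mc{S}|^2$ point options in the deterministic reduction instance) is a detail the paper glosses over, but it does not change the approach.
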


The proof follows from the fact that MOMI$_{gen}$ is a superset of MOMI and MIMO$_{gen}$ is a superset of MIMO.

We next consider the multi-task generalization, where
we aim to find a smallest number of options which the expected number of iterations to solve a problem $M$ sampled from a distribution of MDPs, $D$, is bounded:

\ddef{MOMI$_{multi}$}{\\ 
    {\bf Given} A distribution of MDPs $D$, $\mc{O'} \subseteq \mc{O}_{all}$, a non-negative real-value $\epsilon$, an initial value function $V_0$, and an integer $\ell$, {\bf return} $\mc{O}$ that minimizes $|\mc{O}|$ such that $E_{M \sim D}[L_M(\mc{O})] \leq \ell$ and $\mc{O} \subseteq \mc{O'}$.
}

\begin{theorem}
    MOMI$_{multi}$ and MIMO$_{multi}$ are NP-hard.
\end{theorem}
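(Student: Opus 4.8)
The plan is to reduce the single-task problems MOMI and MIMO---already shown to be $\NP$-hard in Theorem~\ref{thm:main_result}---to their multi-task counterparts by collapsing the MDP distribution to a point mass. The key observation is that the expectation $E_{M \sim D}[L_M(\mc{O})]$ degenerates to the single term $L_M(\mc{O})$ whenever $D$ places all of its probability on one MDP $M$, so the multi-task feasibility constraint becomes literally the single-task constraint.

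First I would take an arbitrary instance of MOMI (equivalently MOMI$_{gen}$ with $\mc{O}' = \mc{O}_p$), consisting of an MDP $M$, a tolerance $\epsilon$, an initial value function $V_0$, and an integer $\ell$. I would construct a MOMI$_{multi}$ instance whose distribution $D$ is concentrated on $M$, i.e. $\Pr_{M' \sim D}[M' = M] = 1$, while keeping $\epsilon$, $V_0$, $\ell$, and $\mc{O}' = \mc{O}_p$ unchanged. This construction is plainly polynomial-time: specifying a point-mass distribution requires only the description of $M$ itself.

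Under this construction the feasibility constraint of MOMI$_{multi}$, namely $E_{M' \sim D}[L_{M'}(\mc{O})] \leq \ell$, is identical to the MOMI constraint $L_M(\mc{O}) \leq \ell$, and the objective $|\mc{O}|$ is the same in both problems. Hence an option set $\mc{O}$ is feasible-and-minimal for the constructed MOMI$_{multi}$ instance if and only if it is feasible-and-minimal for the original MOMI instance, so the reduction is correct and NP-hardness transfers. The identical argument applied to MIMO---where $D$ is again a point mass, the cardinality constraint $|\mc{O}| \leq k$ is preserved, and the objective $E_{M' \sim D}[L_{M'}(\mc{O})]$ collapses to $L_M(\mc{O})$---establishes $\NP$-hardness of MIMO$_{multi}$.

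I do not expect a genuine obstacle here; the only point requiring care is whether the formalization of MOMI$_{multi}$ admits a degenerate, single-support distribution as a valid input. Since the definition imposes no restriction on $D$ beyond being a distribution over MDPs, a point mass is admissible and the reduction goes through directly. Were one instead to insist on distributions with support larger than one, the same argument survives by letting $D$ be uniform over several identical copies of $M$, for which $E_{M' \sim D}[L_{M'}(\mc{O})]$ still equals $L_M(\mc{O})$.
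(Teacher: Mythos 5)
Your proof is correct and follows essentially the same route as the paper, which simply observes that MOMI$_{multi}$ (resp.\ MIMO$_{multi}$) contains MOMI$_{gen}$ (resp.\ MIMO$_{gen}$) as the special case of a point-mass distribution, so hardness transfers. You merely spell out the point-mass reduction more explicitly than the paper does; there is no substantive difference.
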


The proof follows from the fact that MOMI$_{multi}$ is a superset of MOMI$_{gen}$ and MIMO$_{multi}$ is a superset of MIMO$_{gen}$.

In light of the computational difficulty of both problems, 
the appropriate approach is to find tractable approximation algorithms.
However, even approximately solving MOMI is hard. More precisely:
\begin{theorem} {\ }
    \begin{enumerate}
    \itemsep0em 
        \item MOMI is $\Omega(\log n)$ hard to approximate even for deterministic MDPs unless $P = \NP$.
        \item MOMI$_{gen}$ is $2^{\log^{1-\epsilon}n}$-hard to approximate for any $\epsilon>0$ even for deterministic MDPs unless $\NP \subseteq DTIME(n^{poly \log n})$.
        \item MOMI is $2^{\log^{1-\epsilon}n}$-hard to approximate for any $\epsilon>0$ unless $\NP \subseteq DTIME(n^{poly \log n})$.
    \end{enumerate}
\label{th:MOMIHardness}
\end{theorem}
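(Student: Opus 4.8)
The plan is to prove all three parts by approximation-preserving reductions, reusing and then strengthening the construction from Theorem~\ref{thm:main_result}. For part 1, I would first observe that the reduction from SetCover-DEC is already approximation preserving: in the MDP built from a set-cover instance, every state except those in $\mc{U}$ converges within two iterations for free, and after replacing every type-(b)/(c) option by a type-(a) option, an optimal MOMI solution with $\ell = 2$ consists of exactly one point option $X \to g$ per chosen subset $X$. Hence the size of $\mc{O}$ equals the size of the corresponding cover, and the map $\mc{O} \mapsto \{\mc{I}_o \mid o \in \mc{O}\}$ turns any MOMI solution into a cover of the same cardinality. Since the MDP has size $n = \poly(|\mc{U}| + |\mc{X}|)$ and is deterministic (a shortest-path instance), any $\alpha$-approximation for MOMI yields an $\alpha$-approximation for set cover; invoking the known $(1-o(1))\ln|\mc{U}|$ inapproximability of set cover unless $\PTIME = \NP$ gives the $\Omega(\log n)$ bound.

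For the much stronger $2^{\log^{1-\epsilon} n}$ factor in parts 2 and 3, set cover is too weak a source, since covering in it requires no consistency between the chosen sets. Instead I would reduce from Min-Rep (equivalently, the minimization version of Label Cover), the canonical problem exhibiting $2^{\log^{1-\epsilon} n}$ hardness under $\NP \not\subseteq \DTIME(n^{\poly\log n})$ and whose hardness is exactly the one reached through \citet{dinitz2012label}. Recall that a Min-Rep instance is a bipartite graph whose two sides are partitioned into groups, with a superedge between two groups whenever some pair of their vertices is adjacent; the goal is to pick the fewest representatives so that every superedge has a consistent pair of chosen endpoints. The target is a layered construction in which the number of options mirrors the number of representatives and in which VI converges within $\ell$ iterations if and only if every superedge is covered by a consistent pair.

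The key design choice is how to force the two-sided consistency constraint, and this is where parts 2 and 3 diverge. For part 3 (point options, stochastic MDP) I would route each ``superedge check'' through a stochastic gadget: a point option representing the choice of a vertex $v$ lands, via the transition kernel, on a distribution over the partner gadgets of all superedges incident to $v$, so that the relevant states converge in time only when a consistent representative has also been chosen on the opposite side. Stochasticity lets a single point option simultaneously witness all of its incident superedges, which is precisely what encodes the Label-Cover projection. For part 2 (general options, deterministic MDP) this simultaneity is instead supplied by the generality of $\mc{O}_{all}$: an option may have a multi-state initiation/termination profile, so one option can bundle the consistency test across all incident superedges without needing randomness. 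In both cases I would set $\ell$ to the depth of the layered gadget and check that a representative set of size $r$ yields an option set of size $O(r)$ achieving convergence, and conversely that any option set achieving convergence within $\ell$ induces a consistent representative set of the same order, so the objective is preserved up to constants and the $2^{\log^{1-\epsilon} n}$ factor transfers.

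The main obstacle I anticipate is exactly the faithful encoding of Label-Cover consistency into the rigid ``converge within $\ell$ passes of VI'' condition: I must ensure that an option (or stochastic transition) cannot cheaply short-circuit the convergence of a state without corresponding to a genuinely consistent pair of representatives, and that no unintended combination of options accelerates convergence in a way that would make a ``No'' instance look like a ``Yes'' one. Controlling these spurious shortcuts---especially bounding what stochastic point options can achieve in part 3, and what the unrestricted initiation and termination sets of general options can achieve in part 2---while keeping the blow-up in $n$ polynomial so that the hardness factor remains $2^{\log^{1-\epsilon} n}$, is the crux of the argument.
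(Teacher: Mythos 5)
Your plan matches the paper's proof essentially step for step: part 1 is obtained by observing that the SetCover reduction of Theorem~\ref{thm:main_result} is approximation-preserving (option set size equals cover size) and invoking the $c\ln n$ hardness of set cover, while parts 2 and 3 are reductions from Min-Rep under $\NP \not\subseteq \DTIME(n^{\poly \log n})$, with general multi-state initiation/termination sets carrying the consistency constraint in the deterministic case and stochasticity carrying it in the point-option case. The one detail you describe differently is where the randomness lives in part 3: in the paper the point options remain deterministic and each edge-state $S_e$ instead gets one primitive action $\alpha_{(a,b)}$ per pair $(a,b)\in\pi_e$, branching with probability $1/2$ to each endpoint's label gadget, so that the max over actions encodes the disjunction over satisfying pairs and the expectation encodes the conjunction of the two label choices --- whereas placing the stochastic fan-out inside the option over all superedges incident to $v$, as you sketch, would instead demand that every incident superedge be simultaneously ready, which is not the Min-Rep semantics.
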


\begin{proof}
    See appendix.
\end{proof}

Note that an $O(n)$-approximation is achievable by the trivial algorithm that returns a set of all candidate options. Thus, Theorem~\ref{th:MOMIHardness} roughly states that there is no polynomial time approximation algorithms other than the trivial algorithm for MOMI. 

In the next section we show that an $O(\log n)$-approximation is achievable if the MDP is deterministic and the agent is given a set of all point options. Thus, together, these two results give a formal separation between the hardness of abstraction in MDPs with and without stochasticity.

In summary, {\it the problem of computing optimal behavioral abstractions for planning is computationally intractable}.

\section{Approximation Algorithms}
\label{sec:algorithms}

We now provide polynomial-time approximation algorithms, \mimoalg{} and \momialg{}, to solve MOMI and MIMO, respectively. Both algorithms have bounded suboptimality slightly worse than a constant factor for deterministic MDPs.

We assume that (1) there is exactly one absorbing state $g \in \mc{S}$ with $T(g, a, g) = 1$ and $R(g, a) = 0$, and every optimal policy eventually reaches $g$ with probability 1, (2) there is no cycle with a positive reward involved in the optimal policy's trajectory. That is, $V^\pi_{+}(s) := {\bE}[ \sum_{t=0}^{\infty} \max\{0, R(s_t, a_t)\}] < \infty$ for all policies $\pi$. Note that we can convert a problem with multiple goals to a problem with a single goal by adding a new absorbing state $g$ to the MDP and adding a transition from each of the original goals to $g$.


Unfortunately, these algorithms are computationally harder than solving the MDP itself, and are thus not practical for planning. Instead, they are useful for analyzing and evaluating heuristically generated options. If the option set generated by the heuristic methods outperforms the option set found by the following algorithms, then one can claim that the option set found by the heuristic is close to the optimal option set (for that MDP). Our algorithms have a formal guarantee on bounded suboptimality if the MDP is deterministic, so any heuristic method that provably exceeds our algorithm's performance will also guarantee bounded suboptimality.
We also believe these algorithms may be a useful foundation
for future option discovery methods.

\subsection{\momialg{}}

We now describe a polynomial-time approximation algorithm, \momialg{}, based on using set cover to solve MOMI.
The overview of the procedure is as follows.
\begin{enumerate}
    \item Compute an asymmetric distance function $d_\epsilon(s, s'): S \times S \rightarrow \mathbb{N}$ representing the number of iterations for a state $s$ to reach its $\epsilon$-optimal value if we add a point option from a state $s'$ to a goal state $g$.
    \item For every state $s_i$, compute a set of states $X_{s_i}$ within $\ell - 1$ distance of reaching $s_i$. The set $X_{s_i}$ represents the states that converge within $\ell$ steps if we add a point option from $s_i$ to $g$.
    \item Let $\mc{X}$ be a set of $X_{s_i}$ for every $s_i \in \mc{S} \setminus X^+_{g}$, where $X^+_{g}$ is a set of states that converges within $\ell$ without any options (thus can be ignored).  
    \item Solve the set-cover optimization problem to find a set of subsets that covers the entire state set using the approximation algorithm by \namecite{chvatal1979greedy}. This process corresponds to finding a minimum set of subsets $\{X_{s_i}\}$ that makes every state in $\mc{S}$ converge within $\ell$ steps.
    \item Generate a set of point options with initiation states set to one of the center states in the solution of the set-cover, and termination states set to the goal. 
\end{enumerate}


We compute a distance function $d_{\epsilon}: \mc{S} \times \mc{S} \rightarrow \mathbb{N}$\footnote{Formally, $d$ satisfies triangle inequality, but does not satisfy symmetry and indiscernibles.}, defined as follows:

\ddef{Distance $d_\epsilon(s_i, s_j)$}{
    $d_\epsilon(s_i, s_j)$ is the smallest number 
    $b$ such that for all $b' \geq b$, $V_b'(s_i)$ is $\epsilon$-optimal if we add a point option from $s_j$ to $g$, minus one.
}

More formally, let $d'_\epsilon(s_i)$ denote the number of iterations needed for the value of state $s_i$ to satisfy $|V(s_i) - V^*(s_i)| < \epsilon$, and let $d'_\epsilon(s_i, s_j)$ be an upper bound of the number of iterations needed for the value of $s_i$ to satisfy $|V(s_i) - V^*(s_i)| < \epsilon$, if the value of $s_j$ is initialized such that $|V(s_j) - V^*(s_j)| < \epsilon$. We define $d_\epsilon(s_i, s_j) := \min(d'_\epsilon(s_i) - 1, d'_\epsilon(s_i, s_j))$.
For simplicity, we use $d$ to denote the function $d_\epsilon$. 
Consider the following example. 


\begin{figure}[htb]
    \centering
    \begin{tikzpicture}
        \node [draw, circle] (x1) at (0, 4) {$s_1$};
        \node [draw, circle] (x2) at (0, 3) {$s_2$};
        \node [draw, circle] (x3) at (2, 4) {$s_3$};
        \node [draw, circle] (x4) at (2, 3) {$s_4$};
        \node [draw, circle] (x5) at (1, 2) {$s_5$};
        \node [draw, circle] (x6) at (1, 1) {$s_6$};
        
        \node [draw, circle, minimum size=0.8cm] (g) at (3, 1) {$g$};
        \node [draw, circle, minimum size=0.5cm] at (3, 1) {};

        \draw[->] (x1) -- (x2);
        \draw[->] (x3) -- (x4);
        \draw[->] (x2) -- (x5);
        \draw[->] (x4) -- (x5);
        \draw[->] (x5) -- (x6);
        \draw[->] (x6) -- (g);
        
        \draw[->, dashed] (x2.east) to [out=0,in=120] (g);
        \draw[->, dashed] (x4.east) to [out=0,in=90] (g.north);
    \end{tikzpicture}
    \caption{Example. Options discovered by \mimoalg{} with $k=2$ are denoted by the dashed lines.}
    \label{fig:mimo-example}
\end{figure}

\begin{table}[htb]
    \centering
    \begin{tabular}{c|cccccc}
        $s \setminus s'$   & $s_1$ & $s_2$ & $s_3$ & $s_4$ & $s_5$ & $s_6$ \\ \hline
        $s_1$ & 0 & 1 & 3 & 3 & 2 & 3 \\
        $s_2$ & 2 & 0 & 2 & 2 & 1 & 2 \\
        $s_3$ & 3 & 3 & 0 & 1 & 2 & 3 \\
        $s_4$ & 2 & 2 & 2 & 0 & 1 & 2 \\
        $s_5$ & 1 & 1 & 1 & 1 & 0 & 1 \\
        $s_6$ & 0 & 0 & 0 & 0 & 0 & 0 \\
    \end{tabular}
    \caption{$d_0(s, s')$ for Figure \ref{fig:mimo-example}.}
    \label{tab:d}
\end{table}

\begin{example}
Table \ref{tab:d} is a distance function for the MDP shown in Figure \ref{fig:mimo-example}. For a deterministic MDP, $d_0(s)$ corresponds to the number of edge traversals from state $s$ to $g$, where we have edges only for those that corresponds to the state transition by the optimal actions. The quantity $d_0(s, s') - 1$ is the minimum of $d_0(s)$ and one plus the number of edge traversals from $s$ to $s'$.
\end{example}

Note that we only need to solve the MDP once to compute $d$. $d(s, s')$ can be computed once you solved the MDP without any options and store all value functions $V_i$ ($i=1,...b$) until convergence as a function of $V_1$: $V_i(s) = f(V_1(s_0), V_1(s_1),...)$. If we add a point option from $s'$ to $g$, then $V_1(s') = V^*(s')$.
Thus, $d(s, s')$ is the smallest $i$ where $V_{i}(s)$ reaches $\epsilon$-optimal if we replace $V_1(s')$  with $V^*(s')$ when computing $V_i(s)$ as a function of $V_1$. 

\begin{example}
We use the MDP shown in Figure \ref{fig:mimo-example} as an example. Consider the problem of finding a set of options so that the MDP can be solved within $2$ iterations. We generate an instance of a set-cover optimization problem.
The set of elements for the set cover is the set of states of the MDP that do not reach their optimal value within $\ell$ steps without any options $\mc{S} \setminus X^{+}_{g}$. Here, we denote a set of nodes that can be solved within $\ell$ steps by $X^{+}_{g}$. In the example, $\mc{U} = S \setminus X^{+}_{g} = \{s_1, s_2, s_3, s_4\}$.
A state $s$ is included in a subset $X_{s'}$ iff $d(s, s') \leq \ell - 1$. For example, $X_{s_1} = \{s_1\}, X_{s_2} = \{s_1, s_2\}$.
Thus, the set of subsets are given as:
    $X_{s_1} = \{s_1\},
    X_{s_2} = \{s_1, s_2\},
    X_{s_3} = \{s_3\},
    X_{s_4} = \{s_3, s_4\}$.
In this case, the approximation algorithm finds the optimal solution $\mc{C} = \{X_{s_2}, X_{s_4}\}$ for the set-cover optimization problem ($\mc{U}, \mc{X}$).
We generate a point option for each state in $\mc{C}$. Thus, the output of the algorithm is a set of two point options from $s_2$ and $s_4$ to $g$.
\end{example}

\begin{theorem}
    \momialg{} has the following properties:
    \begin{enumerate}
        \itemsep0em 
        \item \momialg{} runs in polynomial time.
        \item It guarantees that the MDP is solved within $\ell$ iterations using the option set acquired by \momialg{} $\mc{O}$. 
        \item If the MDP is deterministic, the option set is at most $O(\log n)$ times larger than the smallest option set possible to solve the MDP within $\ell$ iterations.
    \end{enumerate}
\end{theorem}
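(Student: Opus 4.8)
The plan is to prove the three claims in turn, leaning on the fact that \momialg{} reduces \momi{} to an instance of set cover and then applies the greedy heuristic of \namecite{chvatal1979greedy}. Two facts will do most of the work: (i) a \emph{monotonicity} property of value iteration in the option set, and (ii) for deterministic MDPs, a \emph{reduction lemma} saying that restricting to options that terminate at $g$ loses nothing. I would state and prove these first, then instantiate them for each claim.

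For Claims 1 and 2, I would account for each stage of the algorithm and then invoke monotonicity. Polynomial time: solving the MDP once to obtain $V_1,\dots,V_b$ is polynomial because the value-planning problem is polynomial in $|\mc{S}|$; storing each $V_i$ as a function of $V_1$ and reading off $d(s,s')$ by substituting $V^*(s')$ for $V_1(s')$ costs $O(n^2)$ evaluations; building the $n$ subsets $X_{s_i}$, running greedy set cover, and emitting one point option per chosen center are all polynomial. Feasibility: with $V_0 \le V^*$ (e.g.\ $V_0=0$, valid since rewards are nonnegative) every iterate satisfies $V_i^{\mc{O}}(s) \le V^*(s)$, because each option is a valid subpolicy and its backup cannot exceed $V^*$; and if $\mc{O}_1 \subseteq \mc{O}_2$ then $V_i^{\mc{O}_1}(s) \le V_i^{\mc{O}_2}(s)$ by induction on the larger max. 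Given these, take any state $s$: since the greedy cover covers $\mc{U}=\mc{S}\setminus X^+_g$, either $s \in X^+_g$ (already converges within $\ell$ with no options) or $s \in X_{s_i}$ for some chosen center, so $d(s,s_i)\le \ell-1$ and the single option $s_i \to g$ drives $s$ to $\epsilon$-optimality within $\ell$ iterations. That option lies in the emitted set $\mc{O}$, so monotonicity sandwiches $V_b^{\{s_i\to g\}}(s) \le V_b^{\mc{O}}(s) \le V^*(s)$ for every $b$, and $s$ converges within $\ell$ under $\mc{O}$ as well; as this holds for all $s$, $L_{\epsilon,V_0}(\mc{O})\le \ell$.

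For Claim 3, the greedy heuristic returns a cover of size at most $(1+\ln n)$ times the optimal cover, so it suffices to show the optimal set cover is no larger than the optimal \momi{} solution $\mc{O}^*$. Here I would prove the reduction lemma: in a deterministic MDP any point option from $a$ to $b$ can be replaced by the option from $a$ to $g$ without increasing any state's convergence time, since the option to $g$ immediately backs up the fixed value $V^*(g)$ to yield $V^*(a)$ after one backup, whereas the option to $b$ can only propagate the not-yet-converged $V(b)$. Hence some optimal solution uses only goal-terminating options, and such solutions correspond exactly to the covers $\{X_{s_i}\}$: a converging state's optimal trajectory to $g$ of length $\le \ell$ uses at most one option (the last move reaching $g$), so it is served by a single $X_{s_i}$ (the backward correspondence), while the forward correspondence is Claim 2. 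Thus the optimal cover has size $|\mc{O}^*|$ and the emitted set has size $O(\log n)\,|\mc{O}^*|$.

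The main obstacle I anticipate is Claim 3's backward correspondence: arguing that an optimal option set induces a genuine set cover of the same size. This needs the reduction lemma together with the decomposition that, in a deterministic MDP, each state's convergence is charged to exactly one goal-terminating option rather than to an entangled combination of options; this decomposition is precisely what fails under stochastic transitions, consistent with the inapproximability of Theorem~\ref{th:MOMIHardness} in the general case. Nailing the off-by-one bookkeeping in the definition of $d$ (so that $d(s,s_i)\le \ell-1$ matches "converges within $\ell$'' exactly) is the routine-but-delicate piece to get right.
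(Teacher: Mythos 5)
Your proposal is correct and follows essentially the same route as the paper: reduce to set cover over the subsets $X_{s_i}$, feed the instance to the greedy algorithm of \namecite{chvatal1979greedy}, establish feasibility by noting that each covered state $s$ satisfies $d(s,s_i)\le \ell-1$ for some chosen center so the goal-terminating option from $s_i$ finishes the job in one more iteration, and obtain the $O(\log n)$ ratio by matching optimal covers to optimal option sets via the lemma that some optimal solution uses only goal-terminating options. The one place you go beyond the paper's own (quite terse) write-up is in spelling out the backward correspondence $|\mc{C}^*|\le|\mc{O}^*|$ --- that in a deterministic MDP each state's convergence is charged to a single goal-terminating option --- which the paper leaves implicit in the identity $|\mc{C}^*| = |\mc{O}^*|$ and in the equality $d(s,\mc{C})=\min_{s'\in\mc{C}}d(s,s')$ for deterministic MDPs; this is a welcome clarification rather than a different argument.
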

\begin{proof}
    See the supplementary material.
\end{proof}

Note that the approximation bound for a deterministic MDP will inherent any improvements to the approximation algorithm for set cover. Set cover is known to be $\NP$-hard to approximate up to a factor of $(1 - o(1)) \log n$ \cite{dinur2014analytical}, thus there may be an improvement on the approximation ratio for the set cover problem, which will also improve the approximation ratio of \momialg.


\subsection{\mimoalg{}}
\label{sec:alg-mimo}

The outline of the approximation algorithm for MIMO (\mimoalg{}) is as follows.
\begin{enumerate}
    \item Compute $d_\epsilon(s, s'): S \times S \rightarrow \mathbb{N}$ for each pair of states. 
    
    \item Using this distance function, solve an asymmetric $k$-center problem, which finds a set of center states that minimizes the maximum number of iterations for every state to converge.
    
    \item Generate point options with initiation states set to the center states in the solution of the asymmetric $k$-center, and termination states set to the goal.
\end{enumerate}


As in \momialg{}, we first compute the distance function. Then, we exploit this characteristic of $d$ and solve the asymmetric $k$-center problem~\cite{panigrahy1998ano} on $(\mc{U}, d, k)$ to get a set of centers, which we use as initiation states for point options.
The asymmetric $k$-center problem is a generalization of the metric $k$-center problem where the function $d$ obeys the triangle inequality, but is not necessarily symmetric:

\ddef{AsymKCenter}{ \\
    {\bf Given} a set of elements $\mc{U}$, a function $d: \mc{U} \times \mc{U} \rightarrow \mathbb{N}$, and an integer $k$, {\bf return} $\mc{C}$ that minimizes $P(\mc{C}) = \max_{s \in U} \min_{c \in \mc{C}} d(s, c)$ subject to $|\mc{C}| \leq k$.
}

We solve the problem using a polynomial-time approximation algorithm proposed by \namecite{archer2001two}. The algorithm has a suboptimality bound of $O(\log^*k)$\footnote{$\log^*$ is the number of times the logarithm function must be iteratively applied before the result is less than or equal to 1.} where $k < |\mc{U}|$. It is known that the problem cannot be solved within a factor of $\log^* |\mc{U}| - \theta(1)$ unless $P=\NP$~\cite{chuzhoy2005asymmetric}. 
As the procedure by \namecite{archer2001two} often finds a set of options smaller than $k$, we generate the rest of the options by greedily adding $\log k$ options at once. See the supplementary material for details.
Finally, we generate a set of point options with initiation-states set to one of the centers and the termination state set to the goal state of the MDP.
That is, for every $c$ in $\mc{C}$, we generate a point option starting from $c$ to the goal state $g$.

\begin{example}
Consider an MDP shown in Figure \ref{fig:mimo-example}. The distance $d_0$ for the MDP is shown in Table \ref{tab:d}. Note that $d(s, s') \leq d(s, g)$ holds for every $s, s'$ pair.
Let us first consider finding one option ($k=1$). This process corresponds to finding a column with the smallest maximum value in the Table~\ref{tab:d}.
The optimal point option is from $s_5$ to $g$ as it has the smallest maximum value in the column.
If $k=2$, an optimal set of options is from $s_2$ and $s_4$ to $g$. Note that the optimal option for $k=1$ is not in the optimal option set of size 2. This example shows that the  strategy of greedily adding options does not find the optimal set. In fact, the improvement $L_{\epsilon, V_0}(\emptyset) - L_{\epsilon, V_0}({\mc{O}})$ on by the greedy algorithm can be arbitrary small (i.e. 0) compared to the optimal option (see Proposition 1 in the supplementary material for a proof).
\end{example}

\begin{theorem}
    \mimoalg{} has the following properties:
    \begin{enumerate}
        \itemsep0em 
        \item \mimoalg{} runs in polynomial time.
        \item If the MDP is deterministic, it has a bounded suboptimality of $O(\log^* k)$.
        \item The number of iterations to solve the MDP using the option set acquired is upper bounded by $P(\mc{C})$. 
    \end{enumerate}
\end{theorem}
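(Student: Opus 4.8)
The plan is to establish the three properties in increasing order of difficulty, disposing of the running time and the $P(\mc{C})$ bound quickly and reserving the real work for the $O(\log^* k)$ ratio. For the \textbf{running time}, I would check that every stage of \mimoalg{} is polynomial: the distance table $d_\epsilon(s,s')$ is obtained by solving the MDP once (polynomial by the value-planning guarantee), storing the iterates $V_1,\ldots,V_b$ as functions of $V_1$, and then reading off, for each of the $O(n^2)$ ordered pairs, the first iterate at which $s$ becomes $\epsilon$-optimal after substituting $V^*(s')$ for $V_1(s')$; the asymmetric $k$-center routine of \namecite{archer2001two} is polynomial; and the final option generation, together with the greedy top-up that adds $\log k$ centers at a time until $k$ options are used, is clearly polynomial.

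For property 3, the \textbf{$P(\mc{C})$ upper bound}, I would argue straight from the definition of $d$. Each returned option routes from a center $c \in \mc{C}$ to $g$, so after one backup $V_1(c) = V^*(c)$; by definition $d(s,c)$ upper-bounds the number of iterations for $s$ to become $\epsilon$-optimal once $c$ is optimal, and since value iteration is monotone in its iterates, having \emph{all} centers optimal can only accelerate convergence. Hence every state $s$ converges within $\min_{c \in \mc{C}} d(s,c)$ iterations, giving $L_{\epsilon, V_0}(\mc{O}) \leq \max_{s} \min_{c \in \mc{C}} d(s,c) = P(\mc{C})$.

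The heart of the theorem is property 2, the \textbf{$O(\log^* k)$ suboptimality for deterministic MDPs}, which I would prove by showing that MIMO (restricted to the option class the algorithm actually searches) is \emph{equivalent} to AsymKCenter on $(\mc{U}, d, k)$, and then importing the guarantee of \namecite{archer2001two}. Two reductions are needed. First, a ``redirect-to-goal'' lemma: for a deterministic MDP with a single zero-value absorbing goal, any point option with initiation state $\mc{I}_o$ and termination $\beta_o \neq g$ can be replaced by the point option from $\mc{I}_o$ to $g$ without increasing $L$, since the goal-terminating option makes $\mc{I}_o$ optimal after a single backup (the fastest possible), whereas the original helps $\mc{I}_o$ only after $\beta_o$ converges, and monotonicity of VI then propagates this speedup to every downstream state. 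This shows the optimum $L^*$ is attained by goal-terminating options, whose initiation states form a feasible AsymKCenter solution. Second, for deterministic MDPs $d$ is the exact optimal-path length, so the inequality of property 3 becomes an equality $L_{\epsilon, V_0}(\mc{O}_{\mc{C}}) = P(\mc{C})$; hence $L^* = \min_{|\mc{C}| \le k} P(\mc{C}) = P(\mc{C}^*)$, the optimum of AsymKCenter. Chaining property 3, Archer's factor, and this identity yields $L_{\epsilon, V_0}(\mc{O}_{\text{approx}}) \le P(\mc{C}_{\text{approx}}) \le O(\log^* k)\, P(\mc{C}^*) = O(\log^* k)\, L^*$, using that $d$ obeys the triangle inequality (required by the $k$-center approximation) and that the greedy top-up only adds options, which by monotonicity cannot raise $L$.

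I expect the \textbf{main obstacle} to be the redirect-to-goal lemma together with the exactness claim $L_{\epsilon, V_0}(\mc{O}_{\mc{C}}) = P(\mc{C})$: both rest on a careful monotonicity argument for value iteration under the multi-time model and on the precise bookkeeping of the off-by-one in $d_\epsilon(s_i,s_j) = \min(d'_\epsilon(s_i)-1,\, d'_\epsilon(s_i,s_j))$, so that the convergence time of a state under several simultaneous options is \emph{exactly} the minimum of the per-center distances rather than merely bounded by it.
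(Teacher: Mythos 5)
Your proposal is correct and follows essentially the same route as the paper: the same redirect-to-goal lemma (the paper's lemma that an optimal option set exists with all terminations at $g$), the same inequality $d(s,\mc{C}) \leq \min_{c \in \mc{C}} d(s,c)$ yielding $L_{\epsilon,V_0}(\mc{O}) \leq P(\mc{C})$, the same observation that this becomes an equality for deterministic MDPs so that $L^* = P(\mc{C}^*)$, and the same chaining through the $O(\log^* k)$ guarantee of the asymmetric $k$-center algorithm. If anything, you flag the off-by-one bookkeeping and the monotonicity argument more explicitly than the paper does.
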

\begin{proof}
    See the supplementary material.
\end{proof}

\section{Experiments}
\label{sec:experiments}

\begin{figure*}
    \centering
    \newcommand{\figsize}{0.14}
    \subfloat[optimal $k = 2$]{\includegraphics[width=\figsize\textwidth]{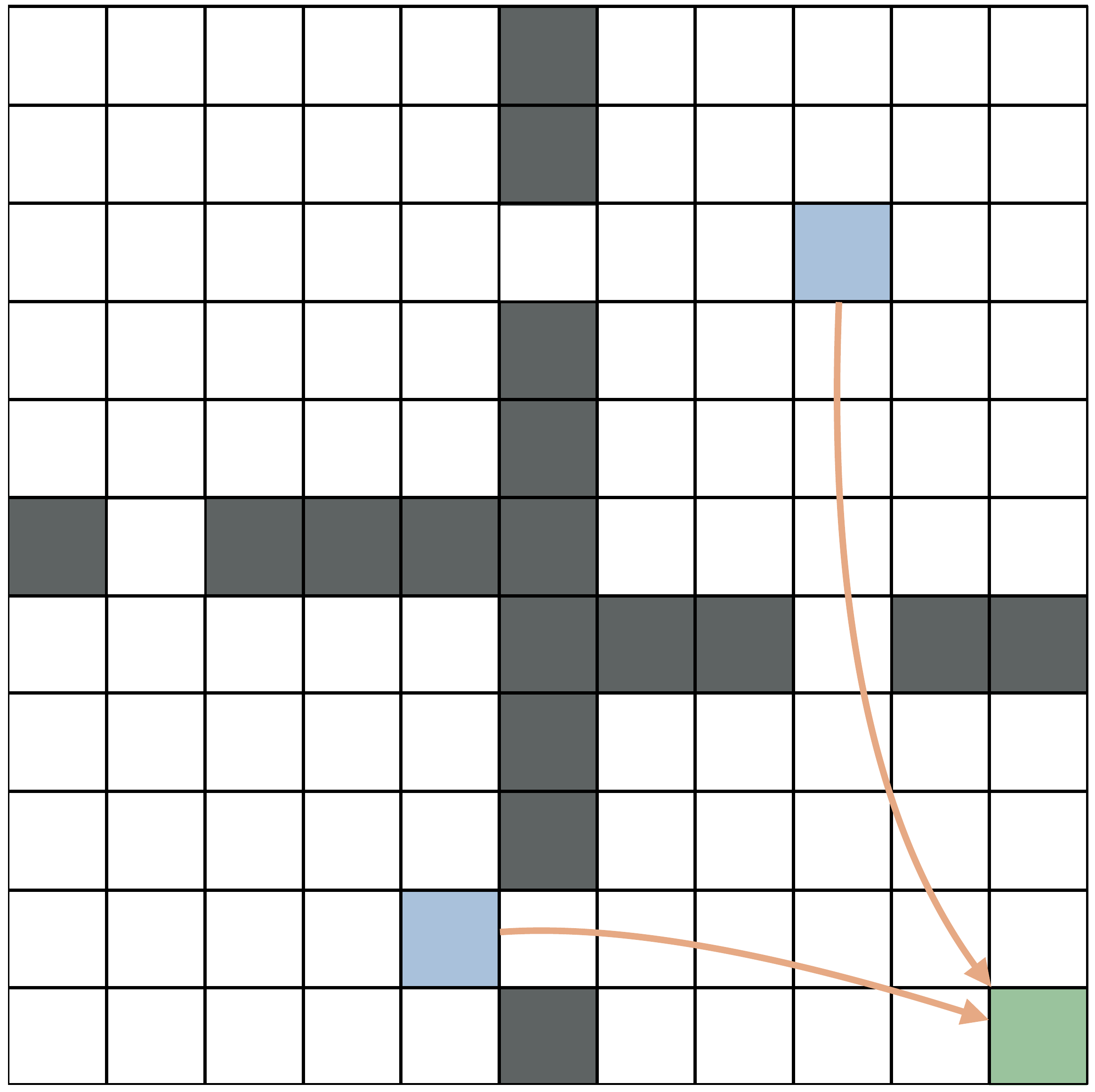} \label{mimo2op}} \hspace{1mm}
    \subfloat[approx. $k = 2$]{\includegraphics[width=\figsize2\textwidth]{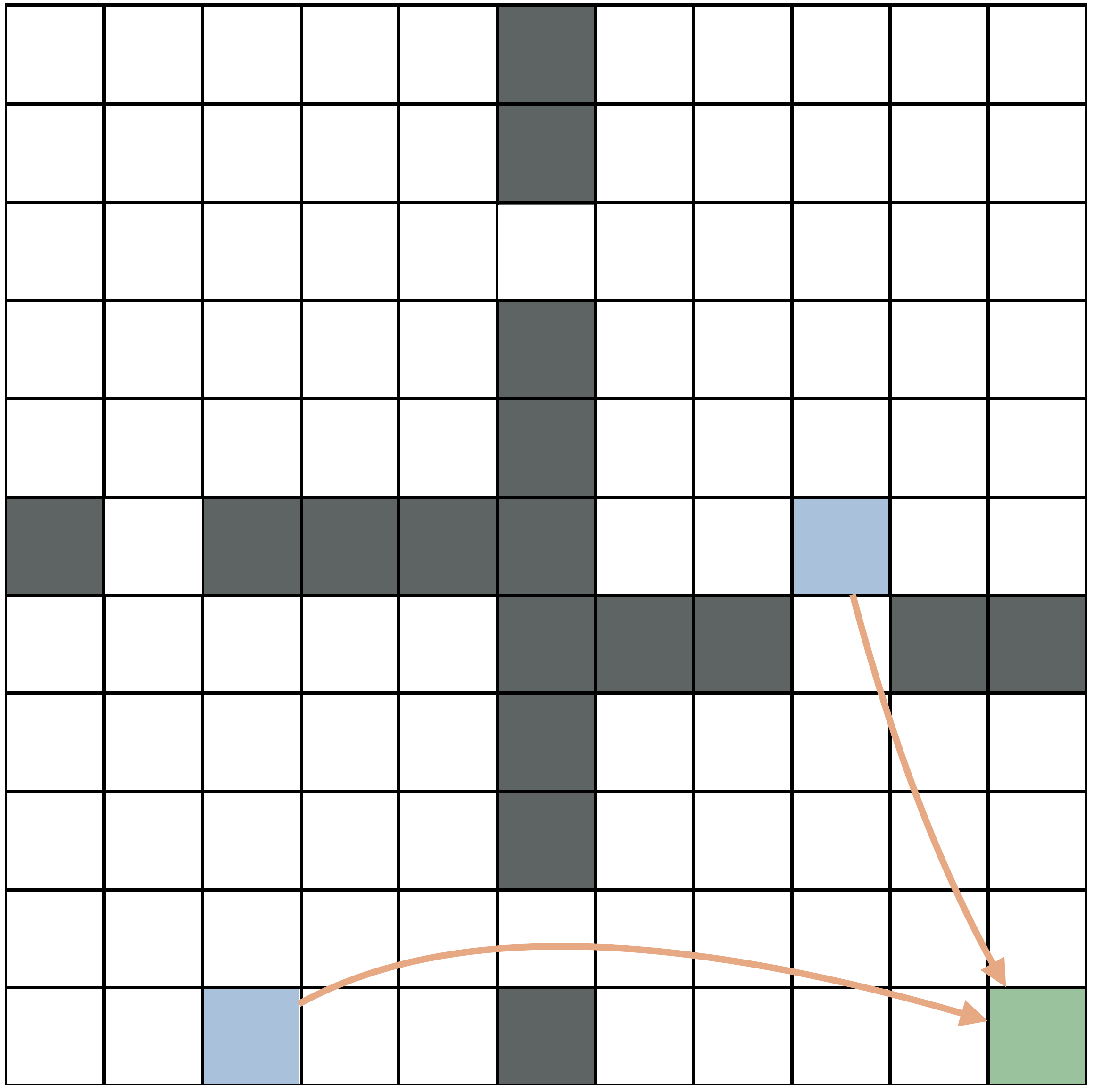} \label{mimo2ap}} \hspace{1mm}
    \subfloat[optimal $k = 4$]{\includegraphics[width=\figsize\textwidth]{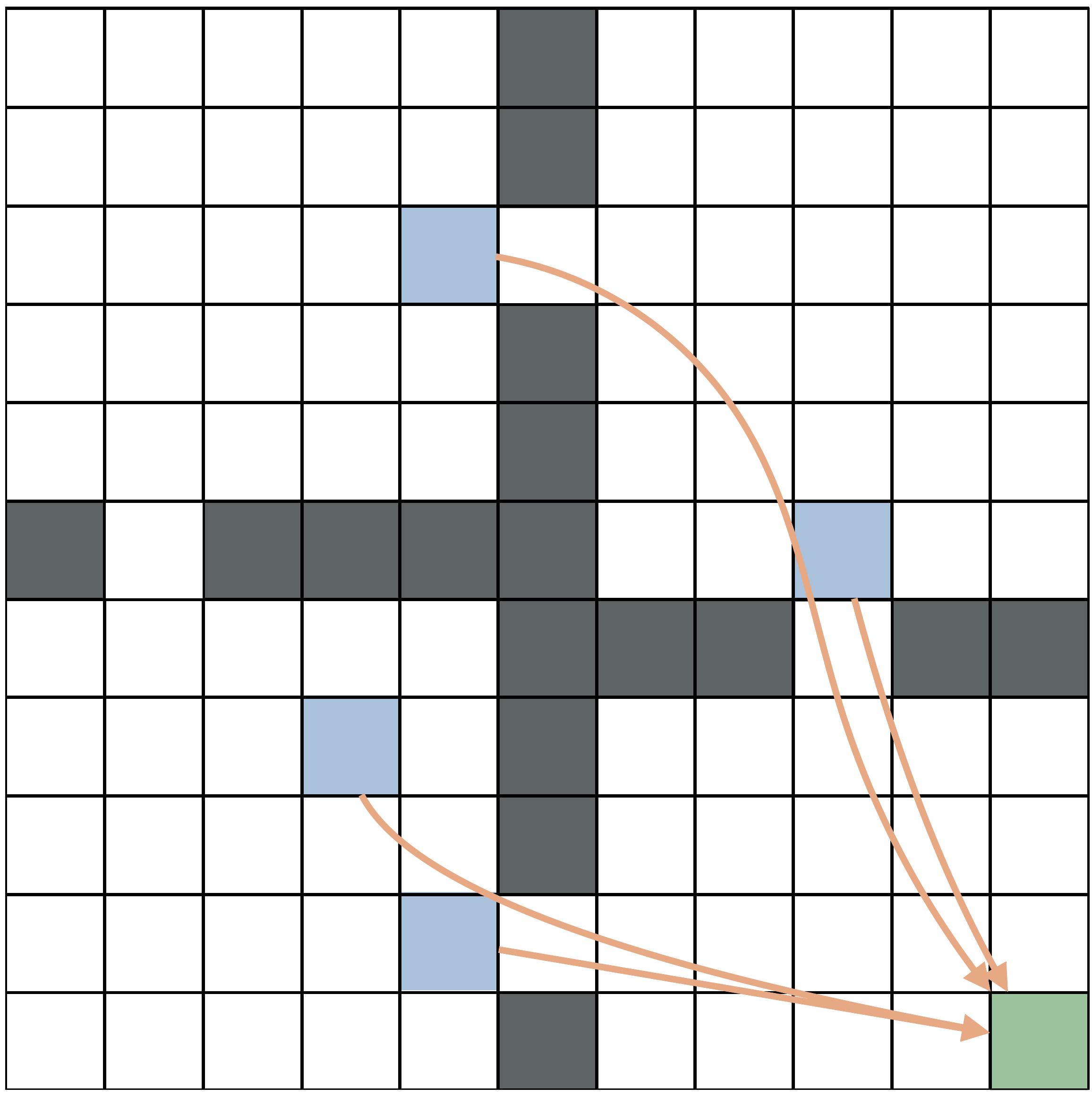} \label{mimo4op}} \hspace{1mm}
    \subfloat[approx. $k = 4$]{\includegraphics[width=\figsize\textwidth]{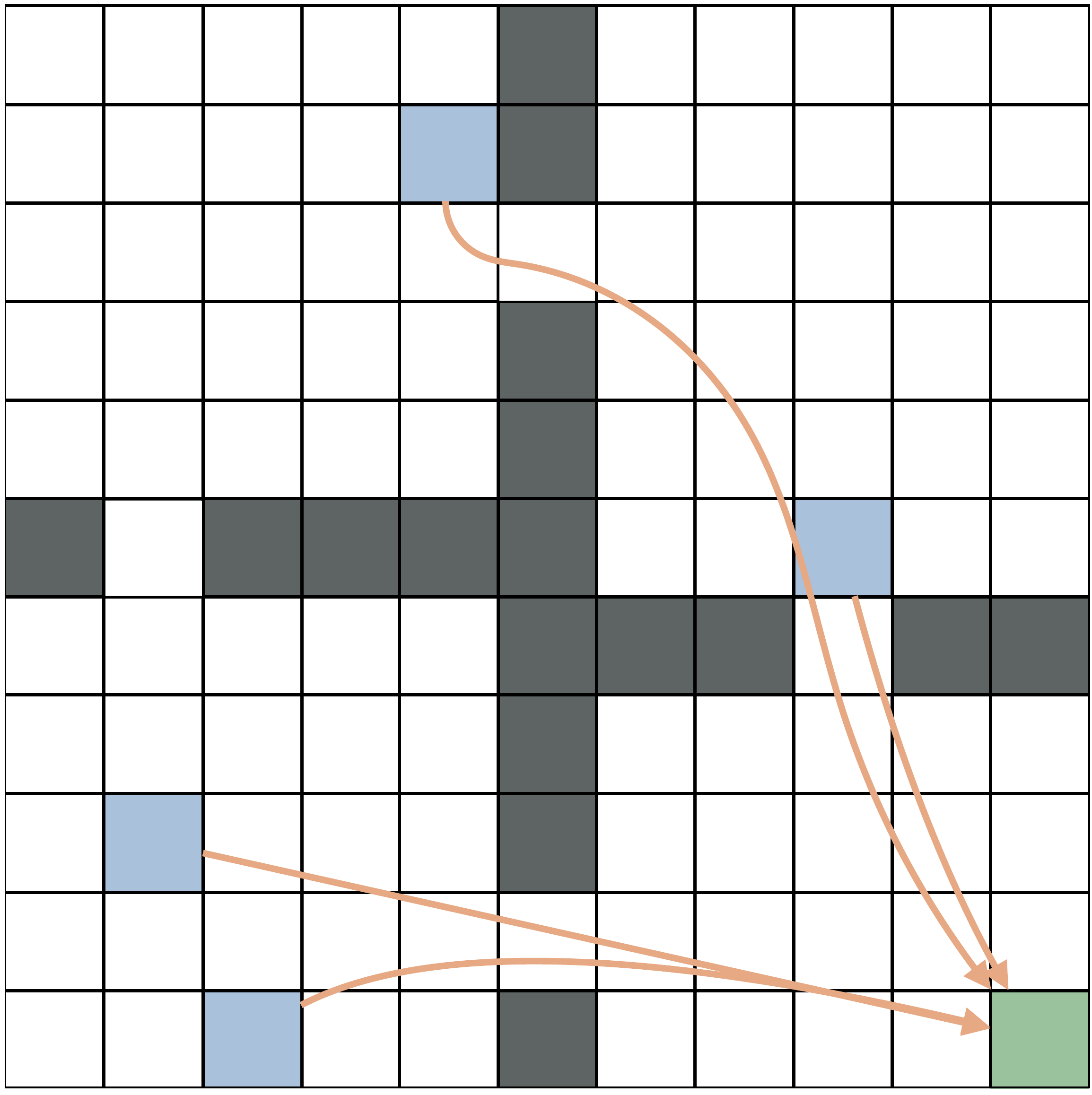} \label{mimo4ap}} \hspace{1mm}
    \subfloat[Betweenness]{\includegraphics[width=\figsize\textwidth]{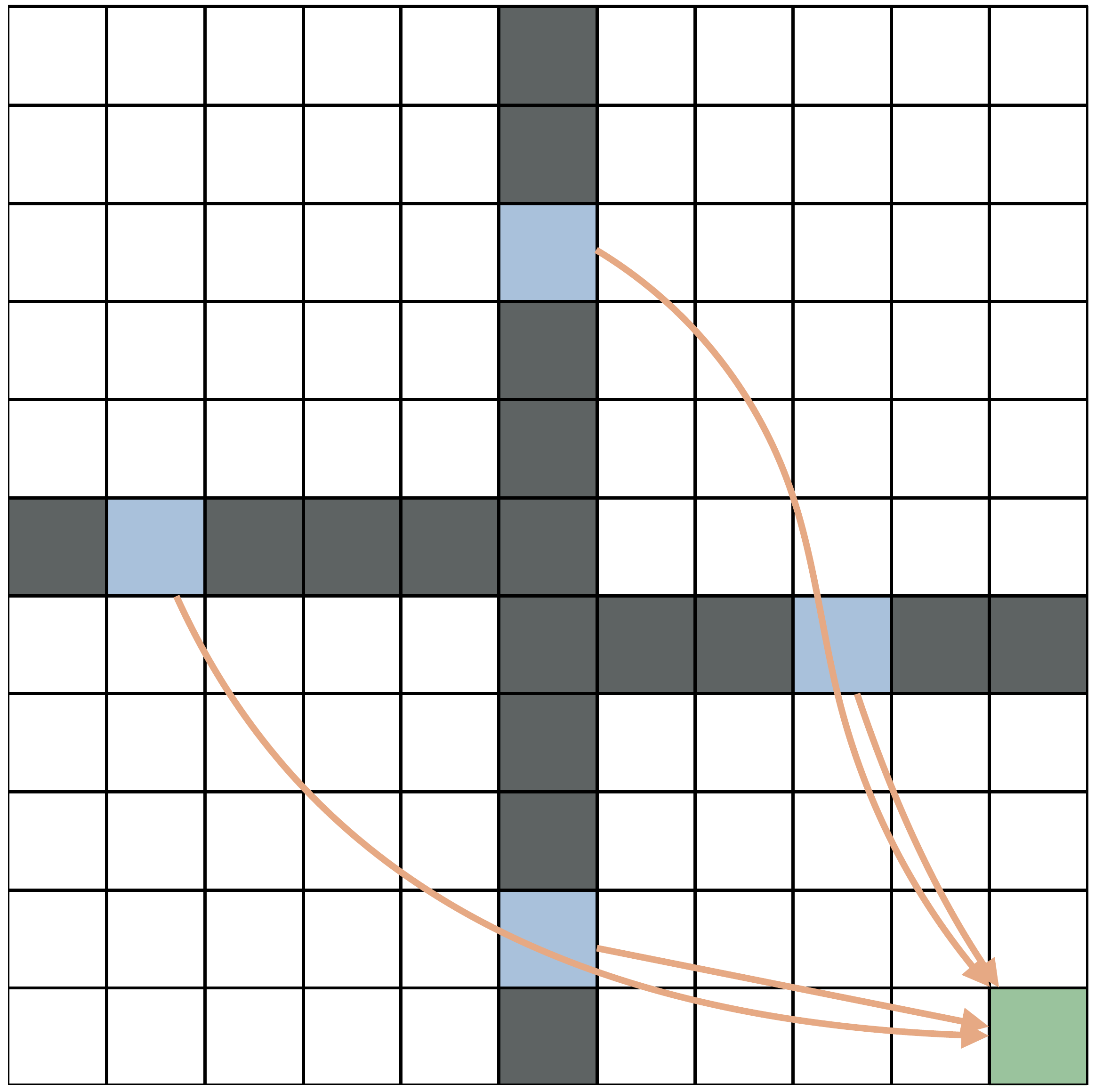} \label{mimo4bet}} \hspace{1mm}
    \subfloat[Eigenoptions]{\includegraphics[width=\figsize\textwidth]{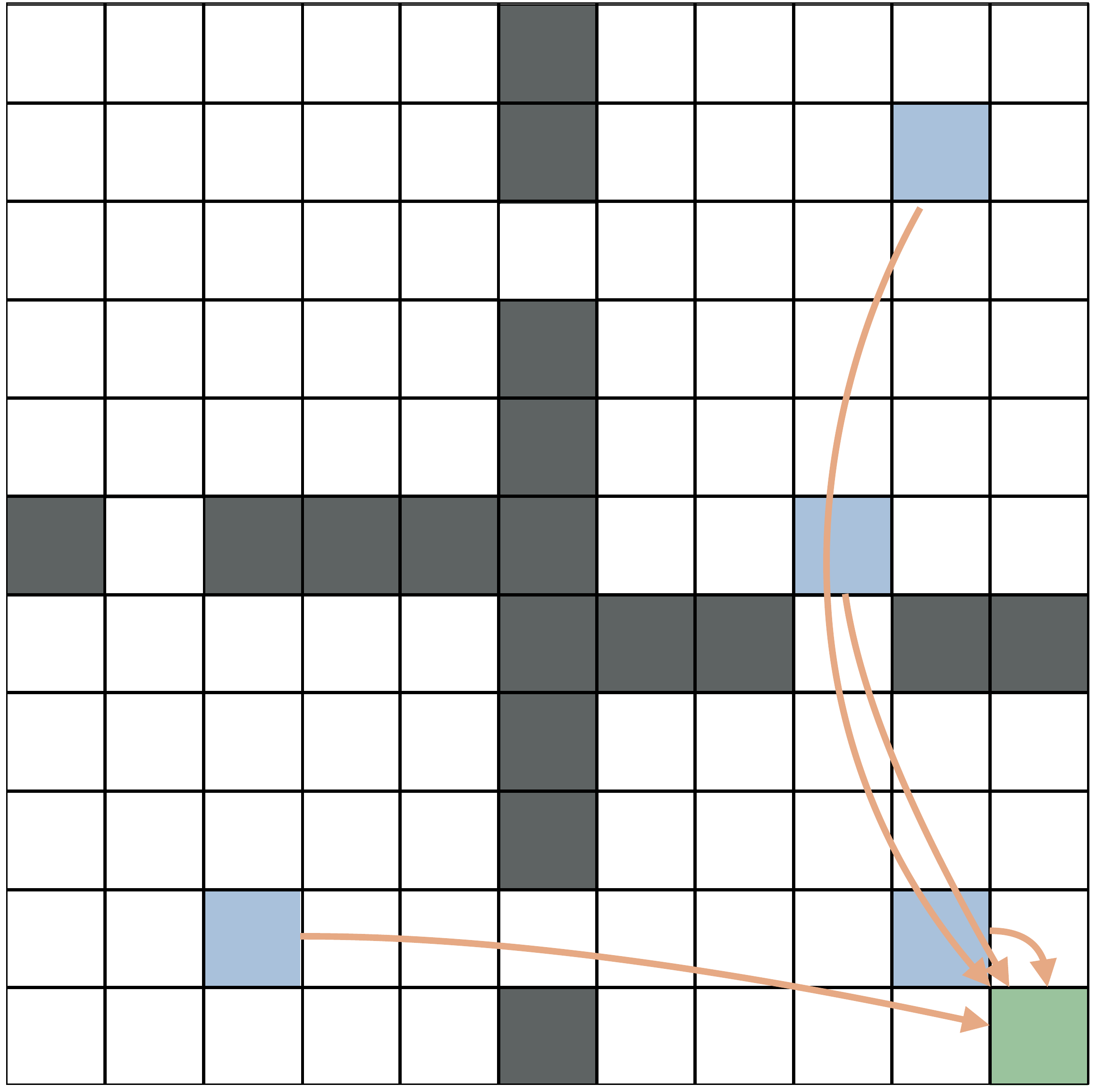} \label{mimo4eg}} 

    \caption{Comparison of the optimal point options with options generated by the approximation algorithm \mimoalg{}. The green square represents the termination state and the blue squares the initiation states.
    Observe that the approximation algorithm is similar to that of optimal options. Note that the optimal option set is not unique: there can be multiple optimal option sets, and we are visualizing just one returned by the solver.}
    \label{fig:fourroom-viz}
\end{figure*}

\begin{figure*}
    \centering
    \newcommand{\figsizeq}{0.24}
    \subfloat[Four Room (MIMO)]{\includegraphics[width=\figsizeq\textwidth]{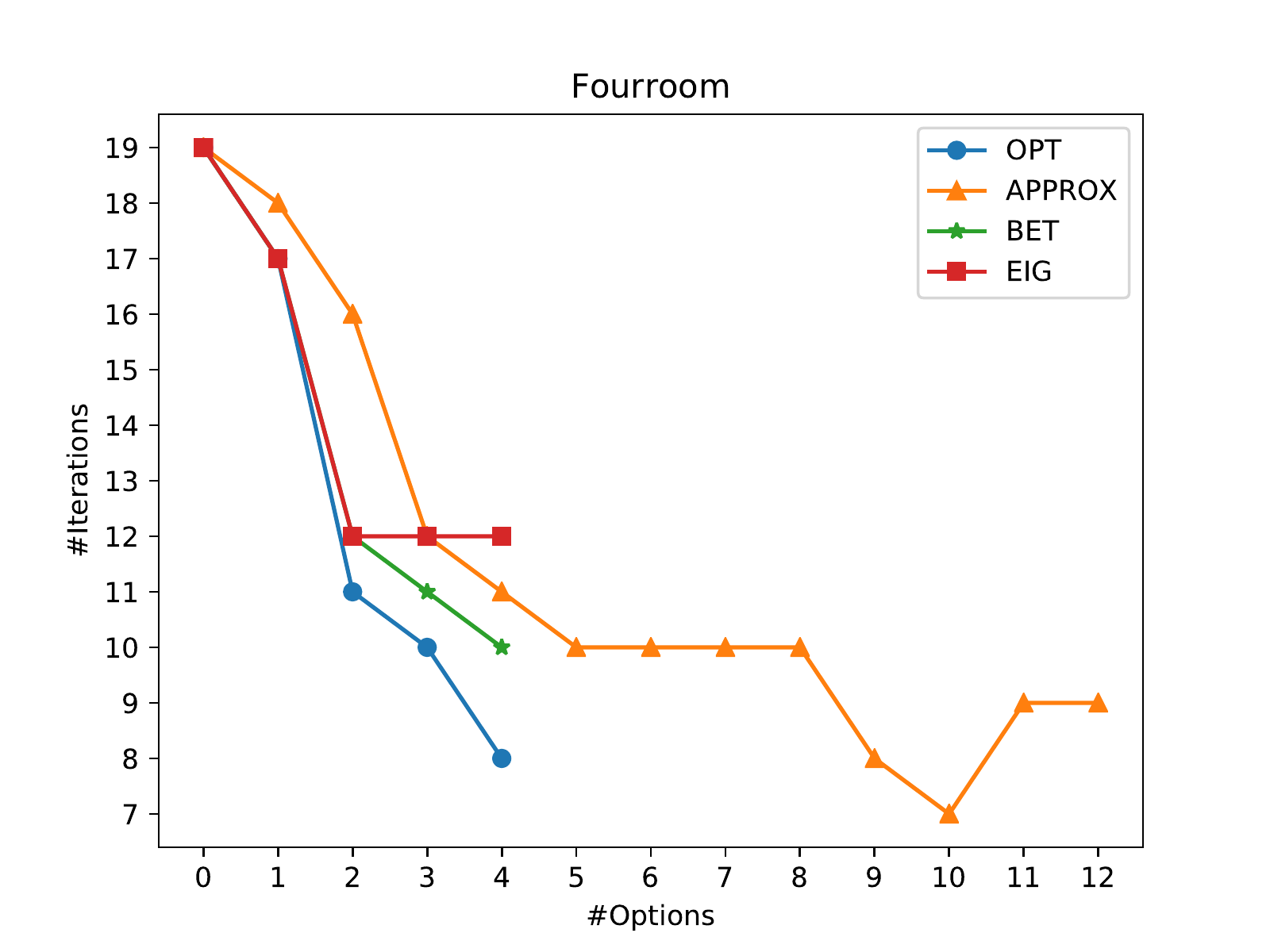} \label{fig:mimofour}}
    \subfloat[$9\times 9$ grid (MIMO)]{\includegraphics[width=\figsizeq\textwidth]{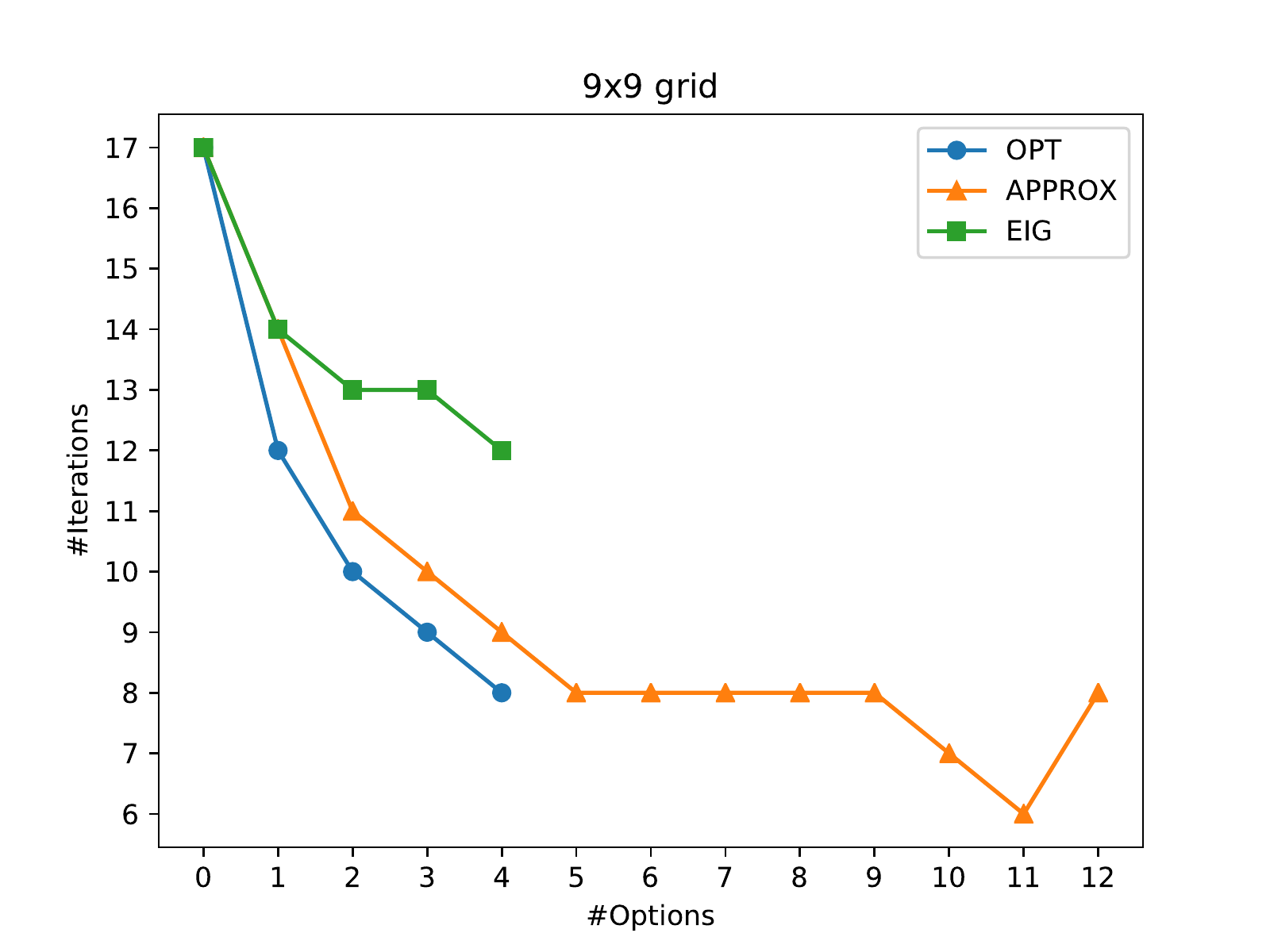} \label{fig:mimogrid}}
    \subfloat[Four Room (MOMI)]{\includegraphics[width=\figsizeq\textwidth]{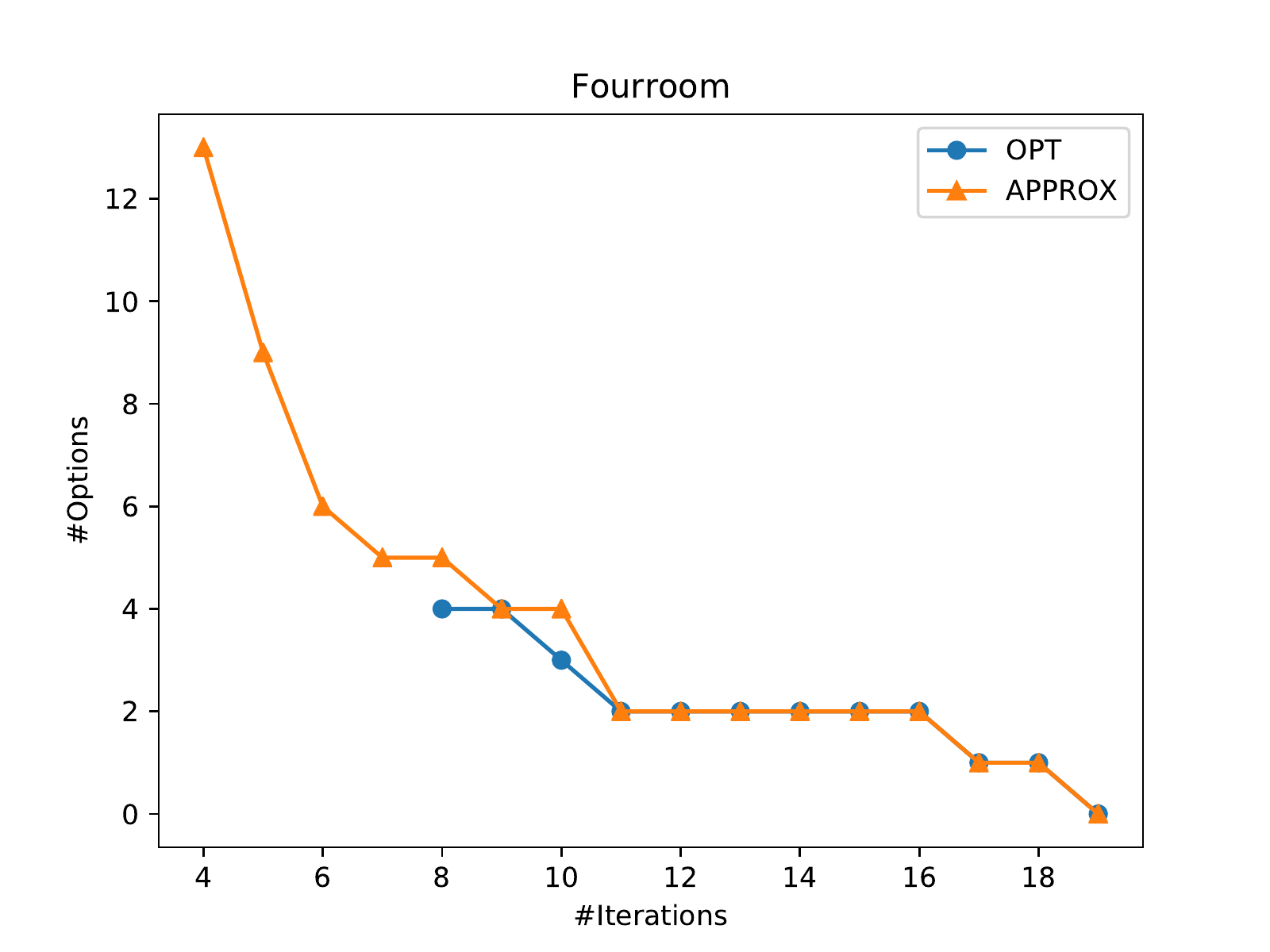} \label{fig:momifour}}
    \subfloat[$9\times 9$ grid (MOMI)]{\includegraphics[width=\figsizeq\textwidth]{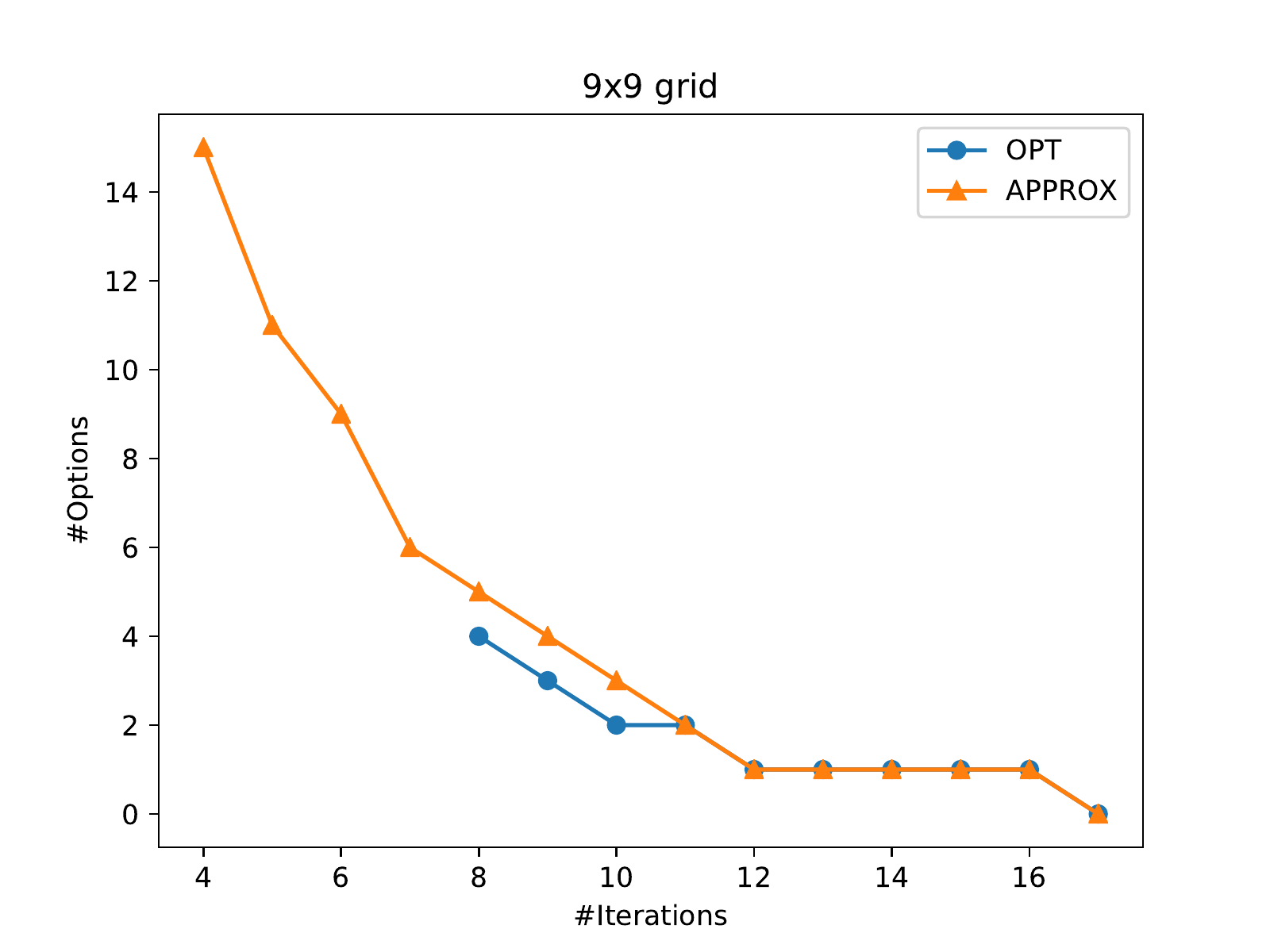} \label{fig:momigrid}}
    \caption{MIMO and MOMI evaluations. Parts~(a)--(b) show the number of iterations for VI using options generated by \mimoalg{}. Parts~(c)--(d) show the number of options generated by \momialg{} to ensure the MDP is solved within a given number of iterations. OPT: optimal set of options. APPROX: a bounded suboptimal set of options generated by \mimoalg{} an \momialg{}. BET: betweenness options. EIG: eigenoptions.}
    \label{fig:numiter-op}
\end{figure*}

We evaluate the performance of the value-iteration algorithm using options generated by the approximation algorithms on several grid-based simple domains.

We ran the experiments on an $11 \times 11$ four-room domain and a $9 \times 9$ grid world with no walls.
In both domains, the agent's goal is to reach a specific square. The agent can move in the usual four directions  but cannot cross walls.

\noindent{\bf Visualizations}: First, we visualize a variety of option types, including the optimal point options, those found by our approximation algorithms, and several option types proposed in the literature.
We computed the optimal set of point options by enumerating every possible set of point options and picking the best. As an optimal set of options is not unique, we picked one arbitrary.
We are only able to find optimal solutions up to four options within 10 minutes, while the approximation algorithm could find any number of options within a few minutes.
For eigenoptions, we ignored the eigenvector corresponding to the smallest eigenvalue (=0) in the graph Laplacian because it has a constant value for every state.
Both betweenness options and eigenoptions are discovered by a polynomial time algorithm, thus able to discover within a few minutes. 
Figure~\ref{fig:fourroom-viz} shows the optimal and bounded suboptimal set of options computed by \mimoalg{}. See the supplementary material for visualizations for the $9\times 9$ grid domain. 

Figure \ref{mimo4bet} shows the four bottleneck states with highest shortest-path betweenness centrality in the state-transition graph~\cite{csimcsek2009skill}.
Interestingly, the optimal options are quite close to the bottleneck states in the four-room domain, suggesting that bottleneck states are also useful for planning as a heuristic to find important subgoals. 

Figure~\ref{mimo4eg} shows the set of subgoals discovered by graph Laplacian analysis
following
the method of \namecite{machado2018laplacian}. While they proposed to generate options to travel between subgoals for reinforcement learning, we generate a set of point options from each subgoal to the goal state as that is a better use of the subgoals for planning setting.



\noindent{\bf Quantitative Evaluation}: Next, we run value iteration using the set of options generated by \mimoalg{} and \momialg{}.
Figures~\ref{fig:mimofour} and~\ref{fig:mimogrid} show the number of iterations on the four-room and the $9\times 9$ grids using a set of options of size $k$.
The experimental results suggest that the suboptimal algorithm finds set of options similar to, but not quite as good as, the optimal ones.
For betweenness options and eigenoptions, we evaluated every subset of options among the four and present results for the best subset found.
Because betweenness options are placed close to the optimal options, the performance is close to optimal especially when the number of options is small.

In addition, we used \momialg{} to find a minimum option set to solve the MDP within the given number of iterations.
Figures~\ref{fig:momifour} and~\ref{fig:momigrid} show the number of options generated by \momialg{} compared to the minimum number of options.

\section{Related Work}
\label{sec:relatedwork}

Many heuristic algorithms have proposed to discover options useful for some purposes \cite{iba1989heuristic,mcgovern2001automatic,menache2002q,stolle2002learning,Simsek04,csimcsek2009skill,konidaris2009skill,machado2018laplacian,eysenbach2018diversity}.
These  algorithms seek to capture varying intuitions about what makes behavioral abstraction useful.
\citeauthor{jong2008utility} (\citeyear{jong2008utility}) sought to investigate the utility of options empirically and pointed out that introducing options might worsen learning performance. They argued that options can potentially improve the learning performance by encouraging exploitation or exploration. 
For example, some works investigate the use of bottleneck states \cite{stolle2002learning,csimcsek2009skill,menache2002q,lehnert2018value}.
\namecite{stolle2002learning} proposed to set states with high visitation count as subgoal states, resulting in identifying bottleneck states in the four-room domain.
\namecite{csimcsek2009skill} generalized the concept of the bottleneck to (shortest-path) betweenness of the graph to capture how pivotal the state is.
\namecite{menache2002q} used a learned model of the environment to run a Max-Flow/Min-Cut algorithm to the state-space graph to identify bottleneck states.
These methods generate options to leverage the idea that subgoals are states visited most frequently.
On the other hand, \namecite{Simsek04} proposed to generate options to encourage exploration by generating options to relatively novel states, encouraging exploration. \namecite{eysenbach2018diversity} instead proposed learning a policy for each option so that the diversity of the trajectories by the set of options are maximized.
These methods generate options to explore infrequently visited states.
\namecite{harb2017waiting} proposed to formulate good options to be options which minimize the deliberation costs in the bounded rationality framework \cite{simon1957models}.
The problem of finding minimum state-abstraction with bounded performance loss was studied by \namecite{even2003approximate}. They showed it is $\NP$-hard and proposed a polynomial time bicriteria approximation algorithm.

For planning, several works have shown empirically that adding a particular set of options or macro-operators can speed up planning algorithms~\cite{francis1993utility,sutton1998reinforcement,silver2012compositional,konidaris2016constructing}.
\namecite{mann2015approximate} analyzed the convergence rate of approximate value iteration with and without options and showed that options lead to faster convergence if their duration are longer and a value function is initialized pessimistically.
As in reinforcement learning, how to find efficient temporal abstractions for planning automatically remains an open question.

\section{Conclusions}
\label{sec:conclusions}

We considered a fundamental theoretical question concerning the use
of behavioral abstractions to solve  MDPs.
We considered two problem formulations for finding options: (1) minimize the size of option set given a maximum number of iterations (MOMI) and (2) minimize the number of iterations given a maximum size of option set (MIMO).
We showed that the two problems are both computationally intractable, even for deterministic MDPs.
For each problem, we produced a polynomial-time algorithm for MDPs with bounded reward and goal states, with bounded optimality for deterministic MDPs. Although these algorithms are not practical for a single-task planning, we believe these algorithms may be  a useful foundation for future option discovery methods.
In the future, we are interested in using the insights established here to develop principled option-discovery algorithms for model-based reinforcement learning. Since we now know which options minimize planning time, we can better guide model-based agents toward learning them and potentially reduce sample complexity considerably.

\section*{Acknowledgments}

We would like to thank the anonymous reviewer for their advice and suggestions to improve the inapproximability result for MOMI. We would like to thank Maehara Takanori and Kazuki Yoshizoe for their advice on formulating the problems.

\bibliographystyle{icml2019}
\bibliography{ms}


\clearpage

\begin{appendices}
\section{Appendix: Inapproximability of MOMI}


In this section we prove Theorem 4:

\setcounter{theorem}{3}
\begin{theorem} {\ }
    \begin{enumerate}
        \item MOMI is $\Omega(\log n)$ hard to approximate even for deterministic MDPs unless $P = NP$.
        \item MOMI$_{gen}$ is $2^{\log^{1-\epsilon}n}$-hard to approximate for any $\epsilon>0$ even for deterministic MDPs unless $NP \subseteq DTIME(n^{poly \log n})$.
        \item MOMI is $2^{\log^{1-\epsilon}n}$-hard to approximate for any $\epsilon>0$ unless $NP \subseteq DTIME(n^{poly \log n})$.
    \end{enumerate}
\end{theorem}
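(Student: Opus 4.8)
The plan is to obtain all three bounds by approximation-preserving (gap-preserving) reductions, reusing and then strengthening the set-cover reduction of Theorem~\ref{thm:main_result}. The key observation is that that reduction is already gap-preserving: in the instance of Figure~\ref{fig:reduction}, the size of the smallest point-option set that solves the MDP within $\ell=2$ iterations \emph{equals} the size of the smallest set cover, and the constructed MDP has $n = |\mc{U}| + 2|\mc{X}| + 1 = \poly(m)$ states, where $m$ is the size of the set-cover instance. Hence any $\alpha$-approximation for MOMI yields an $\alpha$-approximation for set cover, and since $\log n = \Theta(\log m)$ the factor is preserved up to constants. For Part~1 I would then simply invoke the $(1-o(1))\log n$ inapproximability of set cover unless $\PTIME=\NP$ \cite{dinur2014analytical}; because the MDP produced is a shortest-path problem and therefore deterministic, this immediately gives the $\Omega(\log n)$ hardness for MOMI on deterministic MDPs.

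For Parts~2 and~3 the set-cover reduction is too weak, since set cover admits an $O(\log n)$ approximation and so cannot produce a $2^{\log^{1-\epsilon}n}$ gap. I would instead reduce from a Label-Cover / MinRep-style minimization problem, which is $2^{\log^{1-\epsilon}n}$-hard to approximate for every $\epsilon>0$ unless $\NP \subseteq \DTIME(n^{\poly\log n})$ \cite{dinitz2012label}. The defining feature of MinRep is a \emph{conjunctive} constraint: a superedge between two groups is satisfied only by selecting a \emph{compatible pair} of representatives, one on each side. The reduction must therefore build an MDP in which covering each such constraint within $\ell$ iterations requires the simultaneous presence of two specific options, rather than coverage by any single option on its own.

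For Part~3 (point options, general MDPs) I would exploit stochasticity: for each superedge introduce a constraint state whose single action transitions with probability $\tfrac12$ to each of two ``representative-gadget'' states, so that its value reaches $\epsilon$-optimality within $\ell$ iterations only when \emph{both} gadgets have already converged, i.e.\ only when both corresponding point options have been chosen. This encodes the pairwise AND that set cover cannot express. For Part~2 (general options, deterministic MDPs) stochasticity is unavailable, but general options may have multiple initiation states; I would use this extra freedom so that one option acts as the chosen representative for an entire group, while pairwise compatibility is enforced through path-length constraints in the deterministic transition graph, calibrated against $\ell$.

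The main obstacle is exactly this conjunctive encoding: making ``state converges within $\ell$ iterations'' coincide \emph{precisely} with ``superedge satisfied by a compatible pair,'' with no spurious shortcut that lets a single option discharge a constraint alone, which would collapse the gap back to the set-cover regime and lose the $2^{\log^{1-\epsilon}n}$ factor. Tuning the discount factor, edge lengths / transition probabilities, and the threshold $\ell$ so that the Bellman backups propagate optimal values across each gadget in exactly the intended number of passes is the delicate part. Once the gap is shown to transfer and $n$ is verified to stay polynomial in the MinRep instance size (so that $\log^{1-\epsilon}n = \Theta(\log^{1-\epsilon}m)$), the stated bounds for MOMI and MOMI$_{gen}$ follow.
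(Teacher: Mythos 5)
Your plan matches the paper's proof essentially step for step: Part 1 is obtained by observing that the set-cover reduction behind Theorem~\ref{thm:main_result} is approximation-preserving and invoking the $\ln n$-inapproximability of set cover; Parts 2 and 3 reduce from Min-Rep, using half--half stochastic transitions to encode the conjunctive (AND) requirement when only point options are available, and multi-initiation general options with a path-length budget (the paper uses $\ell = 3$) for the deterministic case. The one detail to fix is that each superedge state needs one stochastic action \emph{per} satisfying pair in $\pi_e$ (the max over actions then supplies the disjunction over assignments), rather than a single action per superedge.
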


For Theorems 4.2 and 4.3 we reduce our problem to the Min-Rep, problem, originally defined by \cite{kortsarz2001hardness}. Min-Rep is a variant of the better studied label cover problem \cite{dinur2004hardness} and has been integral to recent hardness of approximation results in network design problems \cite{dinitz2012label,bhattacharyya2012transitive}. Roughly, Min-Rep asks how to assign as few labels as possible to nodes in a bipartite graph such that every edge is ``satisfied.'' 

\ddef{Min-Rep}{\\
{\bf Given} a bipartite graph $G = (A \cup B, E)$ and alphabets $\Sigma_A$ and $\Sigma_B$ for the left and right sides of $G$ respectively. Each $e \in E$ has associated with it a set of pairs $\pi_{e} \subseteq \Sigma_A \times \Sigma_B$ which satisfy it. {\bf Return} a pair of assignments $\gamma_A : A \to \mathcal{P}(\Sigma_A)$ and $\gamma_B : B \to \mathcal{P}(\Sigma_B)$ such that for every $e=(A_i,B_j) \in E$  there exists an $(a, b) \in \pi_e$ such that $a \in \gamma_A(A_i)$ and $b \in \gamma_B(B_j)$. The objective is to minimize $\sum_{A_i \in A} |\gamma_A(A_i)| + \sum_{B_j \in B} |\gamma_B(B_j)|$.
}
\definecolor{OptionColor1}{HTML}{4285F4}
\definecolor{OptionColor2}{HTML}{FBBC05}
\definecolor{OptionColor3}{HTML}{34A853}
\definecolor{OptionColor4}{HTML}{EA4335}
\definecolor{OptionColor5}{HTML}{4F6367}
\definecolor{OptionColor6}{HTML}{D138BF}
\definecolor{OptionColor7}{HTML}{7494EA}

\tikzstyle{MREdgeStyle}=[thick, sloped]
\tikzstyle{SatAssignmentNode}=[color=OptionColor1]
\tikzstyle{MRNodeStyle}=[draw, circle, thick, minimum size=1cm]
\usetikzlibrary{positioning}

We illustrate a feasible solution to an instance of Min-Rep in \Cref{fig:minrep}. 
\begin{figure}[hb]
    \centering
    \begin{tikzpicture}
    \node [MRNodeStyle] (A1) at (0, 4) {$A_1$};
    \node [MRNodeStyle] (A2) at (0, 2) {$A_2$};
    \node [MRNodeStyle] (B1) at (3, 4) {$B_1$};
    \node [MRNodeStyle] (B2) at (3, 2) {$B_2$};
        
    \draw[MREdgeStyle] (A1) -- (B1)  node[midway, above] {\tiny \color{OptionColor2} $(a_1, b_2)$};
    \draw[MREdgeStyle] (A1) -- (B2) node[midway, above] {\tiny \color{OptionColor3} $(a_2, b_3)$, \color{OptionColor4} $(a_3, b_1)$};
    \draw[MREdgeStyle] (A2) -- (B2) node[midway, above] {\tiny \color{OptionColor5} $(a_3, b_1)$};
    
    \node [SatAssignmentNode, below=0cm of A1] {$a_1, a_2$};
    \node [SatAssignmentNode, below=0cm of A2] {$a_3$};
    \node [SatAssignmentNode, below=0cm of B1] {$b_2$};
    \node [SatAssignmentNode, below=0cm of B2] {$b_1, b_3$};
    
    \end{tikzpicture}
    \caption{An instance of Min-Rep with $\Sigma_A = \{a_1, a_2, a_3\}$ and $\Sigma_B = \{b_1, b_2, b_3\}$. Edge $e$ is labeled with pairs in $\pi_e$. Feasible solution $(\gamma_A, \gamma_b)$ illustrated where $\gamma_A(A_i)$ and $\gamma_B(B_j)$ below $A_i$ and $B_j$ in blue. Constraints colored to coincide with stochastic action colors in \Cref{fig:minrepstochMOMIred}.}
    \label{fig:minrep}
\end{figure}
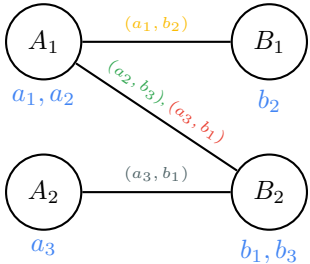

The crucial property of Min-Rep we use is that no polynomial-time algorithm can approximate Min-Rep well. 
Let $\tilde{n} = |A| + |B|$.
\begin{lemma}[\citeauthor{kortsarz2001hardness} \citeyear{kortsarz2001hardness}]\label{lem:minRepHard}
Unless $\text{NP} \subseteq \text{DTIME}(n^ {\poly \log n})$, Min-Rep admits no $2^{\log^{1 - \eps} \tilde{n}}$ polynomial-time approximation algorithm for any $\eps > 0$.
\end{lemma}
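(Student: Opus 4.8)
The plan is to obtain this hardness by a gap-amplified reduction from Label Cover, of which Min-Rep is essentially the minimization variant. I would first invoke the PCP theorem to produce a Label Cover instance for which it is \NP-hard to distinguish the \emph{yes} case, where some single labeling satisfies every edge, from the \emph{no} case, where any labeling satisfies only a small constant fraction of the edges. This gives only a constant soundness gap, which is far too weak for the target ratio $2^{\log^{1-\eps}\tilde{n}}$, so the gap must be amplified before it can be transferred to Min-Rep.

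The amplification step is parallel repetition (Raz's theorem). Taking the $k$-fold parallel repetition of the base game drives the soundness from a constant down to $2^{-\Omega(k)}$, at the cost of blowing the number of constraints up to roughly $|I|^{k}$ and the alphabets up to $|\Sigma|^{k}$. I would then read the repeated game back as a Min-Rep instance and argue the two sides of the gap separately. In a yes instance, assigning each node of $A \cup B$ its single satisfying symbol is a feasible solution of cost exactly $|A|+|B| = \tilde{n}$. In a no instance, a counting/averaging argument shows that, because no single labeling covers more than a $2^{-\Omega(k)}$ fraction of the edges, any assignment satisfying \emph{all} edges must spend on average $2^{\Omega(k)}$ symbols per node, forcing total cost at least $2^{\Omega(k)} \cdot \tilde{n}$. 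The ratio between the two cases is therefore $2^{\Omega(k)}$.

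Finally I would tune the number of repetitions to land on the stated ratio while tracking the running time. Setting $k = \poly\log n$ for the appropriate exponent (where $n$ is the size of the original \NP-hard instance) makes the reduced instance size $\tilde{n} = n^{O(k)} = n^{\poly\log n}$ and, after expressing the gap $2^{\Omega(k)}$ as a function of $\tilde{n}$, yields exactly $2^{\log^{1-\eps}\tilde{n}}$. Because each repetition multiplies the instance size, the reduction now runs in time $n^{\poly\log n}$ rather than polynomial time. Consequently, a polynomial-time algorithm approximating Min-Rep within $2^{\log^{1-\eps}\tilde{n}}$ would decide the original instance in quasi-polynomial time, giving $\NP \subseteq \DTIME(n^{\poly\log n})$.

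The main obstacle I anticipate is the no-case lower bound together with the calibration of $k$: one must pin down precisely how the minimum number of Min-Rep labels in a no instance scales with the repeated game's soundness (the averaging argument is exactly where the quantitative form of parallel repetition is used), and then balance $k$ against the super-polynomial instance blowup so that the gap is \emph{exactly} $2^{\log^{1-\eps}\tilde{n}}$ under the quasi-polynomial assumption. Choosing $k$ too large destroys even the quasi-polynomial time bound, while choosing it too small fails to reach the target ratio, so the whole argument hinges on this tradeoff.
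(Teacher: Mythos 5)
This lemma is not proved in the paper at all: it is imported verbatim as a known result of \citeauthor{kortsarz2001hardness}, so there is no in-paper argument to compare yours against. Your sketch is a correct reconstruction of how that result is actually established in the literature: start from a two-prover game with constant soundness gap obtained via the PCP theorem, amplify by Raz's parallel repetition to soundness $2^{-\Omega(k)}$, read the repeated game as Min-Rep, and use the averaging argument (a feasible labeling of total cost $C$ yields a single-label strategy satisfying roughly a $(\tilde{n}/C)^2$ fraction of edges, forcing $C \geq \tilde{n}\cdot 2^{\Omega(k)}$ in the no case), then set $k = \log^{c} n$ with $c \geq (1-\eps)/\eps$ so that the gap dominates $2^{\log^{1-\eps}\tilde{n}}$ while the blowup stays within $\DTIME(n^{\poly\log n})$ --- which is exactly why the hypothesis is the quasi-polynomial-time assumption rather than $\PTIME \neq \NP$. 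The only point deserving more care than your sketch gives it is the no-case bound: the averaging argument naturally yields a quadratic (not linear) relation between cost and soundness, but this only perturbs the constant inside $\Omega(k)$ and does not affect the calibration. Since the paper's authors intend this as a black-box citation, your proposal is best viewed as a correct expansion of that citation rather than an alternative to anything in the paper.
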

As a technical note, we emphasize that all relevant quantities in Min-Rep are polynomially-bounded. In Min-Rep we have $|\Sigma_A|, |\Sigma_B| \leq \tilde{n}^{c'}$ for constant $c'$. It immediately follows that $\sum_e |\pi_e| \leq n^c$ for constant $c$.

\subsection{Hardness of Approximation of \momi with Deterministic MDP}

\begin{proof}[Theorem 4.1 Proof]
    The optimization version of the set-cover problem cannot be approximated within a factor of $c \cdot \ln n$ by a polynomial-time algorithm unless P = NP~\cite{raz1997sub}. The set-cover optimization problem can be reduced to MOMI with a similar construction for a reduction from SetCover-DEC to OI-DEC. Here, the targeted minimization values of the two problems are equal: $P(\mc{C}) = |\mc{O}|$, and the number of states in OI-DEC is equal to the number of elements in the set cover on transformation.
    Assume there is a polynomial-time algorithm within a factor of $c \cdot \ln n$ approximation for MOMI where $n$ is the number of states in the MDP.
    Let SetCover$(\mc{U}, \mc{X})$ be an instance of the set-cover problem. We can convert the instance into an instance of MOMI$(M, 0, 2)$.
    Using the approximation algorithm, we get a solution $\mc{O}$ where $|\mc{O}| \leq c \ln n |\mc{O}^*|$, where $\mc{O}^*$ is the optimal solution.
    We construct a solution for the set cover $\mc{C}$ from the solution to the MOMI $\mc{O}$ (see the construction in the proof of Theorem~1).
    Because $|\mc{C}| = |\mc{O}|$ and $|\mc{C}^*| = |\mc{O}^*|$, where $\mc{C}^*$ is the optimal solution for the set cover, we get $|\mc{C}| = |\mc{O}| \leq c \ln n |\mc{O}^*| = c \ln n |\mc{C}^*|$. Thus, we acquire a $c \cdot \ln n$ approximation solution for the set-cover problem within polynomial time, something 
    only possible if P=NP.
    Thus, there is no polynomial-time algorithm with a factor of $c \cdot \ln n$ approximation for MOMI, unless P=NP.
\end{proof}

\subsection{Hardness of Approximation of \momichoose}
We now show our hardness of approximation of $2^{\log^{1 - \epsilon}n}$ for \momichoose,  Theorem 4.2.\footnote{We assume that $\mc{O}'$ is a ``good'' set of options in the sense that there exists some set $\mc{O}^* \subseteq \mc{O}'$ such that $L_{\epsilon, V_0}(\mc{O}^*) \leq \ell$. We also assume, without loss of generality, that $\eps < 1$ throughout this section; other values of $\eps$ can be handled by re-scaling rewards in our reduction.}

We start by describing our reduction from an instance of Min-Rep to an instance of \momichoose. The intuition behind our reduction is that we can encode choosing a label for a vertex in Min-Rep as choosing an option in our \momichoose instance. In particular, we will have a state for each edge in our Min-Rep instance and reward will propagate quickly to that state when value iteration is run only if the options corresponding to a satisfying assignment for that edge are chosen.

More formally, our reduction is as follows. Consider an instance of Min-Rep, $\MR$, given by $G = (A \cup B, E)$, $\Sigma_A$, $\Sigma_B$ and $\{\pi_e\}$. Our instance of \momichoose is as follows where $\gamma = 1$ and $l = 3$.\footnote{It is easy to generalize these results to $l \geq 4$ by replacing certain edges with paths.}

\begin{itemize}
    \item \textbf{State space} We have a single goal state $S_g$ along with states $S_g'$ and $S_g''$. For each edge $e$ we create a state $S_e$. Let $\text{Sat}_A(e)$ consist of all $a \in \Sigma_A$ such that $a$ is in some assignment in $\pi_e$. Define $\text{Sat}_B(e)$ symmetrically. For each edge $e \in E$ we create a set of $2 \cdot |\text{Sat}_A(e)|$ states, namely $S_{ea}$ and $S_{ea}'$ for every $a \in \text{Sat}_A(e)$. We do the same for $b \in \text{Sat}_B(e)$.
    \item \textbf{Actions and Transitions} We have a single action from $S_g'$ to $S_g$, a single action from $S_g''$ to $S_g'$. For each edge $e$ we have the following deterministic actions: Every $S_{ea}'$ has a single outgoing action to $S_{ea}$ for $a \in \text{Sat}_A(e)$; Every $S_{eb}$ has a single outgoing action to $S_{eb'}$ for $b \in \text{Sat}_B(e)$; Every $S_{ea}$ has an outgoing action to $S_{eb}$ if $(a,b) \in \pi_e$ and every $S_{eb}'$ has a single outgoing action to $S_g$; Lastly, we have a single action from $S_{ea}'$ to $S_g''$ for every $a \in \text{Sat}_A(e)$.
    \item \textbf{Reward} The reward of arriving in $S_g$ is $1$. The reward of arriving in every other state is $0$.
    \item \textbf{Option Set} Our option set $\mc{O}'$ is as follows. For each vertex $A_i \in A$ and each $a \in \Sigma_A$ we have an option $O(A_i,a)$: The initiation set of this option is every $S_e$ where $e$ is incident to $A_i$; The termination set of this option is every $S_{ea}$ where $A_i$ is incident to $e$; The policy of this option  takes the action from  $S_{ea}'$ to $S_{ea}$ when in $S_{ea}'$ and the action from $S_e$ to $S_{ea}'$ when in $S_e$. 
    
    Symmetrically, for every vertex $B_j \in B$ and each $b \in \Sigma_B$ we have an option $O(B_j, b)$: The initiation set of this option is every $S_{eb}$ where $e$ is incident to $B_j$; The termination set of this option is $S_g$; The policy of this option takes the action from $S_{eb}$ to $S_{eb}'$ when in $S_{eb}$ and from $S_{eb}'$ to $S_g$ when in $S_{eb}'$.
\end{itemize}
One should think of choosing option $O(v, x)$ as corresponding to choosing label $x$ for vertex $v$ in the input Min-Rep instance. Let $\momichoose(\MR)$ be the MDP output given instance \MR of Min-Rep and see \Cref{fig:minrepstochMOMIred} for an illustration of our reduction.  

\tikzstyle{MomiChooseAction}=[thick, ->]
\tikzstyle{MOMIMRRedNodeStyle}=[draw, circle, thick, minimum size=.9cm,inner sep=.1pt]
\begin{figure}
    \centering
    \scalebox{.75}{
    \begin{tikzpicture}
        \node [MOMIMRRedNodeStyle] (Sg) at (4, 1.5) {$S_{g}$}; \node [draw, circle, minimum size = .7cm] () at (4, 1.5) {};
        \node [MOMIMRRedNodeStyle] (Sgp) at (0, 6) {$S_{g}'$};
        \node [MOMIMRRedNodeStyle] (Sgpp) at (-6, 1.5) {$S_{g}''$};
    
        \node [MOMIMRRedNodeStyle] (Se1) at (-4, 4) {$S_{e_1}$};
        \node [MOMIMRRedNodeStyle] (Se1a1p) at (-2.5, 4) {$S_{e_1a_1}'$};
        \node [MOMIMRRedNodeStyle] (Se1a1) at (-1, 4) {$S_{e_1a_1}$};
        
        \node [MOMIMRRedNodeStyle] (Se1b2) at (1, 4) {$S_{e_1b_2}$};
        \node [MOMIMRRedNodeStyle] (Se1b2p) at (2.5, 4) {$S_{e_1b_2}'$};
        
        \node [MOMIMRRedNodeStyle] (Se2) at (-4, 1.5) {$S_{e_2}$};
        \node [MOMIMRRedNodeStyle] (Se2a2p) at (-2.5, 2) {$S_{e_2a_2}'$};
        \node [MOMIMRRedNodeStyle] (Se2a2) at (-1, 2) {$S_{e_2a_2}$};
        \node [MOMIMRRedNodeStyle] (Se2a3p) at (-2.5, 1) {$S_{e_2a_3}'$};
        \node [MOMIMRRedNodeStyle] (Se2a3) at (-1, 1) {$S_{e_2a_3}$};
        
        \node [MOMIMRRedNodeStyle] (Se2b1p) at (2.5, 2) {$S_{e_2b1}'$};
        \node [MOMIMRRedNodeStyle] (Se2b1) at (1, 2) {$S_{e_2b1}$};
        \node [MOMIMRRedNodeStyle] (Se2b3p) at (2.5, 1) {$S_{e_2b3}'$};
        \node [MOMIMRRedNodeStyle] (Se2b3) at (1, 1) {$S_{e_2b3}$};
        
        \node [MOMIMRRedNodeStyle] (Se3) at (-4, -1) {$S_{e_3}$};
        \node [MOMIMRRedNodeStyle] (Se3a3p) at (-2.5, -1) {$S_{e_3a_3}'$};
        \node [MOMIMRRedNodeStyle] (Se3a3) at (-1, -1) {$S_{e_3a_3}$};
        
        \node [MOMIMRRedNodeStyle] (Se3b1) at (1, -1) {$S_{e_3b_1}$};
        \node [MOMIMRRedNodeStyle] (Se3b1p) at (2.5, -1) {$S_{e_3b_1}'$};

        \draw[MomiChooseAction] (Se1) -- (Se1a1p);
        \draw[MomiChooseAction] (Se1a1p) -- (Se1a1);
        \draw[MomiChooseAction] (Se1a1) -- (Se1b2);
        \draw[MomiChooseAction] (Se1b2) -- (Se1b2p);
        \draw[MomiChooseAction] (Se1b2p) -- (Sg);
        
        \draw[MomiChooseAction] (Se2) -- (Se2a2p);
        \draw[MomiChooseAction] (Se2a2p) -- (Se2a2);
        \draw[MomiChooseAction] (Se2a2) -- (Se2b3);
        \draw[MomiChooseAction] (Se2b3) -- (Se2b3p);
        \draw[MomiChooseAction] (Se2b3p) -- (Sg);
        
        \draw[MomiChooseAction] (Se2) -- (Se2a3p);
        \draw[MomiChooseAction] (Se2a3p) -- (Se2a3);
        \draw[MomiChooseAction] (Se2a3) -- (Se2b1);
        \draw[MomiChooseAction] (Se2b1) -- (Se2b1p);
        \draw[MomiChooseAction] (Se2b1p) -- (Sg);
        
        \draw[MomiChooseAction] (Se3) -- (Se3a3p);
        \draw[MomiChooseAction] (Se3a3p) -- (Se3a3);
        \draw[MomiChooseAction] (Se3a3) -- (Se3b1);
        \draw[MomiChooseAction] (Se3b1) -- (Se3b1p);
        \draw[MomiChooseAction] (Se3b1p) -- (Sg);
        
        \path[black, ->, out=90, in=180, thick] (Sgpp) edge (Sgp);
        \path[black, ->, out=0, in=90, thick] (Sgp) edge (Sg);
        \path[black, ->, thick] (Se1a1p) edge (Sgpp);
        \path[black, ->, bend right, thick] (Se2a2p) edge (Sgpp);
        \path[black, ->, bend left, thick] (Se2a3p) edge (Sgpp);
        \path[black, ->, thick] (Se3a3p) edge (Sgpp);
        
        \path[->, dashed, out=60, ultra thick, color=OptionColor1] (Se1) edge (Se1a1);
        
        \path[->, dashed, out=60, ultra thick, color=OptionColor2] (Se2) edge (Se2a2);
        
        \path[->, dashed, out =-60, in =-120, ultra thick, color=OptionColor3] (Se2) edge (Se2a3);
        
        \path[->, dashed, out =-60, in=-120, ultra thick, color=OptionColor4] (Se3) edge (Se3a3);
        
        \path[->, dashed, out =60, in=60, ultra thick, color=OptionColor5] (Se1b2) edge (Sg);
        
        \path[->, dashed, out =60, in=60, ultra thick, color=OptionColor6] (Se2b1) edge (Sg);
        \path[->, dashed, out =-60, in=-60, ultra thick, color=OptionColor6] (Se3b1) edge (Sg);

        \path[->, dashed, out =-60, in=-60, ultra thick, color=OptionColor7] (Se2b3) edge (Sg);

    \end{tikzpicture}
    }
    \caption{Our \momichoose reduction applied to the Min-Rep problem in \Cref{fig:minrep}. $e_1 = (A_1, B_1)$, $e_2 = (A_1, B_2)$, $e_3 = (A_2, B_2)$. Actions given in solid lines and each option in $\mc{O}'$ represented in its own color as a dashed line from initiation to termination states. Notice that a single option goes from $S_{e3b1}$ and $S_{e_2b_1}$ to $S_g$.}
    \label{fig:minrepstochMOMIred}
\end{figure}
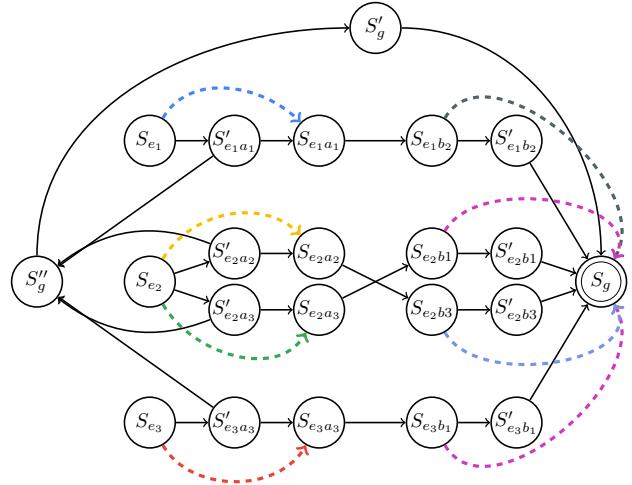

Let $\OPT_{\momichoose}$ be the value of the optimal solution to $\momichoose(\MR)$ and let $\OPT_\MR$ be the value of the optimal Min-Rep solution to \MR. The following lemmas demonstrates the correspondence between a \momichoose and Min-Rep solution.
\begin{lemma}\label{lem:MRRelateOptMOMIChoose} 
$\OPT_{\momichoose} \leq \OPT_\MR$
\end{lemma}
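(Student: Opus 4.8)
The plan is to exhibit, from an optimal Min-Rep solution, a feasible option set for $\momichoose(\MR)$ of exactly the same size, which immediately gives $\OPT_{\momichoose} \le \OPT_\MR$. Concretely, let $(\gamma_A, \gamma_B)$ be an optimal Min-Rep assignment and define
\[
\mc{O} = \{O(A_i, a) : A_i \in A,\ a \in \gamma_A(A_i)\} \cup \{O(B_j, b) : B_j \in B,\ b \in \gamma_B(B_j)\}.
\]
Since distinct (vertex, label) pairs name distinct options, $|\mc{O}| = \sum_{A_i}|\gamma_A(A_i)| + \sum_{B_j}|\gamma_B(B_j)| = \OPT_\MR$. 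It therefore suffices to show that $L_{\epsilon, V_0}(\mc{O}) \le 3 = \ell$, i.e.\ that value iteration with these options converges within three iterations.

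The heart of the argument is tracking how value propagates backward from $S_g$. Because $\gamma = 1$, $V_0 = 0$, all rewards are $0$ except the reward $1$ collected on arrival at the absorbing state $S_g$, and $\epsilon < 1$, a state is $\epsilon$-optimal after $i$ iterations exactly when it can reach $S_g$ in at most $i$ backups, where under the multi-time model each option counts as a single backup (its intermediate rewards being $0$). I would first verify that every state other than the edge states $S_e$ already converges within three iterations using primitive actions alone: $S_g'$ and each $S_{eb}'$ converge in one iteration, each $S_{eb}$ and $S_g''$ in two, and each $S_{ea}$ and $S_{ea}'$ in three (the latter via $S_{ea}' \to S_g'' \to S_g' \to S_g$). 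The only states at primitive distance greater than three are the edge states $S_e$, at distance four, so the whole question reduces to showing every $S_e$ converges within three iterations.

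I would then analyze an edge state $S_e$ with $e = (A_i, B_j)$. The only backups carrying $S_e$ to $S_g$ in three steps pass through an option $O(A_i, a)$ with $a \in \gamma_A(A_i)$: such an option moves $S_e$ to $S_{ea}$ in one backup, after which the primitive action $S_{ea} \to S_{eb}$ (present exactly when $(a,b) \in \pi_e$) is followed by the option $O(B_j, b)$ taking $S_{eb}$ directly to $S_g$ (present when $b \in \gamma_B(B_j)$). Hence $V_3(S_e) = 1$ precisely when there is a pair $(a,b) \in \pi_e$ with $a \in \gamma_A(A_i)$ and $b \in \gamma_B(B_j)$, which is exactly the condition that the Min-Rep solution satisfies edge $e$. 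Since $(\gamma_A, \gamma_B)$ is feasible, every edge is satisfied, so every $S_e$ converges by iteration three, establishing that $\mc{O}$ is feasible and completing the bound.

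The main obstacle I anticipate is making the ``option equals one backup'' bookkeeping fully rigorous. I must confirm that under the multi-time model with $\gamma = 1$ the $B$-option contributes reward $1$ (the goal reward is collected inside the option) while the $A$-option contributes reward $0$ and merely relocates the backup target to $S_{ea}$, and that no unintended faster route to $S_g$ exists, e.g.\ that no option other than the $A_i$-options initiates at $S_e$, and that the $S_g''$ shortcut does not let $S_{ea}'$ converge early enough to shorten $S_e$'s effective distance via a primitive action. Once this step-by-step propagation is pinned down, the equivalence between edge satisfaction and convergence of $S_e$ within three iterations is immediate, and the size bound is a pure counting statement.
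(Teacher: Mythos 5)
Your proposal is correct and takes essentially the same route as the paper: construct the option set $\mc{O}_{\gamma_A^*,\gamma_B^*}$ from the optimal Min-Rep assignment, observe its size equals $\OPT_\MR$, and show feasibility by noting all non-edge states converge in three iterations under primitive actions while each $S_e$ reaches $S_g$ in three backups via $O(A_i,\bar a)$, the action $S_{e\bar a}\to S_{e\bar b}$, and $O(B_j,\bar b)$ for a satisfying pair $(\bar a,\bar b)\in\pi_e$. Your extra bookkeeping on per-state convergence times only makes explicit what the paper asserts in one line.
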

\begin{proof}
Given a solution $(\gamma_A, \gamma_B)$ to \MR, define $\mc{O}_{\gamma_A, \gamma_B} := \{O(v, x) : v \in V(G) \land (\gamma_A(v) = x \lor \gamma_B(v) = x) \}$ as the corresponding set of options. Let $\gamma_A^*$ and $\gamma_B^*$ be the optimal solutions to $\MR$ which is of cost $\OPT_\MR$. 

We now argue that $\mc{O}_{\gamma_A^*, \gamma_B^*}$ is a feasible solution to $\momichoose(\MR)$ of cost $\OPT_\MR$, demonstrating that the optimal solution to $\momichoose(\MR)$ has cost at most $\OPT_\MR$. To see this notice that by construction the \momichoose cost of $\mc{O}_{\gamma_A^*, \gamma_B^*}$ is exactly the Min-Rep cost of $(\gamma_A^*, \gamma_B^*)$. 

We need only argue, then, that $\mc{O}_{\gamma_A^*, \gamma_B^*}$ is feasible for $\momichoose(\MR)$ and do so now. The value of every state in $\momichoose(\MR)$ is $1$. Thus, we must guarantee that after 3 iterations of value iteration, every state has value $1$. However, without any options every state except each $S_e$ has value $1$ after 3 iterations of value iteration. Thus, it suffices to argue that $\mc{O}_{\gamma_A^*, \gamma_B^*}$ guarantees that every $S_e$ will have value $1$ after $3$ iterations of value iteration. Since $(\gamma_A^*, \gamma_B^*)$ is a feasible solution to \MR we know that for every $e = (A_i, B_j)$ there exists an $\bar{a} \in \gamma_A^*(A_i)$ and $\bar{b} \in \gamma_B^*(B_j)$ such that $(\bar{a}, \bar{b}) \in \pi_e$; correspondingly there are options $O(A_i, \bar{a}), O(B_j, \bar{b}) \in \mc{O}_{\gamma_A^*, \gamma_B^*}$. It follows that, given options $\mc{O}_{\gamma_A^*, \gamma_B^*}$ from, $S_e$ one can take option $O(A_i, \bar{a})$ then the action from $S_{e\bar{a}}$ to $S_{e\bar{b}}$ and then option $O(B_j, \bar{b})$ to arrive in $S_g$; thus, after 3 iterations of value iteration the value of $S_e$ is $1$. Thus, we conclude that after 3 iterations of value iteration every state has converged on its value.
\end{proof}

We now show that a solution to $\momichoose(\MR)$ corresponds to a solution to \MR. For the remainder of this section $\gamma^\mc{O}_A(A_i) : = \{a : O(A_i,a) \in \mc{O} \}$ and $\gamma^\mc{O}_B(B_j) : = \{b : O(B_j,b) \in \mc{O} \}$ is the Min-Rep solution corresponding to option set $\mc{O}$.
\begin{lemma}\label{lem:MRToMomiChoose}
For a feasible solution to $\momichoose(\MR)$, $\mc{O}$, we have $(\gamma^\mc{O}_A, \gamma^\mc{O}_B)$ is a feasible solution to \MR of cost $|\mc{O}|$.
\end{lemma}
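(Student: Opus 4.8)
The plan is to prove the two assertions of \Cref{lem:MRToMomiChoose} separately: that the Min-Rep cost of $(\gamma^\mc{O}_A, \gamma^\mc{O}_B)$ equals $|\mc{O}|$, and that $(\gamma^\mc{O}_A, \gamma^\mc{O}_B)$ is feasible for $\MR$. The cost assertion is immediate from the construction: every option in $\mc{O}'$ has the form $O(v,x)$ for a unique vertex $v$ and label $x$, and each $O(A_i,a) \in \mc{O}$ contributes exactly the label $a$ to $\gamma^\mc{O}_A(A_i)$ (symmetrically for $B$). Hence $\sum_{A_i} |\gamma^\mc{O}_A(A_i)| + \sum_{B_j} |\gamma^\mc{O}_B(B_j)|$ counts each member of $\mc{O}$ exactly once, so the Min-Rep cost is $|\mc{O}|$.

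For feasibility, I would first translate ``$\mc{O}$ is feasible for $\momichoose(\MR)$'' into a purely combinatorial reachability condition. Since $\gamma = 1$ and the only nonzero reward is the $1$ collected on arrival at $S_g$, the optimal value of every state is $1$; running value iteration from $V_0 = 0$, one verifies that $V_i(s) \in \{0,1\}$ and that $V_i(s) = 1$ precisely when there is a sequence of at most $i$ moves—each move a primitive action or an option of $\mc{O}$—from $s$ to $S_g$. As already observed in the proof of \Cref{lem:MRRelateOptMOMIChoose}, every state other than the $S_e$'s reaches $S_g$ within $3$ primitive moves; consequently, feasibility (convergence within $\ell = 3$ iterations) is equivalent to requiring that each $S_e$ admits a move-sequence of length at most $3$ to $S_g$.

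The heart of the argument is then a short case analysis showing that such a short sequence can exist only when $e$ is satisfied. Fix $e = (A_i, B_j)$ and examine the first move out of $S_e$. A primitive first move lands in some $S_{ea}'$, from which $S_g$ is three primitive moves away (via $S_g''$) and at which no option is initiable, forcing total length at least $4$; hence the first move must be an option, and the only options initiable at $S_e$ are the $O(A_i,a)$, terminating at $S_{ea}$. At $S_{ea}$ no option is initiable, so the second move must be the primitive $S_{ea} \to S_{eb}$, which exists only when $(a,b) \in \pi_e$. Finally, from $S_{eb}$ reaching $S_g$ in a single move is possible only through the option $O(B_j,b)$, since the primitive out of $S_{eb}$ goes to $S_{eb}'$. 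Therefore $O(A_i,a), O(B_j,b) \in \mc{O}$ with $(a,b) \in \pi_e$, i.e.\ $a \in \gamma^\mc{O}_A(A_i)$, $b \in \gamma^\mc{O}_B(B_j)$ and $(a,b) \in \pi_e$, so $e$ is satisfied; as $e$ is arbitrary, $(\gamma^\mc{O}_A, \gamma^\mc{O}_B)$ is feasible for $\MR$.

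The main obstacle I anticipate is making the case analysis genuinely exhaustive: I must confirm that no option of $\mc{O}'$ is initiable at the intermediate states $S_{ea}$ or $S_{ea}'$ (so those states cannot be bypassed), and that the primitive distance from $S_{ea}'$ to $S_g$ is exactly $3$, which is what rules out every length-$3$ sequence that begins with a primitive move. A secondary technical point is justifying the reachability characterization of $V_i$—verifying that the values remain in $\{0,1\}$ and that $\epsilon$-convergence with $\epsilon < 1$ forces $V_i(S_e) = 1$—so that ``converges within $3$ iterations'' coincides exactly with ``reaches $S_g$ within $3$ moves.''
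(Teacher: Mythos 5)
Your proof is correct and follows essentially the same route as the paper: the cost identity is read off from the bijection between options and labels, and feasibility is deduced by arguing that the only length-$3$ routes from $S_e$ to $S_g$ are option--action--option sequences witnessing a satisfying pair in $\pi_e$. The paper simply asserts this path-uniqueness claim, whereas you make the case analysis (and the $\{0,1\}$-valued reachability characterization of $V_i$) explicit; both are fine.
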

\begin{proof}
Notice that by construction the Min-Rep cost of $(\gamma^{\mc{O}}_A, \gamma^{\mc{O}}_B)$ is exactly $|\mc{O}|$. Thus, we need only prove that $(\gamma^{\mc{O}}_A, \gamma^{\mc{O}}_B)$ is a feasible solution for \MR. 

We do so now. Consider an arbitrary edge $e = (A_i, B_j) \in E$; we wish to show that $(\gamma^{\mc{O}}_A, \gamma^{\mc{O}}_B)$ satisfies $e$. Since $\mc{O}$ is a feasible solution to $\momichoose(\MR)$ we know that after 3 iterations of value iteration every state must converge on its value. Moreover, notice that the value of every state in $\momichoose(\MR)$ is $1$. Thus, it must be the case that for every $S_e$ there exists a path of length 3 from $S_e$ to $S_g$ using either options or actions. The only such paths are those that take an option $O(A_i, a)$, then an action from $S_{ea}$ to $S_{eb}$ then option $O(B_j, b)$ where $(a, b) \in \pi_e$. It follows that $a \in \gamma^{\mc{O}}_A(A_i)$ and $b \in \gamma^{\mc{O}}_B(B_j)$. But since $(a, b) \in \pi_e$, we then know that $e$ is satisfied. Thus, every edge is satisfied and so $(\gamma^{\mc{O}}_A, \gamma^{\mc{O}}_B)$ is a feasible solution to \MR.
\end{proof}

\begin{proof}[Theorem 4.2 Proof]
Assume $\text{NP} \not \subseteq \text{DTIME}(n^ {\poly \log n})$ and for the sake of contradiction that there exists an $\eps > 0$ for which polynomial-time algorithm $\mc{A}_{\momichoose}$ can $2^{\log^{1 - \eps} n }$-approximate \momichoose. We use $\mc{A}_{\momichoose}$ to $2^{\log^{1 - \eps'} \tilde{n}}$ approximate Min-Rep for a fixed constant $\eps' > 0$ in polynomial-time, thereby contradicting \Cref{lem:minRepHard}. Again, $\tilde{n}$ is the number of vertices in the graph of the Min-Rep instance.

We begin by noting that the relevant quantities in $\momichoose(\MR)$ are polynomially-bounded. Notice that the number of states $n$ in the MDP in $\momichoose(\MR)$ is at most $O(\tilde{n}^2 |\Sigma_A| |\Sigma_B|) = \tilde{n}^c$ for some fixed constant $c$ by the aforementioned assumption that $\Sigma_A$ and $\Sigma_B$ are polynomially-bounded in $\tilde{n}$.\footnote{It is also worth noticing that since we create at most $O(\tilde{n}|\Sigma_A| + \tilde{n}|\Sigma_B|)$ options, the total number of options in $\mc{O}'$ is at most polynomial in $\tilde{n}$.}

Our polynomial-time approximation algorithm to approximate instance \MR of Min-Rep is as follows: Run $\mc{A}_{\momichoose}$ on $\momichoose(\MR)$ to get back option set $\mc{O}$. Return $(\gamma^\mc{O}_A, \gamma^\mc{O}_B)$ as defined above as our solution to \MR.

We first argue that our algorithm is polynomial-time in $\tilde{n}$. However, notice that for each vertex, we create a polynomial number of states. Thus, the number of states in $\momichoose(\MR)$ is polynomially-bounded in $\tilde{n}$ and so $\mc{A}_{\momichoose}$ runs in time polynomial in $\tilde{n}$. A polynomial runtime of our algorithm immediately follows.

We now argue that our algorithm is a $2^{\log^{1 - \eps'} \tilde{n} }$-approximation for Min-Rep for some $\eps' > 0$. Applying \Cref{lem:MRToMomiChoose}, the approximation of $\mc{A}_{\momichoose}$ and then \Cref{lem:MRRelateOptMOMIChoose}, we have that $(\gamma^\mc{O}_A, \gamma^\mc{O}_B)$  is a feasible solution for \MR with cost
\begin{align*}
\cost_{\text{Min-Rep}}(\gamma^\mc{O}_A, \gamma^\mc{O}_B) & = |\mc{O}|\\ &\leq 2^{\log^{1-\eps} n } \OPT_{\momichoose} \\
&\leq 2^{\log^{1-\eps} n } \OPT_\MR
\end{align*}

Thus, $(\gamma^\mc{O}_A, \gamma^\mc{O}_B)$ is a $2^{\log^{1-\eps}n}$ approximation for the optimal Min-Rep solution where $n$ is the number of states in the MDP of $\momichoose(\MR)$. Now recalling that $n \leq \tilde{n}^c$ for fixed constant $c$. We therefore have that $(\gamma^\mc{O}_A, \gamma^\mc{O}_B)$ is a $2^{\log^{1-\eps} \tilde{n}^c} = 2^{c^{1- \eps} \log^{1-\eps} \tilde{n}} \leq c' \cdot 2^{ \log^{1-\eps} \tilde{n}}$ approximation for a constant $c'$. Choosing $\eps$ sufficiently small, we have that $c' \cdot 2^{ \log^{1-\eps} \tilde{n}} \leq 2^{ \log^{1-\eps'} \tilde{n}}$ for sufficiently large $\tilde{n}$. 

Thus, our polynomial-time algorithm is a $2^{\log^{1 - \eps'} \tilde{n} }$-approximation for Min-Rep for $\eps' > 0$, thereby contradicting \Cref{lem:minRepHard}. We conclude that \momichoose cannot be $2^{\log^{1- \epsilon} n}$-approximated.
\end{proof}

\subsection{Hardness of Approximation of \momi with Stochastic MDP}
We now show our hardness of approximation of $2^{\log^{1 - \epsilon}n}$ for \momi, Theorem 4.3. We will notably use the stochasticity of the input MDP to show this result.
\footnote{We may assume without loss of generality $\eps < .5$ throughout this section; rewards in our reduction can be re-scaled to handle larger $\eps$.}


We begin by describing our reduction from an instance of Min-Rep to an instance of \momi. The intuition behind our reduction is as follows. As in our reduction for \momichoose we will have vertex for each edge in our Min-Rep instance and reward will propagate quickly to that vertex when value iteration is run only if the options corresponding to a satisfying assignment for that edge are chosen. The challenge, however, is that since our options are now only point options (whereas in \momichoose they were arbitrary options) it seems that we can no longer constrain a solution to choose options exactly corresponding to a feasible Min-Rep solution. 

To solve this issue we critically use stochasticity. Whether or not a given edge in a Min-Rep is satisfied is an or of ands: A fixed edge is satisfied when \emph{one} of its satisfying assignments is met (an or) and a given satisfying assignment is met when both endpoints have the right labels (an and). We will exploit the fact that the value of a state in an MDP is a max over actions to encode the ``or'' in Min-Rep and we will use the fact that in a stochastic MDP the value of a (state, action) pair is the sum over states to encode the ``and'' in Min-Rep.

More formally, our reduction is as follows. Consider instance \MR of Min-Rep given by $G = (A \cup B, E)$, $\Sigma_A$, $\Sigma_B$ and $\{\pi_e\}$. Our instance of \momi is as follows where $\gamma = 1$ and $l = 2$.\footnote{It is easy to generalize these results to $l \geq 3$ by replacing edges with paths.}

\begin{itemize}
    \item \textbf{State space} We have a goal state $S_i$ for each $A_i \in A$. Again, let $\text{Sat}_A(e)$ consist of all $a \in \Sigma_A$ such that $a$ is in some assignment in $\pi_e$. For each $A_i \in A$ and $a \in \text{Sat}_A(e)$ we will we add to our MDP states $S_{ia}$ and $S_{ia}'$. We symmetrically do the same for all states in $\Sigma_B$. For each $e \in E$ we will also add a state $S_e$.\footnote{It is not hard to see that this construction can be modified so that we have only a single goal state if need be; we need only set every $S_i$ and $S_j$ to be the same state. We assume multiple goal states for ease of exposition.}
    \item \textbf{Actions and Transitions} Every $S_{ia}$ state has a single action to $S_{ia}'$ and every $S_{ia}$ state has a single action to $S_i$. The same symmetrically holds for states from a $B_j \in B$. Every $S_e$ for $e = (A_i, B_j)$ has $|\pi_{(A_i, B_j)}|$ actions associated with it, namely $\{\alpha_{(a,b)}\}$ where $(a, b) \in \pi_{(A_i, B_j)}$. Action $\alpha_{(a,b)}$ has a probability $.5$ of transitioning to state $S_{ia}$ and a probability $.5$ of transitioning to state $S_{jb}$.
    \item \textbf{Reward} The reward of arriving in any $S_i$ or $S_j$ for $A_i \in A$ or $B_j \in B$ is $1$ and $0$ for every other state. 
\end{itemize}
Notice that no point options have $S_{e}$ as an initialization state since any such option would have a $.5$ probability of never terminating (and we assume our options always terminate). See \Cref{fig:minrepstochMOMIred} for an illustration of our reduction.   One should think of choosing a point option from $S_{ia}$ to $S_{i}$ as corresponding to choosing label $a$ for $A_i$ in the input Min-Rep instance. The same holds for label $b$ for $B_j$ and choosing a point option from $S_{jb}$ to $S_{j}$. Let $\momi(\MR)$ be the \momi instance output by our reduction given instance \MR of Min-Rep. 

\tikzstyle{MomiStochAction}=[thick, ->]
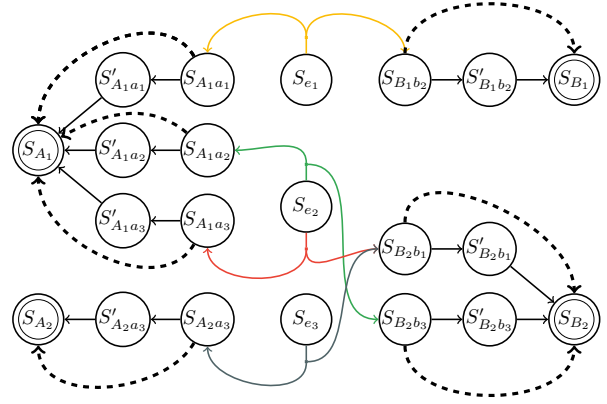
\begin{figure}
    \centering
    \scalebox{.75}{
    \begin{tikzpicture}
        \node [MOMIMRRedNodeStyle] (SA1) at (-4, 3) {$S_{A_1}$}; \node [draw, circle, minimum size = .7cm] () at (-4, 3) {};
        \node [MOMIMRRedNodeStyle] (SA1a1p) at (-2.5, 4.25) {$S_{A_1a_1}'$};
        \node [MOMIMRRedNodeStyle] (SA1a1) at (-1, 4.25) {$S_{A_1a_1}$};
        \node [MOMIMRRedNodeStyle] (SA1a2p) at (-2.5, 3) {$S_{A_1a_2}'$};
        \node [MOMIMRRedNodeStyle] (SA1a2) at (-1, 3) {$S_{A_1a_2}$};
        \node [MOMIMRRedNodeStyle] (SA1a3p) at (-2.5, 1.75) {$S_{A_1a_3}'$};
        \node [MOMIMRRedNodeStyle] (SA1a3) at (-1, 1.75) {$S_{A_1a_3}$};
        
        \node [MOMIMRRedNodeStyle] (SA2) at (-4, 0) {$S_{A_2}$}; \node [draw, circle, minimum size = .7cm] () at (-4, 0) {};
        \node [MOMIMRRedNodeStyle] (SA2a3p) at (-2.5, 0) {$S_{A_2a_3}'$};
        \node [MOMIMRRedNodeStyle] (SA2a3) at (-1, 0) {$S_{A_2a_3}$};

        \node [MOMIMRRedNodeStyle] (SB1b2) at (2.5, 4.25) {$S_{B_1b_2}$};
        \node [MOMIMRRedNodeStyle] (SB1b2p) at (4, 4.25) {$S_{B_1b_2}'$};
        \node [MOMIMRRedNodeStyle] (SB1) at (5.5, 4.25) {$S_{B_1}$}; \node [draw, circle, minimum size = .7cm] () at (5.5, 4.25) {};
        
        \node [MOMIMRRedNodeStyle] (SB2b1) at (2.5, 1.25) {$S_{B_2b_1}$};
        \node [MOMIMRRedNodeStyle] (SB2b1p) at (4, 1.25) {$S_{B_2b_1}'$};
        \node [MOMIMRRedNodeStyle] (SB2b3) at (2.5, 0) {$S_{B_2b_3}$};
        \node [MOMIMRRedNodeStyle] (SB2b3p) at (4, 0) {$S_{B_2b_3}'$};     
        \node [MOMIMRRedNodeStyle] (SB2) at (5.5, 0) {$S_{B_2}$}; \node [draw, circle, minimum size = .7cm] () at (5.5, 0) {};
        
        \node [MOMIMRRedNodeStyle] (Se1) at (.75, 4.25) {$S_{e_1}$}; \node [inner sep = 0cm, draw, rectangle, color=OptionColor2, fill=OptionColor2] (Se1p) at (.75, 5) {};
        \node [MOMIMRRedNodeStyle] (Se2) at (.75, 2) {$S_{e_2}$}; \node [inner sep = 0cm, draw, rectangle, color=OptionColor3, fill=OptionColor3] (Se2p) at (.75, 2.75) {}; \node [inner sep = 0cm, draw, rectangle,color=OptionColor4, fill=OptionColor4] (Se2pp) at (.75, 1.25) {};
        \node [MOMIMRRedNodeStyle] (Se3) at (.75, 0) {$S_{e_3}$}; \node [inner sep = 0cm, draw, rectangle,color=OptionColor5, fill=OptionColor5] (Se3p) at (.75, -.75) {};
        
        \draw[MomiStochAction] (SA1a1p) -- (SA1);
        \draw[MomiStochAction] (SA1a2p) -- (SA1);
        \draw[MomiStochAction] (SA1a3p) -- (SA1);
        \draw[MomiStochAction] (SA1a1) -- (SA1a1p);
        \draw[MomiStochAction] (SA1a2) -- (SA1a2p);
        \draw[MomiStochAction] (SA1a3) -- (SA1a3p);
        
        \draw[MomiStochAction] (SB1b2p) -- (SB1);
        \draw[MomiStochAction] (SB1b2) -- (SB1b2p);
        
        \draw[MomiStochAction] (SB2b1) -- (SB2b1p);
        \draw[MomiStochAction] (SB2b3) -- (SB2b3p);
        \draw[MomiStochAction] (SB2b1p) -- (SB2);
        \draw[MomiStochAction] (SB2b3p) -- (SB2);
        
        \draw[MomiStochAction] (SA2a3) -- (SA2a3p);
        \draw[MomiStochAction] (SA2a3p) -- (SA2);
        

        \draw[MomiStochAction, -, color = OptionColor2] (Se1) -- (Se1p);
        \path[black, ->, out=90, in=90, thick, color = OptionColor2] (Se1p) edge (SA1a1);
        \path[black, ->, out=90, in=90, thick, color = OptionColor2] (Se1p) edge (SB1b2);

        \draw[MomiStochAction, -, color = OptionColor3] (Se2) -- (Se2p);
        \path[black, ->, out=90, in=0, thick, color = OptionColor3] (Se2p) edge (SA1a2);
        \path[black, ->, out=0, in=180, thick, color = OptionColor3] (Se2p) edge (SB2b3);
        
        \draw[MomiStochAction, -, color = OptionColor4] (Se2) -- (Se2pp);
        \path[black, ->, out=-90, in=-90, thick, color = OptionColor4] (Se2pp) edge (SA1a3);
        \path[black, ->, out=-90, in=180, thick, color = OptionColor4] (Se2pp) edge (SB2b1);
        
        \draw[MomiStochAction, -, color = OptionColor5] (Se3) -- (Se3p);
        \path[black, ->, out=-90, in=-90, thick, color = OptionColor5] (Se3p) edge (SA2a3);
        \path[black, ->, out=0, in=180, thick, color = OptionColor5] (Se3p) edge (SB2b1);

        \path[->, dashed, out=120, in=90, ultra thick] (SA1a1) edge (SA1);
        \path[->, dashed, out=120, in=90, ultra thick] (SA1a1) edge (SA1);
        \path[->, dashed, out=140, in=30, ultra thick] (SA1a2) edge (SA1);
        \path[->, dashed, out=-120, in=-90, ultra thick] (SA1a3) edge (SA1);
        
        \path[->, dashed, out=-120, in=-90, ultra thick] (SA2a3) edge (SA2);
        
        \path[->, dashed, out=90, in=90, ultra thick] (SB1b2) edge (SB1);
        
        \path[->, dashed, out=90, in=90, ultra thick] (SB2b1) edge (SB2);
        \path[->, dashed, out=-90, in=-90, ultra thick] (SB2b3) edge (SB2);

    \end{tikzpicture}
    }
    \caption{Our \momi reduction applied to the Min-Rep problem in \Cref{fig:minrep}. $e_1 = (A_1, B_1)$, $e_2 = (A_1, B_2)$, $e_3 = (A_2, B_2)$. Stochastic options colored according to the pair in $\pi_e$ to which they correspond, branching into the two states in which they arrive with equal probability. Deterministic action given as solid black arcs. Possible point options given as dashed arcs.}
    \label{fig:minrepstochMOMIred}
\end{figure}

We now demonstrate that our reduction allows us to show that \momi cannot be $2^{\log^{1-\eps} n}$-approximated for any $\eps > 0$. Let $\OPT_\momi$ be the value of the optimal solution to $\momi(\MR)$ and let $\OPT_\MR$ be the value of the optimal Min-Rep solution to \MR. The following lemmas demonstrates the correspondence between a \momi and Min-Rep solution.

\begin{lemma}\label{lem:MRRelateOpt} 
$\OPT_\momi \leq \OPT_\MR$
\end{lemma}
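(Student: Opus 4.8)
The plan is to mirror the argument of \Cref{lem:MRRelateOptMOMIChoose}: starting from an optimal Min-Rep solution I will build a feasible point-option set for $\momi(\MR)$ whose size equals $\OPT_\MR$, which immediately yields $\OPT_\momi \le \OPT_\MR$. Concretely, let $(\gamma_A^*, \gamma_B^*)$ be an optimal Min-Rep solution of cost $\OPT_\MR$. I would define the corresponding option set $\mc{O}^*$ by including the point option from $S_{ia}$ to $S_i$ for every $a \in \gamma_A^*(A_i)$ and the point option from $S_{jb}$ to $S_j$ for every $b \in \gamma_B^*(B_j)$. By construction there is a bijection between chosen labels and options, so $|\mc{O}^*| = \OPT_\MR$; it then remains only to show that $\mc{O}^*$ is feasible, i.e.\ that value iteration on $\momi(\MR)$ converges within $\ell = 2$ iterations once $\mc{O}^*$ is added.

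First I would record the basic dynamics of the instance. Every non-goal state has optimal value $1$ (it can reach a reward-$1$ goal), while the goal states $S_i, S_j$ have value $0$ and are trivially converged. Value iteration from $V_0 = 0$ is monotone nondecreasing and bounded above by $V^* = 1$ (adding options only enlarges the max and does not change optimal values), so once a state attains value $1$ it stays converged for all later iterations; hence it suffices to drive every state to value $1$ by iteration $2$. The states $S_{ia}'$, $S_{jb}'$ reach value $1$ at iteration $1$, and $S_{ia}, S_{jb}$ reach it by iteration $2$ even without options, so the only states requiring care are the edge states $S_e$. The key computation I would verify is that a point option from $S_{ia}$ to $S_i$ delivers $Q(S_{ia}, o) = 1$ already at the first backup---its multi-time reward is exactly the reward-$1$ arrival at the goal $S_i$---so that $V_1(S_{ia}) = \max(0, 1) = 1$ for every $S_{ia}$ carrying an option, and symmetrically for each such $S_{jb}$.

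The crux is then the edge states. Fixing $e = (A_i, B_j)$, feasibility of $(\gamma_A^*, \gamma_B^*)$ gives a pair $(\bar a, \bar b) \in \pi_e$ with $\bar a \in \gamma_A^*(A_i)$ and $\bar b \in \gamma_B^*(B_j)$, so $\mc{O}^*$ contains options from both $S_{i\bar a}$ and $S_{j\bar b}$, whence $V_1(S_{i\bar a}) = V_1(S_{j\bar b}) = 1$. Since the action $\alpha_{(\bar a, \bar b)}$ at $S_e$ transitions with probability $\tfrac12$ to each of these two states, its value at the second backup is $\tfrac12 V_1(S_{i\bar a}) + \tfrac12 V_1(S_{j\bar b}) = 1$, forcing $V_2(S_e) = 1$. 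Thus every edge state converges by iteration $2$, establishing feasibility and hence the lemma. I expect this "and"/"or" encoding to be the most delicate step, and it is exactly where stochasticity is essential: the $\tfrac12/\tfrac12$ split makes $S_e$ reach value $1$ in two steps only when \emph{both} endpoints of some satisfying pair have been assigned options, faithfully capturing that an edge is satisfied by a disjunction over pairs of a conjunction over endpoints. I would also fix the convergence tolerance $\epsilon$ small enough (any $\epsilon \in (0,1)$ works, since the relevant values are $0$ before convergence and $1$ after) so that $\epsilon$-convergence coincides with exact convergence throughout the argument.
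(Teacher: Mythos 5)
Your proposal is correct and follows essentially the same route as the paper's proof: you build the option set $\mc{O}_{\gamma_A^*,\gamma_B^*}$ from the optimal Min-Rep labeling, note the cost equality, and verify feasibility by showing that for each edge $e$ a satisfying pair $(\bar a,\bar b)$ yields $V_1(S_{i\bar a})=V_1(S_{j\bar b})=1$ and hence $V_2(S_e)=1$ via the action $\alpha_{(\bar a,\bar b)}$. The extra remarks on monotonicity of value iteration and the choice of $\epsilon$ are harmless elaborations of what the paper leaves implicit.
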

\begin{proof}
Our proof translates between point options in our reduction and assignments in the input Min-Rep instance in the natural way. Given a solution $(\gamma_A, \gamma_B)$ to \MR, define $\mc{O}_{\gamma_A, \gamma_B}$ as consisting of all point options from $S_{ia}$ to $S_i$ if $a \in \gamma_A(A_i)$ and all points options from $S_{jb}$ to $S_j$ if $b \in \gamma_B(B_j)$. Let $\gamma_A^*$ and $\gamma_B^*$ be the optimal solutions to $\MR$ which is of cost $\OPT_\MR$. 

We claim that $\mc{O}_{\gamma_A^*, \gamma_B^*}$ is a feasible solution to $\momi(\MR)$ of cost $\OPT_\MR$, demonstrating that the optimal solution to $\momi(\MR)$ has cost at most $\OPT_\MR$. To see this notice that by construction the \momi cost of $\mc{O}_{\gamma_A^*, \gamma_B^*}$ is exactly the Min-Rep cost of $\gamma_A^*, \gamma_B^*$. 

We need only argue, then, that $\mc{O}_{\gamma_A^*, \gamma_B^*}$ is feasible for $\momi(\MR)$ and do so now. Notice that the value of every state in $\momi$ is $1$. Thus, we must guarantee that after 2 iterations of value iteration, every state has value $1$. However, without any options every state except for $S_e$ where $e \in E$ has value $1$ after 2 iterations of value iteration. Thus, it suffices to argue that $\mc{O}_{\gamma_A^*, \gamma_B^*}$ guarantees that every $S_e$ will have value $1$ after $2$ iterations of value iteration. Since $(\gamma_A^*, \gamma_B^*)$ is a feasible solution to \MR we know that for every $e = (A_i, B_j)$ there exists $\bar{a} \in \gamma_A^*(A_i)$ and $\bar{b} \in \gamma_B^*(B_j)$ such that $(\bar{a}, \bar{b}) \in \pi_e$; correspondingly there is some action from $S_e$ with a $.5$ probability of resulting in state $S_{i\bar{a}}$ and a $.5$ probability of resulting in state $S_{j\bar{b}}$ where $\mc{O}_{\gamma_A^*, \gamma_B^*}$ has a point option from $S_{i\bar{a}}$ to $S_i$ and a point options from $S_{j\bar{b}}$ to $S_j$. That is, $V_1(S_{i\bar{a}}) = 1$ and $V_1(S_{j\bar{b}}) = 1$. Thus, after one iteration of value iteration the values of $S_{i\bar{a}}$ and $S_{j\bar{b}}$ are both 1 and so after two iterations of value iteration the value of $S_e$ is
\begin{align*}
V_{2}(S_e) & = \max_{\alpha_{(a,b)}} .5 \cdot (V_1(S_{ia})) + .5 \cdot( V_1(S_{jb}))\\
& \geq .5 \cdot (V_1(S_{i\bar{a}})) + .5 \cdot( V_1(S_{j\bar{b}}))\\
& = 1.
\end{align*}
Thus, $V_{2}(S_e) = 1$ for every $S_e$ and so we conclude that after two iterations of value iteration every state has converged on its value.
\end{proof}

We now show that a solution to $\momi(\MR)$ corresponds to a solution to \MR. For the remainder of this section let $\gamma^\mc{O}_A(A_i) : = \{a : O(S_{ja}, S_j) \in \mc{O} \}$ and  $\gamma^\mc{O}_B(B_j) : = \{b : O(S_{jb}, S_j) \in \mc{O} \}$ where for the remainder of this section $O(S, S')$ stands for a point option with initiation state $S$ and termination state $S'$.
\begin{lemma}\label{lem:MRToMomiStoch}
For any feasible solution  $\mc{O}$ to $\momi(\MR)$ we have $(\gamma^\mc{O}_A, \gamma^\mc{O}_B)$ is a feasible solution to \MR of cost $|\mc{O}|$.
\end{lemma}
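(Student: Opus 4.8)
The plan is to mirror the argument of \Cref{lem:MRToMomiChoose}, but to use the stochastic transitions out of each $S_e$ to recover the conjunctive (``and'') structure of a satisfying assignment. The cost bookkeeping is immediate once feasibility is established, so the substance is to show that a feasible $\mc{O}$ forces, for every edge, the existence of a satisfying pair whose two labels are \emph{both} ``selected'' by $\mc{O}$.

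First I would invoke feasibility of $\mc{O}$ for $\momi(\MR)$: since $\ell = 2$ and $V^*(s) = 1$ for every state $s$, feasibility means exactly that $V_2(S_e) = 1$ for every edge state $S_e$ (all other states already converge within two iterations with no options, as in the proof of \Cref{lem:MRRelateOpt}). No point option can initiate at $S_e$, so the only backups available there are the primitive stochastic actions $\alpha_{(a,b)}$, giving
\[
V_2(S_e) \;=\; \max_{(a,b)\in \pi_e}\Big(\tfrac12 V_1(S_{ia}) + \tfrac12 V_1(S_{jb})\Big).
\]
Since all values lie in $[0,1]$, $V_2(S_e) = 1$ forces a pair $(a,b)\in\pi_e$ with $V_1(S_{ia}) = 1$ and $V_1(S_{jb}) = 1$ \emph{simultaneously}; this is precisely where stochasticity does the work, encoding the ``and'' of a satisfying assignment as the equal-weight sum over two successors.

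Next I would characterize the one-iteration value of a label state, the crux being the claim that $V_1(S_{ia}) = 1$ holds only if the canonical point option $O(S_{ia}, S_i) \in \mc{O}$. Starting from $V_0 = 0$, the unique primitive action at $S_{ia}$ reaches $S_{ia}'$ with reward $0$ and so contributes $V_0(S_{ia}') = 0$; hence a one-step value of $1$ must come from an option $o$ initiating at $S_{ia}$ with reward model $R_\gamma(S_{ia}, o) = 1$. The only trajectory leaving $S_{ia}$ is the forced path $S_{ia} \to S_{ia}' \to S_i$, and because $S_i$ is an absorbing goal, the unique point option from $S_{ia}$ that accrues reward $1$ is the one terminating at $S_i$, i.e. $O(S_{ia}, S_i)$. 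Thus $V_1(S_{ia}) = 1$ implies $a \in \gamma^\mc{O}_A(A_i)$, and symmetrically $V_1(S_{jb}) = 1$ implies $b \in \gamma^\mc{O}_B(B_j)$. Combined with $(a,b)\in\pi_e$, this shows $(\gamma^\mc{O}_A, \gamma^\mc{O}_B)$ satisfies the arbitrary edge $e$, so it is feasible for \MR.

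Finally, for the cost I would record the bijection between canonical options and labels, namely $O(S_{ia}, S_i) \leftrightarrow$ label $a$ at $A_i$, each contributing one unit to $|\mc{O}|$ and to $\sum_i|\gamma^\mc{O}_A(A_i)| + \sum_j |\gamma^\mc{O}_B(B_j)|$. By the characterization above, any non-canonical point option in $\mc{O}$ cannot help make any $V_2(S_e) = 1$, so it may be discarded without breaking feasibility and without increasing the count; we may therefore assume without loss of generality that $\mc{O}$ consists solely of canonical options, yielding $\cost_{\text{Min-Rep}}(\gamma^\mc{O}_A, \gamma^\mc{O}_B) = |\mc{O}|$. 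The step I expect to be the main obstacle is the one-backup characterization of $V_1(S_{ia}) = 1$: one must rule out every alternative point option (in particular options initiating at $S_{ia}'$ or terminating short of the goal) and lean on $S_i$ being absorbing so that the only reward-$1$ option from $S_{ia}$ is the canonical one, and then handle the cost cleanup carefully enough to justify the stated equality rather than a mere inequality.
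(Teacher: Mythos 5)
Your proof is correct and follows essentially the same route as the paper's: feasibility forces $V_2(S_e)=1$, the equal-weight stochastic backup forces both $V_1(S_{i\bar a})=V_1(S_{j\bar b})=1$ for some $(\bar a,\bar b)\in\pi_e$, and the forced two-hop path to the absorbing goal means only the canonical point options can achieve this, so the edge is satisfied. Your explicit WLOG discarding of non-canonical options to justify the exact cost equality is slightly more careful than the paper, which simply asserts the cost correspondence ``by construction,'' but it is the same argument in substance.
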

\begin{proof}
Notice that by construction the Min-Rep cost of $(\gamma^{\mc{O}}_A, \gamma^{\mc{O}}_B)$ is exactly $|\mc{O}|$. Thus, we need only prove that $(\gamma^{\mc{O}}_A, \gamma^{\mc{O}}_B)$ is a feasible solution for \MR. 

We do so now. Consider an arbitrary edge $e = (A_i, B_j) \in E$; we wish to show that $(\gamma^{\mc{O}}_A, \gamma^{\mc{O}}_B)$ satisfies $e$. Since $\mc{O}$ is a feasible solution we know that after two iterations of value iteration every state must converge on its value (up to an $\epsilon$ factor which we can ignore by our above assumption that $\eps < .5$). Moreover, notice that the value of every state in $\momi(\MR)$ is $1$. Thus, it must be the case that for every $S_e$ we have $V_2(S_e) = 1$ for $e = (A_i, B_j)$. It follows, then, that there is some action $\alpha_{(\bar{a},\bar{b})}$ where $(\bar{a}, \bar{b}) \in \pi_{(A_i, B_j)}$ such that 
\begin{align*}
1 = V_2(S_e) = .5 \cdot (V_1(S_{i\bar{a}})) + .5 \cdot( V_1(S_{j\bar{b}})).
\end{align*}
Since the value of every state is at most $1$, it follows that $V_1(S_{i\bar{a}}) = V_1(S_{j\bar{b}}) = 1$. However, since $V_1(S_{i\bar{a}})$ and $V_1(S_{j\bar{b}})$ are both two hops from the only goal reachable from them ($S_i$ and $S_j$ respectively) it must be the case that there is some point option from $S_{i\bar{a}}$ to $S_i$ and $S_{j\bar{b}}$ to $S_j$. Thus, by definition of $(\gamma^{\mc{O}}_A, \gamma^{\mc{O}}_B)$ we then have $\bar{a} \in \gamma^{\mc{O}}_A$ and $\bar{b} \in \gamma^{\mc{O}}_B$. Since $(\bar{a}, \bar{b}) \in \pi_{(A_i, B_j)}$ it follows that arbitrary edge $e =(A_i, B_j)$ is satisfied. Thus, every edge in $E$ is satisfied and so $(\gamma^{\mc{O}}_A, \gamma^{\mc{O}}_B)$ is a feasible solution for \MR.
\end{proof}

Finally, we conclude the hardness of approximation of \momi.
\begin{proof}[Theorem 4.3 Proof]
Assume $\text{NP} \not \subseteq \text{DTIME}(n^ {\poly \log n})$ and for the sake of contradiction that there exists an $\eps > 0$ for which a polynomial-time algorithm $\mc{A}_\momi$ can $2^{\log^{1 - \eps} n }$-approximate \momi. We use $\mc{A}_\momi$ to $2^{\log^{1 - \eps'} \tilde{n}}$ approximate Min-Rep for a fixed constant $\eps' > 0$ in polynomial-time in $\tilde{n}$, thereby contradicting \Cref{lem:minRepHard}. Again, $\tilde{n}$ is the number of vertices in the graph of the Min-Rep instance.

We begin by noting that the relevant quantities in $\momi(\MR)$ are polynomially-bounded. Let $\tilde{n} := |A| + |B|$ be the number of vertices in our \MR instance. Notice that the number of states in the MDP, $n$, in our $\momi(\MR)$ instance is at most $O(\tilde{n} + 2|A||\Sigma_A|  + |B||\Sigma_B| + |E|) \leq \tilde{n}^c$ for some fixed constant $c$ by the aforementioned assumption that $\Sigma_A$ and $\Sigma_B$ are polynomially-bounded in $\tilde{n}$.\footnote{It is worth noting, also, that since we create at most $\sum_e |\pi_e|$ actions for any state, the number of total actions in our MDP is at most polynomial in $\tilde{n}$.}

Our polynomial-time approximation algorithm to approximate instance \MR of Min-Rep is as follows: Run $\mc{A}_\momi$ on $\momi(\MR)$ to get back option set $\mc{O}$. Return $(\gamma^\mc{O}_A, \gamma^\mc{O}_B)$ as defined above as our solution to \MR.

We first argue that our algorithm is polynomial time in $\tilde{n}$. For each vertex in \MR, we create a polynomial number of states and actions. Thus, the number of states in $\momi(\MR)$ is polynomially-bounded in $\tilde{n}$ and so $\mc{A}_\momi$ runs in time polynomial in $\tilde{n}$. A polynomial runtime of our algorithm immediately follows.

We now argue that our algorithm is a $2^{\log^{1 - \eps'} \tilde{n} }$-approximation for Min-Rep for some $\eps' > 0$. Applying \Cref{lem:MRToMomiStoch}, the approximation of $\mc{A}_\momi$ and then \Cref{lem:MRRelateOpt}, we have that the Min-Rep cost of $(\gamma^\mc{O}_A, \gamma^\mc{O}_B)$ is
\begin{align*}
\cost_{\text{Min-Rep}}(\gamma^\mc{O}_A, \gamma^\mc{O}_B) & = |\mc{O}|\\ &\leq 2^{\log^{1-\eps} n } \OPT_\momi \\
&\leq 2^{\log^{1-\eps} n } \OPT_\MR
\end{align*}

Thus, $(\gamma^\mc{O}_A, \gamma^\mc{O}_B)$ is a $2^{\log^{1-\eps}n}$ approximation for the optimal Min-Rep solution where $n$ is the number of states in the MDP of $\momi(\MR)$. Now recalling that $n \leq \tilde{n}^c$ for fixed constant $c$. We therefore have that $(\gamma^\mc{O}_A, \gamma^\mc{O}_B)$ is a $2^{\log^{1-\eps} \tilde{n}^c} = 2^{c^{1- \eps} \log^{1-\eps} \tilde{n}} \leq c' \cdot 2^{ \log^{1-\eps} \tilde{n}}$ approximation for a constant $c'$. Choosing $\eps$ sufficiently small, we have that $c' \cdot 2^{ \log^{1-\eps} \tilde{n}} \leq 2^{ \log^{1-\eps'} \tilde{n}}$ for sufficiently large $\tilde{n}$. 

Thus, our polynomial-time algorithm is a $2^{\log^{1 - \eps'} \tilde{n} }$-approximation for Min-Rep for $\eps' > 0$, thereby contradicting \Cref{lem:minRepHard}. We conclude that \momi cannot be $2^{\log^{1- \epsilon} n}$-approximated.
\end{proof}

\subsection{\mimoalg{}}
In this subsection we show the following theorem (we show Theorem 5 later):

\setcounter{theorem}{5}
\begin{theorem}
    \mimoalg{} has following properties:
    \begin{enumerate}
        \item \mimoalg{} runs in polynomial time.
        \item If the MDP is deterministic, it has a bounded suboptimality of $O(\log^* k)$.
        \item The number of iterations to solve the MDP using the acquired options is upper bounded by $P(\mc{C})$.
    \end{enumerate}
\end{theorem}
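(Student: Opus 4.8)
The plan is to prove the three claims in the order (1), (3), (2), since the suboptimality bound of part~(2) is most naturally derived from the iteration bound of part~(3). Throughout I would generate, for a returned center set $\mc{C}$, one point option from each $c \in \mc{C}$ to the goal $g$.

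\emph{Part~1 (polynomial time).} The only nontrivial stage is computing $d$. I would solve the option-free MDP a single time, recording each iterate $V_i$ (for $i = 1,\dots,b$) symbolically as a function of $V_1$; under the paper's standing assumptions (a single absorbing goal reached with probability~$1$, no positive-reward cycles) the horizon $b$ is polynomially bounded, and in the deterministic case is at most the length of the longest optimal path to $g$, hence at most $n$. Then $d(s,s')$ is read off for each ordered pair by substituting $V_1(s') = V^*(s')$ — the effect of a point option from $s'$ to $g$ — and locating the first index at which $s$ becomes $\epsilon$-optimal; this is polynomial per pair over the $n^2$ pairs. The asymmetric $k$-center subroutine of \namecite{archer2001two} is polynomial, and the greedy padding step adds $O(k)$ options with polynomial work each, so the whole algorithm is polynomial.

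\emph{Part~3 (iteration bound).} I would argue that when the options from all centers in $\mc{C}$ are available, value iteration makes each state $s$ become $\epsilon$-optimal through whichever center is nearest in $d$. Since every generated point option terminates at $g$ (whose value is correct from the outset) and the Bellman backup is a max over the action/option set, the convergence index of $s$ is $\min_{c \in \mc{C}} d(s,c)$ up to the additive offset built into the definition of $d$ (the ``minus one''); the key sub-point is that combining several point options cannot beat using the single best one, because each already jumps straight to $g$. Taking the max over states gives $\max_{s}\min_{c} d(s,c) = P(\mc{C})$, so $L_{\epsilon, V_0}(\mc{O}) \le P(\mc{C})$.

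\emph{Part~2 (suboptimality).} Restricting to deterministic MDPs, I would first record that $d$ obeys the asymmetric triangle inequality, so that $(\mc{U}, d, k)$ is a legitimate asymmetric $k$-center instance and the $O(\log^* k)$ guarantee of \namecite{archer2001two} applies. Next I would show MIMO over point options is, in this setting, equivalent to asymmetric $k$-center: by the domination/swapping argument already used in the proof of Theorem~1, any point option may be replaced — without increasing the number of iterations — by the point option with the same initiation state but termination state $g$, since routing optimal value directly to the goal dominates routing it to any intermediate state. Thus an optimal MIMO solution corresponds to a set of $k$ centers whose iteration count is, by Part~3, its value $P$; hence the optimal $k$-center value $P^*$ satisfies $P^* \le \OPT_{\text{MIMO}}$ (up to the additive offset). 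Archer's algorithm returns $\mc{C}$ with $P(\mc{C}) \le O(\log^* k)\, P^*$, and since greedy padding only adds options (which can only decrease $P$), the final output inherits this. Combining with Part~3 and absorbing the additive offset into the multiplicative factor — legitimate because $\OPT_{\text{MIMO}} \ge 1$ — yields $L_{\epsilon, V_0}(\mc{O}) \le O(\log^* k)\, \OPT_{\text{MIMO}}$.

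The main obstacle is the reduction of MIMO over \emph{all} point options to asymmetric $k$-center: I must justify that goal-terminating options dominate in a way that preserves the \emph{iteration count}, not merely feasibility, for the optimization (value) version rather than the decision version handled in Theorem~1. A secondary but essential technical point is verifying that $d$ satisfies the triangle inequality in the deterministic case, so that the metric-style guarantee of \namecite{archer2001two} is applicable; and the ``$\pm 1$'' bookkeeping relating $L_{\epsilon, V_0}$ to $P(\mc{C})$, while routine, must be carried consistently so the additive offset does not degrade the multiplicative approximation factor.
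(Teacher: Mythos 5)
Your overall decomposition --- compute $d$ by solving the option-free MDP and substituting $V_1(s') = V^*(s')$, bound $L_{\epsilon,V_0}(\mc{O})$ by $P(\mc{C})$, and reduce the deterministic case to asymmetric $k$-center via the goal-termination domination lemma plus the triangle inequality --- is the same route the paper takes, and your identification of the terminal-state swapping argument and the triangle inequality as the load-bearing steps for Part~2 is exactly right.

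However, one step in your Part~3 is wrong as stated: the claim that ``combining several point options cannot beat using the single best one, because each already jumps straight to $g$,'' and the resulting assertion that the convergence index of $s$ equals $\min_{c\in\mc{C}} d(s,c)$. This fails for stochastic MDPs. If the optimal action at $s_0$ moves to $s_1$ or $s_2$ with probability $0.5$ each, then $s_0$ cannot converge until \emph{both} successors have converged; an option at $s_1$ alone or at $s_2$ alone does not help $s_0$, but options at both do, so the option set strictly beats the best single option and $d(s_0,\{s_1,s_2\}) < \min\bigl(d(s_0,s_1), d(s_0,s_2)\bigr)$. The paper constructs exactly this counterexample, and it is the reason Part~3 is stated as an upper bound rather than an equality. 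Your conclusion $L_{\epsilon,V_0}(\mc{O}) \le P(\mc{C})$ survives, but for the opposite reason: adding options can only \emph{reduce} the number of iterations, so $d(s,\mc{C}) \le \min_{c\in\mc{C}} d(s,c)$, and hence $L_{\epsilon,V_0}(\mc{O}) = \max_s d(s,\mc{C}) \le \max_s \min_{c} d(s,c) = P(\mc{C})$. The equality you want does hold when the MDP is deterministic --- each state has a single optimal successor, so the interaction above cannot occur --- and that is precisely where it is needed, namely in Part~2 to obtain $P(\mc{C}^*) = L_{\epsilon,V_0}(\mc{O}^*)$ so that the $O(\log^* k)$ guarantee on $P$ transfers to a guarantee on $L$. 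You should therefore confine the equality claim to the deterministic case and justify Part~3 by monotonicity alone.
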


\begin{theorems}
    \mimoalg{} runs in polynomial time.
\end{theorems}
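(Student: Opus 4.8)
The plan is to decompose \mimoalg{} into its constituent steps---(i) computing the distance function $d_\epsilon$, (ii) solving the asymmetric $k$-center instance, and (iii) generating the output point options---and to argue that each runs in time polynomial in the input size, so that their composition does as well.

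The crux is step (i). First I would observe that, under the standing assumptions (a single absorbing goal $g$, bounded per-step reward, and no positive-reward cycle on any policy's trajectory), plain value iteration on $M$ with action set $\mc{A}$ reaches its $\epsilon$-optimal values after a number of passes $b$ that is polynomially bounded in the input size; this bounds the convergence horizon $d'_\epsilon(s)$ for every state $s$. Given this bound, I would compute $d_\epsilon(s, s')$ for all $|\mc{S}|^2$ pairs. The direct route is: for each candidate initiation state $s'$, run value iteration on $M$ augmented by the single point option from $s'$ to $g$ (which forces $V_1(s') = V^*(s')$), and record, for each $s$, the first pass at which $V(s)$ becomes $\epsilon$-optimal, capped at $d'_\epsilon(s) - 1$ per the definition of $d_\epsilon$. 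This is $|\mc{S}|$ runs of value iteration, each of $b$ poly-bounded passes over $|\mc{S}| \cdot |\mc{A}|$ backups, hence polynomial overall. Alternatively, the single-solve trick described in the text records each $V_i(s)$ as a function of the post-first-pass values and substitutes $V^*(s')$, achieving the same asymptotics from one value-iteration run.

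Steps (ii) and (iii) then follow from cited results and direct inspection. For (ii), the asymmetric $k$-center instance $(\mc{U}, d, k)$ has size polynomial in $|\mc{S}|$, and the approximation algorithm of \namecite{archer2001two} solves it in polynomial time; the supplementary greedy step that adds $\log k$ options at a time performs at most a polynomial number of evaluations of $P(\cdot)$, each of which is a polynomial-time $\max$-$\min$ computation over the table $d$. For (iii), emitting one point option per returned center is $O(k) \le O(|\mc{S}|)$ work. Summing the three bounds gives a polynomial total running time.

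The main obstacle is the polynomial bound on the convergence horizon $b$ in step (i): everything else reduces either to a bounded number of value-iteration sweeps or to a cited polynomial-time subroutine. I would therefore spend the most care establishing that the assumptions on $M$ force $\epsilon$-convergence within polynomially many passes---either via the standard geometric contraction bound $b = O\!\left(\tfrac{1}{1-\gamma}\log\tfrac{1}{\epsilon(1-\gamma)}\right)$ when $\gamma < 1$, or, when $\gamma = 1$, via the no-positive-cycle assumption, which makes the optimal value a finite shortest-path-style quantity that stabilizes within $O(|\mc{S}|)$ passes.
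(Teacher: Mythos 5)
Your proposal is correct and follows essentially the same route as the paper's proof: decompose \mimoalg{} into computing $d_\epsilon$ via at most $|\mc{S}|$ value-iteration solves, invoking the polynomial-time asymmetric $k$-center approximation of \namecite{archer2001two} (plus the greedy expansion step), and emitting one point option per center. The only difference is that you justify the polynomial bound on the number of VI passes explicitly (via contraction for $\gamma<1$ and the no-positive-cycle assumption for $\gamma=1$), a point the paper simply asserts by citing that solving an MDP takes polynomial time.
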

\begin{proof}
    Each step of the procedure runs in polynomial time.
    
    (1) Solving an MDP takes polynomial time. To compute $d$ we need to solve MDPs at most $|\mc{S}|$ times. Thus, it runs in polynomial time.
    
    (2) The approximation algorithm we deploy for solving the asymmetric-k center which runs in polynomial time \cite{archer2001two}. 
    Because the procedure by \namecite{archer2001two} terminates immediately after finding a set of options which guarantees the suboptimality bounds, it tends to find a set of options smaller than $k$. 
    In order to use the rest of the options effectively within polynomial time, we use a procedure $\text{Expand}$ to greedily add a few options at once until it finds all $k$ options. We enumerate all possible set of options of size $r = \lceil \log k \rceil$ (if $|\mc{O}| + \log k > k$ then we set $r = k - |\mc{O}|$) and add a set of options which minimizes $\ell$ (breaking ties randomly) to the option set $\mc{O}$. We repeat this procedure until $|\mc{O}| = k$.
    This procedure runs in polynomial time. The number of possible option set of size $r$ is $_rC_n = O(n^r) = O(k)$. We repeat this procedure at most $\lceil k / \log k \rceil$ times, thus the total computation time is bounded by $O(k^2 / \log k)$.
    
    (3) Immediate.
    
    Therefore, \mimoalg{} runs in polynomial time.
\end{proof}

Before we show that it is sufficient to consider a set of options with its terminal state set to the goal state of the MDP.

\begin{lemma}
\label{th:terminal}
    There exists an optimal option set for MIMO and MOMI with all terminal state set to the goal state.
\end{lemma}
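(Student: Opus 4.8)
The plan is to start from any optimal option set $\mc{O}$ (for either MIMO or MOMI) and rewrite each option so that it terminates at the goal $g$, without increasing either the number of options or the number of iterations to convergence. Concretely, for each point option $o = (\{\mc{I}_o\}, \pi, \{\beta_o\}) \in \mc{O}$ I would define a replacement point option $\hat{o}$ whose initiation state is the same $\mc{I}_o$, whose termination state is $g$, and whose policy is the optimal policy $\pi^*$ followed from $\mc{I}_o$ until $g$ is reached. This is well-defined by the standing assumption that every optimal policy reaches the absorbing goal $g$ with probability $1$. Let $\mc{O}' = \{\hat{o} : o \in \mc{O}\}$; since distinct options sharing an initiation state collapse to the same $\hat{o}$, we have $|\mc{O}'| \le |\mc{O}|$, which already settles the cardinality side of the claim.

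The core of the argument is a domination lemma: running VI with $\mc{O}'$ is pointwise at least as close to $V^*$, at every iteration, as running VI with $\mc{O}$. Writing $V_i$ and $V_i'$ for the two VI sequences started from the common $V_0$, I would prove by induction on $i$ that $V_i \le V_i' \le V^*$, working in the standard regime $V_0 \le V^*$ with $V_0(g) = V^*(g)$ (so that, $g$ being absorbing with zero reward, $V_i(g) = V_i'(g) = V^*(g)$ for all $i$, and both sequences are monotone and bounded above by $V^*$). The two facts that drive the inductive step are: (i) the backup of the \emph{replacement} option at $\mc{I}_o$ equals $V^*(\mc{I}_o)$ exactly, because a point option terminates only at its single termination state so its multi-time model is concentrated there, and Bellman consistency of $V^*$ along $\pi^*$ gives $R_\gamma(\mc{I}_o, \hat{o}) + T_\gamma(\mc{I}_o, \hat{o}, g)\,V_i'(g) = R_\gamma(\mc{I}_o, \hat{o}) + T_\gamma(\mc{I}_o, \hat{o}, g)\,V^*(g) = V^*(\mc{I}_o)$; and (ii) the backup of the \emph{original} option at $\mc{I}_o$ is at most $V^*(\mc{I}_o)$, since $R_\gamma(\mc{I}_o, o) + T_\gamma(\mc{I}_o, o, \beta_o)\,V_i(\beta_o) \le R_\gamma(\mc{I}_o, o) + T_\gamma(\mc{I}_o, o, \beta_o)\,V^*(\beta_o) \le V^*(\mc{I}_o)$, the last inequality being the value of the (non-stationary) policy that follows $o$'s policy to $\beta_o$ and then acts optimally.

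Given (i) and (ii), the inductive step compares the two maxima defining $V_{i+1}$ and $V_{i+1}'$ term by term. Every primitive-action backup is monotone in the value function, hence at least as large under $V_i'$ as under $V_i$ by the hypothesis $V_i' \ge V_i$; and every option $o \in \mc{O}$, which only enters the maximum at its initiation state $\mc{I}_o$ and is bounded there by $V^*(\mc{I}_o)$, is dominated by the backup of $\hat{o} \in \mc{O}'$ at $\mc{I}_o$, which equals $V^*(\mc{I}_o)$. Hence $V_{i+1}' \ge V_{i+1}$, and standard VI monotonicity preserves $V_{i+1}' \le V^*$, closing the induction. Consequently $V^*(s) - V_i'(s) \le V^*(s) - V_i(s)$ for all $s$ and $i$, so whenever $V_i$ is $\epsilon$-optimal so is $V_i'$, giving $L_{\epsilon, V_0}(\mc{O}') \le L_{\epsilon, V_0}(\mc{O})$. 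Combined with $|\mc{O}'| \le |\mc{O}|$, the set $\mc{O}'$ is optimal for MIMO (it respects the budget $k$ and attains a no-larger iteration count) and for MOMI (it stays feasible for the iteration bound $\ell$ and has no-larger cardinality), proving the lemma.

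I would expect the main obstacle to be pinning down the value-domination cleanly in the \emph{stochastic} case, where the passage from $V_i(\beta_o) \le V^*(\beta_o)$ to the bound $V^*(\mc{I}_o)$ must be justified through the multi-time model rather than a simple path length, and where one must verify that it is precisely the \emph{optimality} of the replacement option's policy that forces its backup to hit $V^*(\mc{I}_o)$ (a merely goal-reaching but suboptimal policy would give only an upper bound and could fail to dominate $o$). The remaining bookkeeping — monotonicity of VI, transfer of $\epsilon$-optimality, and the MIMO/MOMI feasibility accounting — is routine.
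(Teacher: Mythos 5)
Your proof is correct, but it takes a different route from the paper's. The paper disposes of this lemma in three lines by appealing to the (asserted, footnoted) triangle inequality of the distance function $d_\epsilon$: an option terminating at some $s'' \neq g$ can only help a state $s$ converge after $s''$ itself has converged, whereas retargeting the termination to $g$ makes the initiation state exactly optimal after one backup, so the swap never increases $L_{\epsilon,V_0}$. You instead give a self-contained domination argument directly on the value-iteration iterates: replace each $o$ by $\hat{o}$ from $\mc{I}_o$ to $g$ following $\pi^*$, and show by induction that $V_i \le V_i' \le V^*$, using that the backup of $\hat{o}$ at $\mc{I}_o$ equals $V^*(\mc{I}_o)$ exactly (since $V^*(g)=0$ and $R_\gamma(\mc{I}_o,\hat{o})=V^*(\mc{I}_o)$) while the backup of $o$ is bounded above by $V^*(\mc{I}_o)$ via the execute-$o$-then-act-optimally policy. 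This buys rigor the paper's version lacks --- in particular you correctly identify that the replacement policy must be \emph{optimal}, not merely goal-reaching, for the domination to close, and your argument handles the stochastic case explicitly through the multi-time model rather than through path lengths. Two small caveats: your induction needs the standing hypothesis $V_0 \le V^*$ with $V_0(g)=V^*(g)$, which holds in the paper's canonical setting ($V_0=0$, nonnegative rewards, zero-reward absorbing goal) but is not stated in the problem definitions, and the same pessimistic-initialization assumption is silently used by the paper's own distance-based proof as well; and your observation that $|\mc{O}'| \le |\mc{O}|$ because options sharing an initiation state collapse is a nice extra that the paper does not bother with. No gap.
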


\begin{proof}
    Assume there exists an option with terminal state set to a state other than the goal state in the optimal option set $\mc{O}$. By triangle inequality, swapping the terminal state to the goal state will monotonically decrease $d(s, g)$ for every state. By swapping every such option we can construct an option set $\mc{O}'$ with $L_{\epsilon, V_0}(\mc{O}') \leq L_{\epsilon, V_0}(\mc{O})$. 
\end{proof}

Lemma \label{th:terminal} imply that discovering the best option set among option sets with their terminal state fixed to the goal state is sufficient to find the best option set in general.
Therefore, our algorithms seek to discover options with termination state fixed to the goal state.

Using the option set acquired, the number of iterations to solve the MDP is bounded by $P(\mc{C})$.
To prove this we first generalize the definition of the distance function to take a state and a set of states as arguments $d_{\epsilon}: \mc{S} \times 2^{\mc{S}} \rightarrow \mathbb{N}$.
Let $d_\epsilon(s, \mc{C})$ the number of iterations for $s$ to converge $\epsilon$-optimal if every state $s' \in \mc{C}$ has converged to $\epsilon$-optimal: $d_\epsilon(s, \mc{C}) := \min (d'_\epsilon(s), 1 + d'_\epsilon(s, \mc{C})) - 1$. As adding an option will never make the number of iterations larger, 

\begin{lemma}
    \begin{equation}
    \label{eq:set-distance}
        d(s, \mc{C}) \leq \min_{s' \in \mc{C}} d(s, s').
    \end{equation}
\end{lemma}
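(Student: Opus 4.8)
The plan is to reduce the set-valued inequality to a family of pairwise inequalities. It suffices to prove that $d(s,\mc{C}) \leq d(s,s')$ for \emph{every} individual $s' \in \mc{C}$; taking the minimum over $s' \in \mc{C}$ on the right-hand side then yields the stated bound. To this end I would first unfold both quantities using the definitions already given: writing $d'(s)$ for $d'_\epsilon(s)$ and $d'(s,\cdot)$ for the iteration upper bound, we have $d(s,\mc{C}) = \min\bigl(d'(s)-1,\ d'(s,\mc{C})\bigr)$ and $d(s,s') = \min\bigl(d'(s)-1,\ d'(s,s')\bigr)$. The two expressions share the common term $d'(s)-1$, and since $\min$ is monotone in each argument, the whole claim collapses to the single inequality $d'(s,\mc{C}) \leq d'(s,s')$ for each $s' \in \mc{C}$.

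To establish that inequality I would invoke the monotonicity of value iteration that is announced just before the lemma (``adding an option will never make the number of iterations larger''). Concretely, recall from the earlier construction that each iterate $V_i(s)$ can be expressed as a nondecreasing function of the first-iteration values $V_1(\cdot)$, because the Bellman optimality operator on the multi-time model is monotone and, with $V_0 = 0$, the iterates rise toward $V^*$ from below. Treating $s'$ as converged corresponds to raising the single entry $V_1(s')$ to $V^*(s')$, while treating all of $\mc{C}$ as converged corresponds to raising \emph{every} entry $V_1(s'')$, $s'' \in \mc{C}$, to $V^*(s'')$. Since $\{s'\} \subseteq \mc{C}$, the $\mc{C}$-initialization dominates the $\{s'\}$-initialization coordinatewise.

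By the monotone-function-of-$V_1$ property, the iterates at $s$ produced under the $\mc{C}$-initialization are therefore pointwise at least those produced under the $\{s'\}$-initialization. Because all iterates remain bounded above by $V^*(s)$, and $\epsilon$-optimality at $s$ simply means $V^*(s) - V_i(s) < \epsilon$, the larger iterates cross the $\epsilon$-optimality threshold no later, giving $d'(s,\mc{C}) \leq d'(s,s')$. Substituting into the two unfolded expressions yields $d(s,\mc{C}) \leq d(s,s')$ for each $s' \in \mc{C}$, and minimizing over $\mc{C}$ completes the proof. The one place that genuinely needs care is the monotonicity step: since $d'(s,\cdot)$ is defined only as an \emph{upper bound} on the number of iterations rather than the exact count, one must use the specific upper bound arising from the ``$V_i$ as a function of $V_1$'' construction and verify that this particular function is monotone in the initialized set. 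Once that is pinned down, the superset-dominates-subset observation makes the remainder immediate.
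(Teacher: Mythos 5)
Your proof is correct and follows essentially the same route as the paper, which justifies the lemma with the single remark that ``adding an option will never make the number of iterations larger''---i.e., initializing a superset of states to their optimal values can only accelerate convergence. Your write-up simply makes explicit the monotonicity of the Bellman operator and the unfolding of the two $\min$ expressions that the paper leaves implicit.
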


Using this, we show the following proposition.

\begin{theorems}
    The number of iterations to solve the MDP using the acquired options is upper bounded by $P(\mc{C})$.
\end{theorems}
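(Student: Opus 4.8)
The plan is to bound $L_{\epsilon,V_0}(\mc{O})$ state-by-state and then aggregate using the set-distance lemma together with the definition of the asymmetric $k$-center objective. First I would fix notation: by Lemma~\ref{th:terminal} it suffices to consider option sets whose termination state is the goal $g$, and the set $\mc{O}$ returned by \mimoalg{} is exactly the collection of point options $\{(\mc{I}_o = c,\ \beta_o = g) : c \in \mc{C}\}$ induced by the $k$-center solution $\mc{C}$. Recall that $L_{\epsilon,V_0}(\mc{O})$ is the first iteration after which \emph{every} state is simultaneously $\epsilon$-optimal, so it equals the maximum over $s \in \mc{S}$ of the per-state convergence iteration under $\mc{O}$; it therefore suffices to bound that per-state count for each $s$.

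The key step is to identify the per-state convergence count with the set distance $d(s,\mc{C})$. Adding the point option from a center $c$ to $g$ makes $c$ optimal after a single backup: under our goal and bounded-reward assumptions the multi-time model gives $R_\gamma(c,o) + \gamma^k V_0(g) = V^*(c)$, so $V_1(c) = V^*(c)$ for every $c \in \mc{C}$. Once all centers are optimal after one iteration, the value of $s$ either converges on its own in $d'_\epsilon(s)$ iterations or converges by propagation from the now-optimal centers in $1 + d'_\epsilon(s,\mc{C})$ iterations, whichever is smaller. This is exactly what the set distance records, since $d_\epsilon(s,\mc{C}) = \min\bigl(d'_\epsilon(s),\, 1 + d'_\epsilon(s,\mc{C})\bigr) - 1$; hence the convergence behaviour of $s$ under $\mc{O}$ is governed by $d(s,\mc{C})$.

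It then remains to aggregate. Applying the set-distance lemma, Equation~\ref{eq:set-distance}, gives $d(s,\mc{C}) \le \min_{c \in \mc{C}} d(s,c)$ for every $s$, which formalizes the intuition that providing options at all centers can only help relative to the single best center (a consequence of the monotonicity of value iteration, whose iterates increase toward $V^*$). By the definition of the asymmetric $k$-center objective, $\min_{c \in \mc{C}} d(s,c) \le P(\mc{C})$ for every $s \in \mc{U}$, while the states outside $\mc{U}$ already converge without any options. Chaining the two inequalities yields $d(s,\mc{C}) \le P(\mc{C})$ for all $s$, and taking the maximum over $s$ gives $L_{\epsilon,V_0}(\mc{O}) \le P(\mc{C})$, as claimed.

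The main obstacle I expect is the second step: rigorously justifying that the concrete dynamics of value iteration under the option set $\mc{O}$ coincide with the ``initialize $\mc{C}$ to optimal'' scenario encoded by $d(s,\mc{C})$, and carefully reconciling the one-iteration offset built into the definition of $d$ (the ``minus one'') with the way $L$ counts convergence iterations. Establishing the option-to-center correspondence cleanly---namely $V_1(c)=V^*(c)$ and the subsequent monotone propagation---is what makes the otherwise immediate chaining in the last step valid; the set-distance lemma and the $k$-center definition then close the argument without further work.
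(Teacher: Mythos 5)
Your proposal is correct and follows essentially the same route as the paper's proof: chain $L_{\epsilon,V_0}(\mc{O}) = \max_s d(s,\mc{C}) \le \max_s \min_{c\in\mc{C}} d(s,c) = P(\mc{C})$ via the set-distance lemma (Equation~\ref{eq:set-distance}) and the definition of the $k$-center objective. The only difference is that you explicitly justify the identification of $\max_s d(s,\mc{C})$ with $L_{\epsilon,V_0}(\mc{O})$ (including the $V_1(c)=V^*(c)$ step and the off-by-one bookkeeping), which the paper asserts without elaboration.
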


\begin{proof}
    $P(\mc{C}) = \max_{s \in S} \min_{c \in \mc{C}} d(s, c) \geq \max_{s \in S} d(s, \mc{C}) = L_{\epsilon, V_0}(\mc{O})$ (using Equation \ref{eq:set-distance}). Thus $P(\mc{C})$ is an upper bound for $L_{\epsilon, V_0}(\mc{O})$.
\end{proof}

The reason why $P(\mc{C})$ does not always give us the exact number of iterations is because adding two options starting from $s_1, s_2$ may make the convergence of $s_0$ faster than $d(s_0, s_1)$ or $d(s_0, s_2)$.
\noindent{Example:} Figure \ref{fig:stochastic} is an example of such an MDP.
From $s_0$ it may transit to $s_1$ and $s_2$ with probability 0.5 each. Without any options, the value function converges to exactly optimal value for every state with 3 steps.
Adding an option either from $s_1$ or $s_2$ to $g$ does not shorten the iteration for $s_0$ to converge. However, if we add two options from $s_1$ and $s_2$ to $g$, $s_0$ converges within 2 steps, thus the MDP is solved with 2 steps.

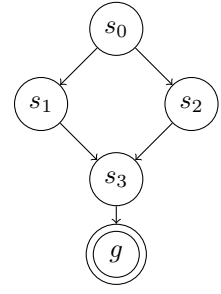
\begin{figure}
    \centering
    \begin{tikzpicture}
        \node [draw, circle] (s0) at (2, 3) {$s_0$};
        \node [draw, circle] (s1) at (1, 2) {$s_1$};
        \node [draw, circle] (s2) at (3, 2) {$s_2$};
        \node [draw, circle] (s3) at (2, 1) {$s_3$};
        \node [draw, circle, minimum size=0.8cm] (g) at (2, 0) {};
        \node [draw, circle, minimum size=0.5cm] at (2, 0) {$g$};

        \draw[->] (s0) -- (s1);
        \draw[->] (s0) -- (s2);
        \draw[->] (s1) -- (s3);
        \draw[->] (s2) -- (s3);
        \draw[->] (s3) -- (g);

    \end{tikzpicture}
    \caption{An example of an MDP where $d(s, \mc{C}) < \min_{s' \in \mc{C}} d(s, s')$. Here the transition induced by the optimal policy is stochastic, thus from $s_0$ one may go to $s_1$ and $s_2$ by probability 0.5 each. Either adding an option from $s_1$ or $s_2$ to $g$ does not make the convergence faster, but adding both makes it faster.}
    \label{fig:stochastic}
\end{figure}

The equality of the statement \ref{eq:set-distance} holds if the MDP is deterministic. That is, $d(s, \mc{C}) = \min_{s' \in \mc{C}} d(s, s')$ for deterministic MDP.

\begin{theorems}
    \item If the MDP is deterministic, it has a bounded suboptimality of $O(\log^* k)$.
\end{theorems}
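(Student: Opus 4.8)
The plan is to reduce the analysis to the approximation guarantee for asymmetric $k$-center by showing that, for deterministic MDPs, the $k$-center objective $P(\mc{C})$ coincides \emph{exactly} with the planning cost $L_{\epsilon, V_0}(\mc{O})$ of the corresponding point-option set. The starting point is the equality $d(s, \mc{C}) = \min_{s' \in \mc{C}} d(s, s')$ that holds for deterministic MDPs (the deterministic, equality case of Equation~\ref{eq:set-distance}). Combined with the identity $L_{\epsilon, V_0}(\mc{O}) = \max_{s} d(s, \mc{C})$ used in the proof that $P(\mc{C})$ upper-bounds the iteration count, this gives $L_{\epsilon, V_0}(\mc{O}) = \max_{s} \min_{c \in \mc{C}} d(s, c) = P(\mc{C})$ with equality rather than inequality, where $\mc{O}$ is the set of point options from the centers $\mc{C}$ to the goal.

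Next I would establish that the optimal MIMO value equals the optimal asymmetric $k$-center value $\text{OPT}_{kc}$. By Lemma~\ref{th:terminal} there is an optimal MIMO option set all of whose terminal states are the goal; such a set is determined by its initiation states, i.e. by a collection of centers $\mc{C}^*$ with $|\mc{C}^*| \le k$, and by the equality above its planning cost is exactly $P(\mc{C}^*)$, so $\text{OPT}_{kc}$ is at most the optimal MIMO value $L^*$. Conversely, any feasible set of $k$ centers yields, via point options to the goal, a feasible MIMO solution whose cost equals its $k$-center objective, so $L^* \le \text{OPT}_{kc}$. The two optima therefore coincide.

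With this correspondence in hand, the guarantee follows from the $O(\log^* k)$-approximation algorithm of \namecite{archer2001two}: it returns centers $\mc{C}$ with $P(\mc{C}) \le O(\log^* k) \cdot \text{OPT}_{kc}$, and translating back through the equality and the matching of optima gives $L_{\epsilon, V_0}(\mc{O}) = P(\mc{C}) \le O(\log^* k) \cdot L^*$. The one remaining wrinkle is the $\text{Expand}$ step, which adds further options when Archer's procedure returns fewer than $k$ centers; since adding options can only decrease $L_{\epsilon, V_0}$ (as noted before Equation~\ref{eq:set-distance}), this step preserves feasibility and can only improve the bound.

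I expect the main obstacle to be verifying the exact equality $L_{\epsilon, V_0}(\mc{O}) = P(\mc{C})$ for deterministic MDPs cleanly, since it is precisely determinism---the failure of the strict inequality illustrated in Figure~\ref{fig:stochastic}---that makes the $k$-center objective a faithful surrogate for planning time. All of the approximation-factor bookkeeping rests on this identity holding without slack, so the care lies in that step rather than in the invocation of the $k$-center bound itself.
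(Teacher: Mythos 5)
Your proposal is correct and follows essentially the same route as the paper's proof: establish that for deterministic MDPs the asymmetric $k$-center objective $P(\mc{C})$ equals the planning cost exactly (via $d(s,\mc{C}) = \min_{s' \in \mc{C}} d(s,s')$), match the two optima, and then invoke the $O(\log^* k)$ guarantee of \namecite{archer2001two}. If anything, you are slightly more careful than the paper, which leaves implicit the two-directional correspondence between the optimal MIMO option set and the optimal center set that you justify via the goal-termination lemma.
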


\begin{proof}
    First we show $P(\mc{C}^*) = L_{\epsilon, V_0}(\mc{O}^*)$ for deterministic MDP.
    From $d(s, \mc{C}) = \min_{s' \in \mc{C}} d(s, s')$,
    $P(\mc{C}^*) = \max_{s \in \mc{S}} \min_{c \in \mc{C}^*} d(s, c) = \max_{s \in \mc{S}} d(s, \mc{C}^*) = L_{\epsilon, V_0}(\mc{O}^*)$.
    
    The asymmetric $k$-center solver guarantees that the output $\mc{C}$ satisfies $P(\mc{C}) \leq c(\log^* k + O(1)) P(\mc{C}^*)$ where $n$ is the number of nodes \cite{archer2001two}.
    Let MIMO$(M, \epsilon, k)$ be an instance of MIMO.
    We convert this instance to an instance of asymmetric $k$-center AsymKCenter$(\mc{U}, d, k)$, where $|\mc{U}| = |\mc{S}|$.
    By solving the asymmetric $k$-center with the approximation algorithm, we get a solution $\mc{C}$ which satisfies $P(\mc{C}) \leq c(\log^* k + O(1)) P(\mc{C}^*)$.
    Thus, the output of the algorithm $\mc{O}$ satisfies $L_{\epsilon, V_0}(\mc{O}) = P(\mc{C}) \leq c(\log^* k + O(1)) P(\mc{C}^*) = c(\log^* k + O(1)) L_{\epsilon, V_0}(\mc{O}^*)$.
    Thus, $L_{\epsilon, V_0}(\mc{O}) \leq c(\log^* k + O(1)) L_{\epsilon, V_0}(\mc{O}^*)$ is derived.
\end{proof}

\begin{proposition}[Greedy Strategy]
    Let an option set $\mc{O}$ be a set of point option constructed by greedily adding one point option which minimizes the number of iterations. 
    An improvement $L_{\epsilon, V_0}(\emptyset) - L_{\epsilon, V_0}({\mc{O}})$ by the greedy algorithm can be arbitrary small (i.e. 0) compared to the optimal option set. 
\end{proposition}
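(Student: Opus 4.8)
The plan is to exhibit a single family of deterministic MDPs, parameterized by a length $T$, on which the greedy strategy achieves zero improvement while the optimal option set of the same size achieves improvement $\Omega(T)$. Concretely, I would take $m \ge 3$ vertex-disjoint directed chains of length $T$, all sharing only the absorbing goal $g$: chain $i$ is $c_{i,T} \to c_{i,T-1} \to \cdots \to c_{i,1} \to g$, with unit reward only at $g$, $V_0 = 0$, $\gamma = 1$, and budget $k = m$. Since each chain-top $c_{i,T}$ sits at distance $T$ from $g$, value iteration needs $T$ passes, so $L_{\epsilon,V_0}(\emptyset) = T$.

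The first key step is a covering lemma: because the chains are disjoint and a point option to $g$ placed in chain $i$ can only accelerate states of chain $i$ (every other chain-top still needs $T$ passes), one has $L_{\epsilon,V_0}(\mc{O}) < T$ if and only if \emph{every} chain contains at least one option. With the budget $k=m$ this forces exactly one option per chain. For the optimal set I would place one option near the midpoint of each chain, capping each chain's convergence time at $\lceil (T+1)/2\rceil$; hence $L_{\epsilon,V_0}(\mc{O}^*) = \lceil (T+1)/2\rceil$ and the optimal improvement is $\lfloor (T-1)/2\rfloor = \Omega(T)$.

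The heart of the argument is the greedy analysis. I would show by induction on the step count that whenever two or more chains are still uncovered, adding any single option leaves at least one chain-top at distance $T$, so every candidate yields $L = T$; thus the greedy rule---which compares only the resulting number of iterations---sees a complete tie and receives no strict-improvement signal. Consequently a legitimate tie-breaking execution may spend two options on one chain (doubling up before covering the rest), and with only $m$ options total it then leaves some chain entirely uncovered, so the greedy output still satisfies $L_{\epsilon,V_0}(\mc{O}) = T$ and its improvement is $0$, versus $\Omega(T)$ for the optimum.

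The main obstacle is precisely this tie-breaking step: I must verify that the drop in $L$ materializes only when the \emph{last} uncovered chain is finally covered, i.e.\ that a strict improvement is available at a step only when exactly one chain is uncovered with budget remaining, a configuration the doubling-up execution always skips (it passes from two uncovered chains to one only on its final, budget-exhausting option). Establishing that at every earlier step all choices are genuinely tied at $L=T$ is what legitimizes the wasteful run. It is worth noting that this reliance on ties is unavoidable: an improvement of exactly $0$ means the first greedy option does not lower $L$, which can happen only when no single option helps and hence all first choices tie---which is exactly why the statement is phrased as ``can be'' $0$.
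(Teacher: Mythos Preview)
Your proof is correct and follows the same high-level strategy as the paper: exhibit a deterministic shortest-path instance built from several disjoint substructures sharing only the goal, so that no single point option can reduce $L$ (some untouched substructure still forces the original iteration count), every greedy step is a tie, and an adversarial tie-break lets greedy waste an option and finish with $L_{\epsilon,V_0}(\mc{O})=L_{\epsilon,V_0}(\emptyset)$ while the optimal set strictly improves.

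The specific constructions differ. The paper uses just two arms with $k=2$: each arm is a length-$3$ path whose far end branches into three leaves, so the deepest states are at distance $4$. Because each arm has three leaves, placing an option at a leaf (rather than the bottleneck) helps only that one leaf; the paper lets greedy pick a leaf $s_1$ on the first tie, after which no single second option can cover both the remaining leaves of that arm and the other arm, so $L$ stays at $4$. Your construction instead uses $m\ge 3$ plain chains with $k=m$ and achieves the waste by \emph{doubling up} on one chain rather than by a bad within-structure placement. The paper's example is more economical (it works already at $k=2$ and with $13$ states), while your family is parameterized by $T$, which makes explicit that the optimal improvement can be driven to $\Omega(T)$; since the greedy improvement is exactly $0$ in both constructions, this extra parametrization is not strictly needed for the stated claim, but it does make the separation quantitatively vivid. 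One minor point worth tightening in your write-up: you should verify (as you implicitly do) that at the final greedy step there are still at least two uncovered chains, so that the last choice is also a genuine tie; your trace with one doubling and $k=m$ indeed lands there.
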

\begin{proof}
    We show by the example in a shortest-path problem in Figure \ref{fig:bound}.
    The MDP can be solved within $4$ iterations without options: $L_{\epsilon, V_0}(\emptyset) = 4$.
    With an optimal option set of size $k=2$ the MDP can be solved within $2$ iterations: $L_{\epsilon, V_0}(\mc{O}^*) = 2$ (an initiation state of each option in optimal option set is denoted by $*$ in the Figure).
    On the other hand, a greedy strategy may not improve $L$ at all.
    No single point option does not improve $L$. Let's say we picked a point option from $s_1$ to $g$. Then, there is no single point option we can add to that option to improve $L$ in the second iteration. Therefore, the greedy procedure returns $\mc{O}$ which has $L_{\epsilon, V_0}(\emptyset) - L_{\epsilon, V_0}(\mc{O}) = 0$.
    Therefore, $(L_{\epsilon, V_0}(\emptyset) - L_{\epsilon, V_0}(\mc{O})) / (L_{\epsilon, V_0}(\emptyset) - L_{\epsilon, V_0}(\mc{O}^*))$ can be arbitrary small non-negative value (i.e. 0).
    
    \begin{figure}[h]
    \centering
    \begin{tikzpicture}
        \node [draw, circle, minimum size=0.5cm] (x1) at (1, 0) {};
        \node [draw, circle, minimum size=0.5cm] (x2) at (2, 0) {};
        \node [draw, circle, minimum size=0.5cm] (x3) at (3, 0) {$*$};
        
        \node [draw, circle, minimum size=0.5cm] (x4) at (4, 1) {$s_1$};
        \node [draw, circle, minimum size=0.5cm] (x5) at (4, 0) {};
        \node [draw, circle, minimum size=0.5cm] (x6) at (4, -1) {};

        \node [draw, circle, minimum size=0.5cm] (y1) at (-1, 0) {};
        \node [draw, circle, minimum size=0.5cm] (y2) at (-2, 0) {};
        \node [draw, circle, minimum size=0.5cm] (y3) at (-3, 0) {$*$};
        
        \node [draw, circle, minimum size=0.5cm] (y4) at (-4, 1) {};
        \node [draw, circle, minimum size=0.5cm] (y5) at (-4, 0) {};
        \node [draw, circle, minimum size=0.5cm] (y6) at (-4, -1) {};
        
        \node [draw, circle, minimum size=0.8cm] (g) at (0, 0) {$g$};
        \node [draw, circle, minimum size=0.5cm] at (0, 0) {};

        \draw[->] (x6) -- (x3);
        \draw[->] (x5) -- (x3);
        \draw[->] (x4) -- (x3);
        \draw[->] (x3) -- (x2);
        \draw[->] (x2) -- (x1);
        \draw[->] (x1) -- (g);
        
        \draw[->] (y6) -- (y3);
        \draw[->] (y5) -- (y3);
        \draw[->] (y4) -- (y3);
        \draw[->] (y3) -- (y2);
        \draw[->] (y2) -- (y1);
        \draw[->] (y1) -- (g);
        
    \end{tikzpicture}
    \caption{Example of MIMO where the improvement of a greedy strategy can be arbitrary small compared to the optimal option set.}
    \label{fig:bound}
\end{figure}
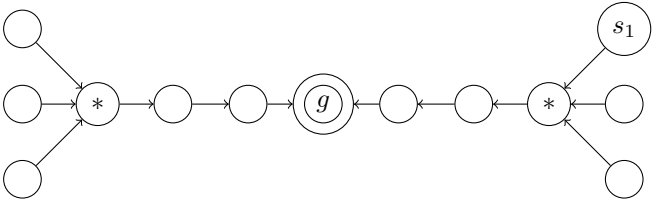
    
\end{proof}

\subsection{\momialg{}}

In this subsection we show the following theorem:
\setcounter{theorem}{4}
\begin{theorem}
    \momialg{} has the following properties:
    \begin{enumerate}
        \item \momialg{} runs in polynomial time.
        \item It guarantees that the MDP is solved within $\ell$ iterations using the option set acquired by \momialg{} $\mc{O}$.
        \item If the MDP is deterministic, the option set is at most $O(\log k)$ times larger than the smallest option set possible to solve the MDP within $\ell$ iterations.
    \end{enumerate}
\end{theorem}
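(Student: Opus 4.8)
The plan is to verify the three claims in turn, reusing the distance-function machinery already set up for \mimoalg{}.

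\emph{Polynomial time (Part 1).} I would check each of the five steps separately. Computing the distance function $d$ amounts to solving the MDP (at most $|\mc{S}|$ times, or once using the stored-value-function representation $V_i(s) = f(V_1(\cdot))$ described earlier); building each set $X_{s_i} = \{s : d(s, s_i) \leq \ell - 1\}$ and the family $\mc{X}$ over $\mc{S} \setminus X^+_g$ is a scan over state pairs; the greedy set-cover routine of \namecite{chvatal1979greedy} is polynomial; and emitting one point option per chosen center is immediate. A composition of polynomially many polynomial steps is polynomial.

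\emph{Correctness (Part 2).} Here I would show the returned $\mc{O}$ makes every state $\epsilon$-converge within $\ell$ iterations. By definition, states in $X^+_g$ already converge within $\ell$ with no options. For $s \in \mc{U} = \mc{S} \setminus X^+_g$, feasibility of the computed cover provides a selected center $c$ with $s \in X_c$, i.e. $s$ converges within $\ell$ when the single option from $c$ to $g$ is added. Letting $\mc{C} = \{\mc{I}_o : o \in \mc{O}\}$ be the full set of chosen centers, the set-distance inequality (Equation~\ref{eq:set-distance}), $d(s, \mc{C}) \leq \min_{c \in \mc{C}} d(s, c)$, shows that adding \emph{all} the options can only speed convergence, so $s$ still converges within $\ell$. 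As this covers every state, $L_{\epsilon, V_0}(\mc{O}) \leq \ell$. Crucially this direction uses only the inequality, so it holds for stochastic MDPs as well.

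\emph{Approximation (Part 3).} For deterministic MDPs I would first argue that the minimum set-cover value equals the optimal MOMI value, after which the bound follows from the logarithmic guarantee of \namecite{chvatal1979greedy}. By Lemma~\ref{th:terminal} there is an optimal option set $\mc{O}^*$ with every termination state equal to $g$, hence determined by its initiation set $\mc{C}^*$. In the deterministic case Equation~\ref{eq:set-distance} is an equality, $d(s, \mc{C}) = \min_{c \in \mc{C}} d(s, c)$, so $L_{\epsilon, V_0}(\mc{O}^*) \leq \ell$ holds if and only if every $s \in \mc{U}$ has $\min_{c \in \mc{C}^*} d(s, c) \leq \ell - 1$, i.e. if and only if $\{X_c : c \in \mc{C}^*\}$ covers $\mc{U}$. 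Together with Part 2 this gives a value-preserving correspondence between feasible option sets terminating at $g$ and feasible covers of $\mc{U}$, so the minimum cover has size exactly $|\mc{O}^*|$. Greedy set cover returns a cover within an $O(\log n)$ factor of optimum (with $n$ the number of elements, hence of states), yielding $|\mc{O}| \leq O(\log n)\,|\mc{O}^*|$.

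The main obstacle is the deterministic equality $d(s, \mc{C}) = \min_{c} d(s, c)$ and the exact correspondence between minimum covers and optimal option sets it produces; this is precisely what breaks for stochastic MDPs, where Figure~\ref{fig:stochastic} exhibits two jointly useful options whose combined effect strictly beats either alone, so only the one-sided feasibility argument of Part 2 survives and the sharper ratio is lost---consistent with the $2^{\log^{1-\epsilon}n}$ inapproximability of the stochastic case.
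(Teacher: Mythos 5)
Your proof is correct and follows essentially the same route as the paper's: compute the distance function $d$, reduce to a set-cover instance over $\mc{S}\setminus X^+_g$, and invoke the greedy $O(\log n)$ guarantee of \namecite{chvatal1979greedy}. If anything, your Part 3 is more complete than the paper's, which simply asserts $|\mc{C}^*|=|\mc{O}^*|$; you supply the missing two-way correspondence between feasible covers and feasible option sets via the goal-termination lemma and the deterministic equality $d(s,\mc{C})=\min_{c\in\mc{C}}d(s,c)$, and you correctly note that only the one-sided inequality (and hence only Part 2) survives in the stochastic case.
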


\begin{theorems}
    \momialg{} runs in polynomial time.
\end{theorems}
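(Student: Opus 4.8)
The plan is to verify that each of the five stages of \momialg{} runs in time polynomial in the input size, mirroring the argument already used for \mimoalg{}. I would begin with the first stage, which computes the asymmetric distance function $d$. As noted earlier, $d$ need not be recomputed from scratch for every candidate option: solving the base MDP once (which is polynomial-time) and storing the intermediate value functions $V_i$ as functions of $V_1$ lets us read off $d(s, s')$ for each pair by substituting $V_1(s') = V^*(s')$; equivalently, one may simply re-solve the MDP at most $|\mc{S}|$ times, once per candidate initiation state. Either way, computing the full table of $d(s, s')$ values costs at most $O(|\mc{S}|)$ MDP solves plus $O(|\mc{S}|^2)$ bookkeeping, hence polynomial time.

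Next I would handle the second and third stages, which build the set-cover instance. For each candidate initiation state $s'$ we form the subset $X_{s'} = \{s : d(s, s') \le \ell - 1\}$ by scanning the precomputed distance table, and we identify the universe $\mc{S} \setminus X^{+}_{g}$ of states that fail to converge within $\ell$ iterations with no options. This requires only $O(|\mc{S}|^2)$ comparisons and produces at most $|\mc{S}|$ subsets over a universe of at most $|\mc{S}|$ elements, so the instance is of polynomial size and is constructed in polynomial time.

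For the fourth stage, I would invoke Chvátal's greedy set-cover approximation \cite{chvatal1979greedy} on this instance; its running time is polynomial in the number of sets and elements, both bounded by $|\mc{S}|$, so this step is polynomial. The final stage emits one point option per chosen center, at most $|\mc{S}|$ options, each constructed in constant time. Composing the five polynomial-time stages then yields that \momialg{} runs in polynomial time.

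The only step that is not entirely immediate is the distance computation, and its polynomiality rests on the observation (already established in the body) that a single MDP solve suffices to recover all entries of $d$. I would therefore make that reduction explicit as the main content of the proof and treat the remaining stages — instance construction, the greedy call, and option generation — as routine polynomial-time bookkeeping.
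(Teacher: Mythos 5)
Your proposal is correct and follows essentially the same route as the paper's own proof: check that each of the five stages of \momialg{} is polynomial, with the distance computation handled by solving the MDP at most $|\mc{S}|$ times (the paper cites this plus Chv\'atal's $O(n^3)$ greedy set-cover algorithm and dismisses the remaining stages as immediate). You simply spell out the bookkeeping for the instance-construction and option-generation stages that the paper leaves implicit.
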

\begin{proof}
    Each step of the procedure runs in polynomial time.
    
    (1) Solving an MDP takes polynomial time \cite{littman1995complexity}. To compute $d$ we need to solve MDPs at most $|\mc{S}|$ times. Thus, it runs in polynomial time.
    
    (4) We solve the set cover using a polynomial time approximation algorithm \cite{chvatal1979greedy} which runs in $O(n^3)$, thus run in polynomial time.
    
    (2), (3), and (5) Immediate.
\end{proof}

\begin{theorems}
    \momialg{} guarantees that the MDP is solved within $\ell$ iterations using the option set $\mc{O}$.
\end{theorems}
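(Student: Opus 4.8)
The plan is to show directly that every state of the MDP reaches its $\epsilon$-optimal value within $\ell$ iterations under the option set $\mc{O}$ returned by \momialg{}, by combining the coverage guarantee of the set-cover subroutine with the semantics of the distance function $d$ and the monotonicity of value iteration under added options. Write $\mc{C} = \{\mc{I}_o : o \in \mc{O}\}$ for the set of initiation states, which is exactly the collection of centers $s_i$ whose subsets $X_{s_i}$ form the cover returned in step~4 of the algorithm. Recall that by construction $X_{s'} = \{s \in \mc{S} : d(s,s') \le \ell-1\}$ and that $\mc{U} = \mc{S} \setminus X^+_g$ is the set of states that do not already converge within $\ell$ iterations without any options.

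First I would partition $\mc{S}$ into $X^+_g$ and $\mc{U}$ and bound the distance from each state to $\mc{C}$. For a state $s \in \mc{U}$, feasibility of the set cover means $s \in X_c$ for some center $c \in \mc{C}$, i.e.\ $d(s,c) \le \ell-1$. For a state $s \in X^+_g$ we have $d'_\epsilon(s) \le \ell$ by definition of $X^+_g$, and since $d(s,c) = \min(d'_\epsilon(s)-1,\, d'_\epsilon(s,c)) \le d'_\epsilon(s)-1 \le \ell-1$ for every $c$, such states are within $\ell-1$ of any center automatically. Hence every state of the MDP satisfies $\min_{c \in \mc{C}} d(s,c) \le \ell-1$.

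The remaining and most delicate step is to argue that adding all the options in $\mc{O}$ simultaneously preserves each per-state guarantee, i.e.\ that the options do not interfere to slow convergence. Here I would invoke the set-distance inequality $d(s,\mc{C}) \le \min_{s' \in \mc{C}} d(s,s')$; combined with the previous paragraph this gives $d(s,\mc{C}) \le \ell-1$ for every $s \in \mc{S}$. Since $d(s,\mc{C})$ is, by definition, one less than the number of iterations for $s$ to converge once the states of $\mc{C}$ are $\epsilon$-optimal, and each point option from a center to $g$ makes its initiation state $\epsilon$-optimal after a single backup, state $s$ converges by iteration $d(s,\mc{C})+1 \le \ell$. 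Taking the maximum over $s$ then yields $L_{\epsilon, V_0}(\mc{O}) \le \ell$, which is the claim.

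I expect the main obstacle to be making the set-distance inequality (and hence the no-interference claim) fully rigorous: one must show that the Bellman optimality backup over the enlarged action set $\mc{A} \cup \mc{O}$ drives each $V_i(s)$ at least as close to $V^*$ as the backup using only the single option from the center $c$ responsible for covering $s$. I would establish this by induction on the iteration index $i$, using that the maximum over a larger action set can only move the backed-up value at least as far toward $V^*$, so that the iterates under $\mc{O}$ dominate those under any single option — which is precisely the monotonicity encoded by the set-distance inequality.
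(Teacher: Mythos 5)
Your proposal is correct and follows essentially the same route as the paper's proof: partition states into $X^+_g$ and the covered set $\mc{U}$, use feasibility of the set cover to find a center $c$ with $d(s,c)\le\ell-1$, and note that the point option from $c$ to $g$ makes $c$ $\epsilon$-optimal after one backup, so $s$ converges within $d(s,c)+1\le\ell$ iterations. The only difference is that you explicitly invoke the set-distance inequality $d(s,\mc{C})\le\min_{c\in\mc{C}}d(s,c)$ and the monotonicity of the Bellman backup over an enlarged action set to rule out interference among the options, a point the paper's proof leaves implicit (though it states that inequality as a separate lemma in the \mimoalg{} analysis).
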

\begin{proof}
    A state $s \in X^{+}_{g}$ reaches optimal within $\ell$ steps by definition. For every state $s \in \mc{S} \setminus X^{+}_{g}$, the set cover guarantees that we have $X_{s'} \in \mc{C}$ such that $d(s, s') < \ell$. As we generate an option from $s'$ to $g$, $s'$ reaches to optimal value with 1 step. Thus, $s$ reaches to $\epsilon$-optimal value within $d(s, s') + 1 \leq \ell$.
    Therefore, every state reaches $\epsilon$-optimal value within $\ell$ steps.
\end{proof}

\begin{theorems}
    If the MDP is deterministic, the option set is at most $\max_{s \in \mc{S}} X_s$ times larger than the smallest option set possible to solve the MDP within $\ell$ iterations.
\end{theorems}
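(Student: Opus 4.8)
The plan is to show that for a deterministic MDP the instance of set cover that \momialg{} constructs captures the MOMI problem \emph{exactly}, so that the suboptimality of \momialg{} is inherited verbatim from the suboptimality of the greedy set-cover routine of \namecite{chvatal1979greedy}. First I would invoke Lemma~\ref{th:terminal} to assume without loss of generality that every option under consideration terminates at the goal $g$; a point option is then determined by its initiation state alone, so an option set $\mc{O}$ is in bijection with its set of initiation states $\mc{C} = \{\mc{I}_o : o \in \mc{O}\}$, with $|\mc{O}| = |\mc{C}|$.

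The crucial step is to establish that, for a deterministic MDP, a set $\mc{C}$ makes the MDP converge within $\ell$ iterations if and only if $\{X_{s'} : s' \in \mc{C}\}$ covers $\mc{U} = \mc{S} \setminus X^+_g$. Here determinism is indispensable: it yields the equality $d(s, \mc{C}) = \min_{s' \in \mc{C}} d(s, s')$, the tight case of Equation~\ref{eq:set-distance} recorded just above the statement. Using it, state $s$ converges within $\ell$ iterations under $\mc{C}$ iff $\min_{s' \in \mc{C}} d(s, s') \leq \ell - 1$, which by the defining property $s \in X_{s'} \iff d(s, s') \leq \ell - 1$ is exactly the assertion that some $X_{s'}$ with $s' \in \mc{C}$ contains $s$. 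Since every state of $X^+_g$ already converges within $\ell$ with no options, feasibility of $\mc{O}$ reduces precisely to $\mc{C}$ covering $\mc{U}$. Hence feasible option sets and feasible covers are in a size-preserving bijection, and in particular the minimum feasible option set $\mc{O}^*$ and the minimum cover $\mc{C}^*$ satisfy $|\mc{O}^*| = |\mc{C}^*|$.

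Finally I would apply the guarantee of the greedy set-cover algorithm: on the instance $(\mc{U}, \mc{X})$ it returns a cover $\mc{C}$ with $|\mc{C}| \leq H(d)\,|\mc{C}^*|$, where $d = \max_{s \in \mc{S}} |X_s|$ is the size of the largest set and $H(d) = \sum_{i=1}^{d} 1/i \leq 1 + \ln d$. Because \momialg{} emits one option per chosen set, $|\mc{O}| = |\mc{C}| \leq H(\max_{s} |X_s|)\,|\mc{O}^*|$, so the returned option set is larger than optimal only by the set-cover approximation factor governed by $\max_{s \in \mc{S}} |X_s|$; since $\max_{s} |X_s| \leq n = |\mc{S}|$ this factor is $O(\log n)$, giving property~3 of the main theorem.

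The main obstacle is the ``only if'' direction of the correspondence in the second step. An arbitrary feasible option set need not arise from the greedy construction, so one must rule out that several options \emph{jointly} accelerate the convergence of a state beyond what any one of them achieves alone; otherwise feasibility would be strictly weaker than covering and the reduction to set cover would break. The deterministic equality $d(s, \mc{C}) = \min_{s'} d(s, s')$ is exactly what forbids this coupling. In the stochastic setting the equality fails --- the MDP of Figure~\ref{fig:stochastic} shows two options together beating either alone --- which is precisely why the set-cover equivalence, and hence the $O(\log n)$ guarantee, is available only for deterministic MDPs.
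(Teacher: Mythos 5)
Your proposal is correct and follows essentially the same route as the paper: reduce to the set-cover instance $(\mc{U},\mc{X})$, use Lemma~\ref{th:terminal} to identify option sets with sets of initiation states, and invoke the greedy guarantee of \namecite{chvatal1979greedy} to get $|\mc{O}|=|\mc{C}|\leq O(\log n)\,|\mc{C}^*|=O(\log n)\,|\mc{O}^*|$. In fact you are more careful than the paper's own two-line proof, which silently assumes $|\mc{C}^*|=|\mc{O}^*|$; your use of the deterministic equality $d(s,\mc{C})=\min_{s'\in\mc{C}}d(s,s')$ to establish the feasibility-preserving bijection is exactly the missing justification, and correctly pinpoints why the argument fails for stochastic MDPs.
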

\begin{proof}
    Using a suboptimal algorithm by \namecite{chvatal1979greedy} we get $\mc{C}$ such that $|\mc{C}| \leq O(\log n) |\mc{C}^*|$  where $\Delta$ is the maximum size of subsets in $\mc{X}$.
    Thus, $|\mc{O}| = |\mc{C}| \leq O(\log n) |\mc{C}^*| = O(\log n) |\mc{O}^*|$.
\end{proof}

\section*{Appendix: Experiments}

We show the figures for experiments.
Figure \ref{fig:fourroom-viz} shows the options found by solving MIMO optimally/suboptimally in four room domain.
Figure \ref{fig:9x9grid-viz} shows the options in 9x9 grid domain.

\begin{figure*}
    \centering
    \newcommand{\figsize}{0.15}
    \subfloat[optimal $k = 1$]{\includegraphics[width=\figsize\textwidth]{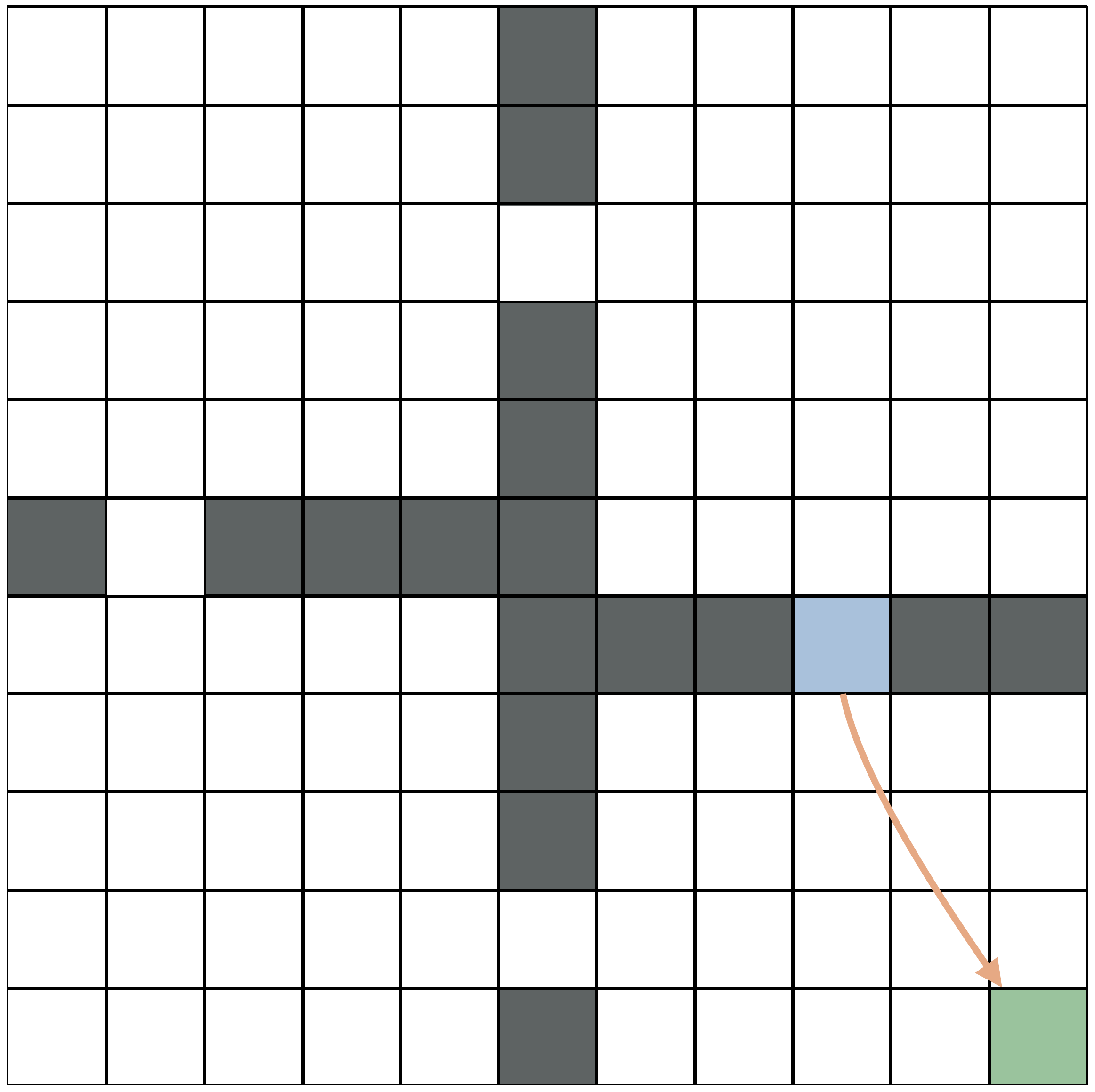}}\hspace{1mm}
    \subfloat[optimal $k = 2$]{\includegraphics[width=\figsize\textwidth]{figures/pdf_images/fourrooms/four_room_2_opt.pdf}}\hspace{1mm}
    \subfloat[optimal $k = 3$]{\includegraphics[width=\figsize\textwidth]{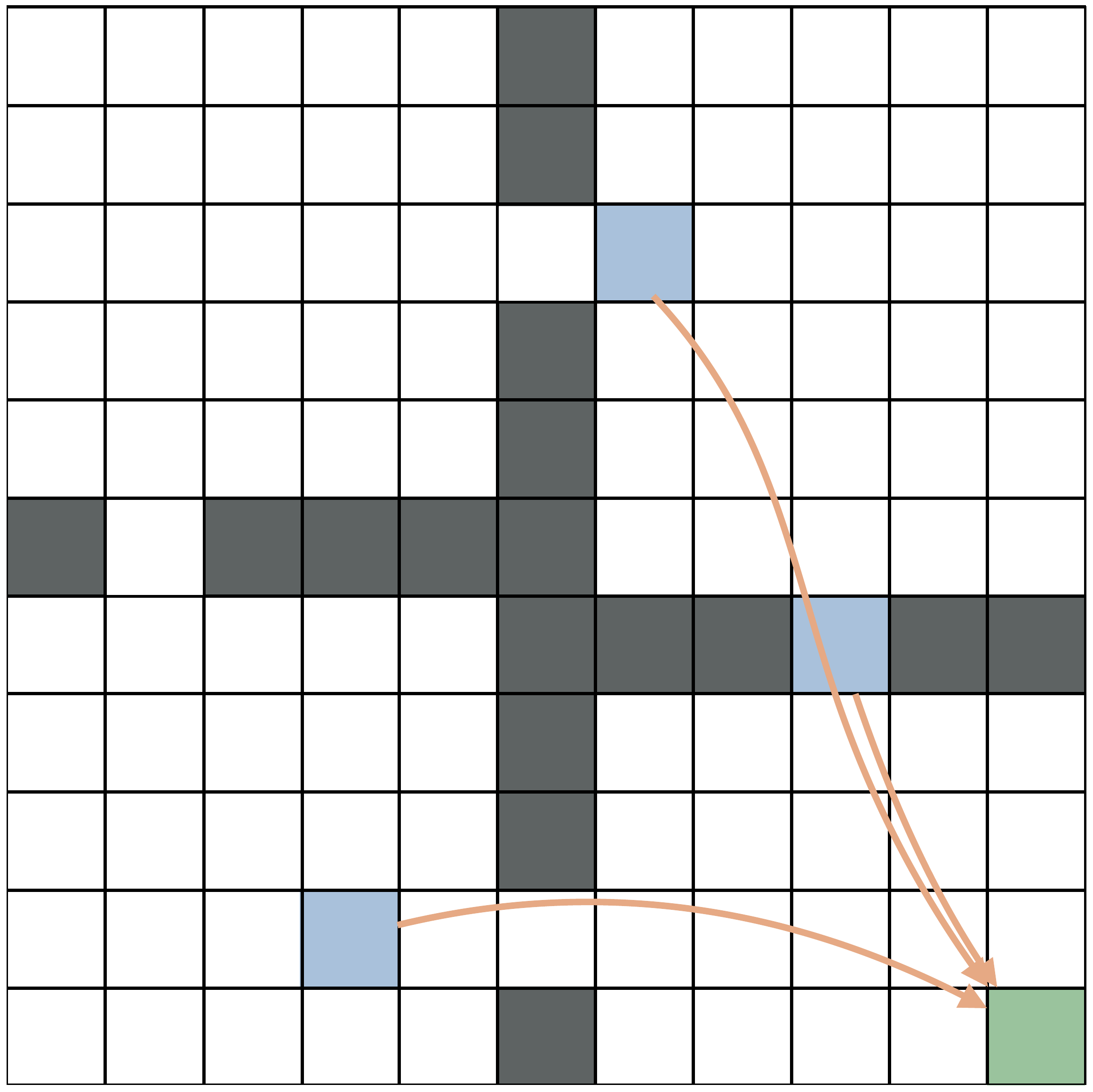}}\hspace{1mm}
    \subfloat[optimal $k = 4$]{\includegraphics[width=\figsize\textwidth]{figures/pdf_images/fourrooms/four_room_4_opt.pdf}}

    \subfloat[approx. $k = 1$]{\includegraphics[width=\figsize\textwidth]{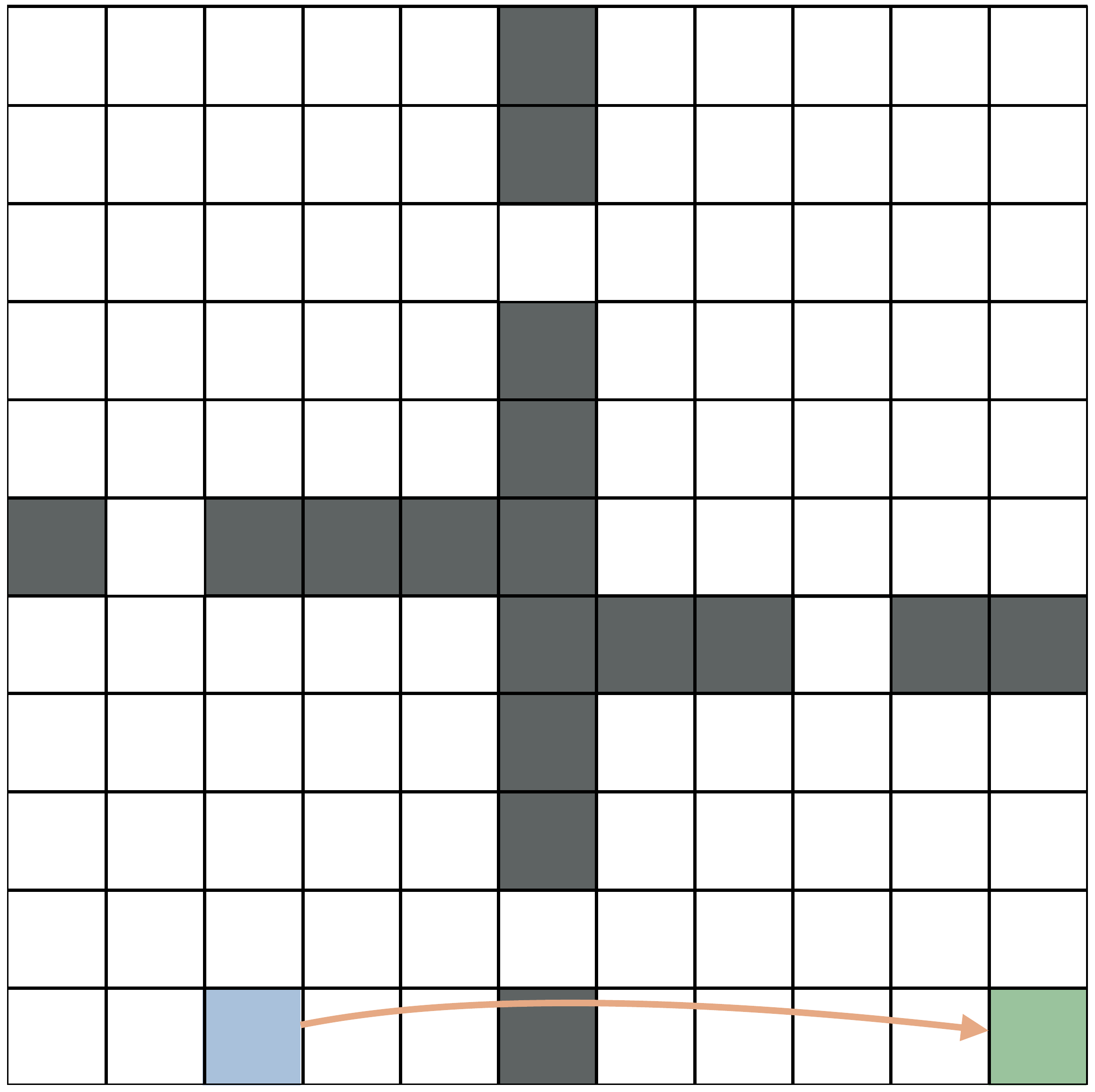}}\hspace{1mm}
    \subfloat[approx. $k = 2$]{\includegraphics[width=\figsize2\textwidth]{figures/pdf_images/fourrooms/four_room_2sub_opt.pdf}}\hspace{1mm}
    \subfloat[approx. $k = 3$]{\includegraphics[width=\figsize\textwidth]{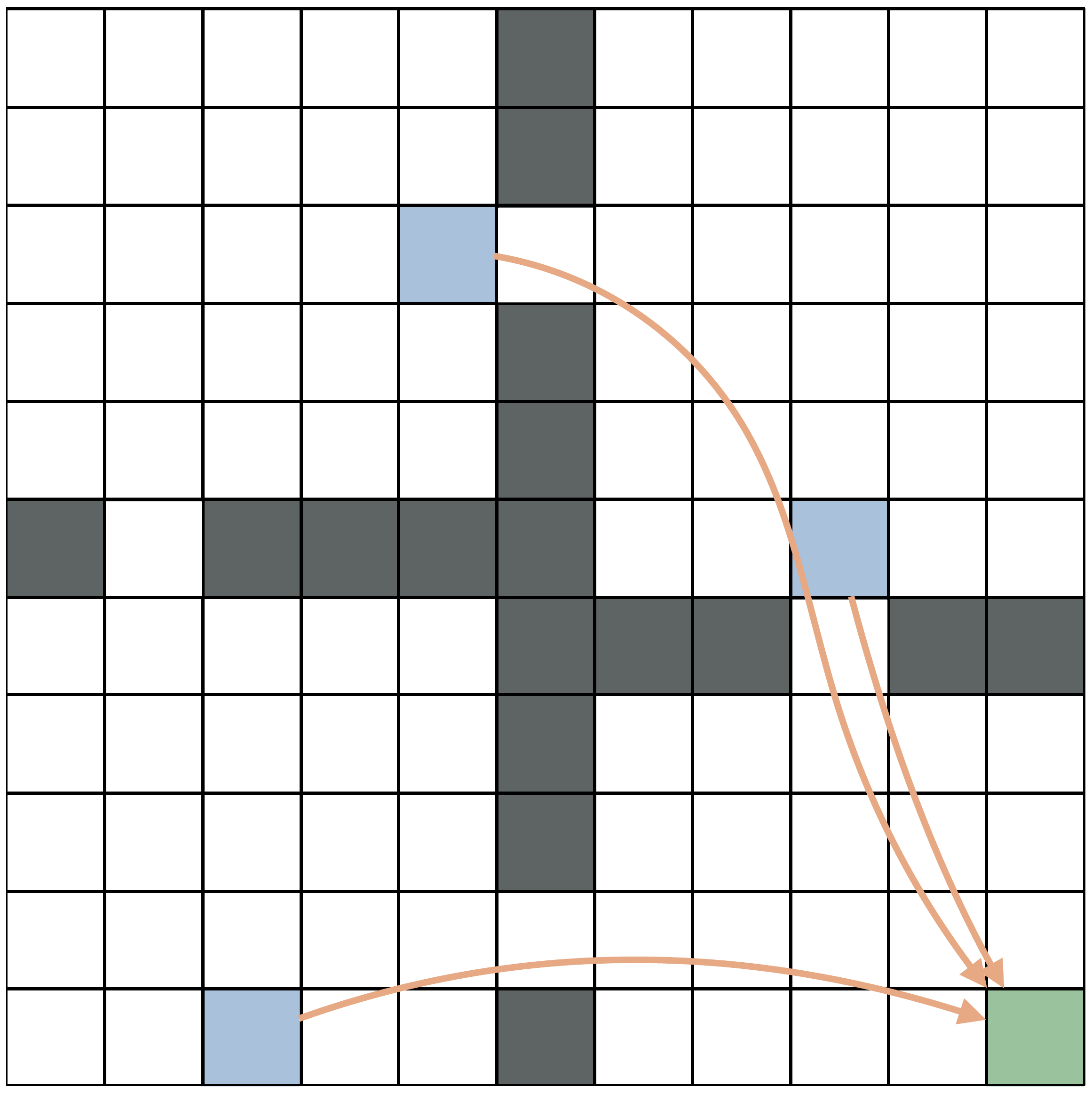}}\hspace{1mm}
    \subfloat[approx. $k = 4$]{\includegraphics[width=\figsize\textwidth]{figures/pdf_images/fourrooms/four_room_4sub_opt.pdf}}
    
    \subfloat[Betweenness]{\includegraphics[width=\figsize\textwidth]{figures/pdf_images/fourrooms/four_room_bet.pdf}}\hspace{1mm}
    \subfloat[Eigenoptions]{\includegraphics[width=\figsize\textwidth]{figures/pdf_images/fourrooms/four_room_eigen.pdf}} 

    \caption{Comparison of the optimal point options vs. options generated by the approximation algorithm \mimoalg{}. We observed that the approximation algorithm is similar to that of optimal options. Note that optimal option set is not unique: there can be multiple optimal option set, and we are visualize one of them returned by the solver.}
    \label{fig:fourroom-viz}
\end{figure*}

\begin{figure*}
    \centering
    \subfloat[optimal $k = 1$]{\includegraphics[width=0.2\textwidth]{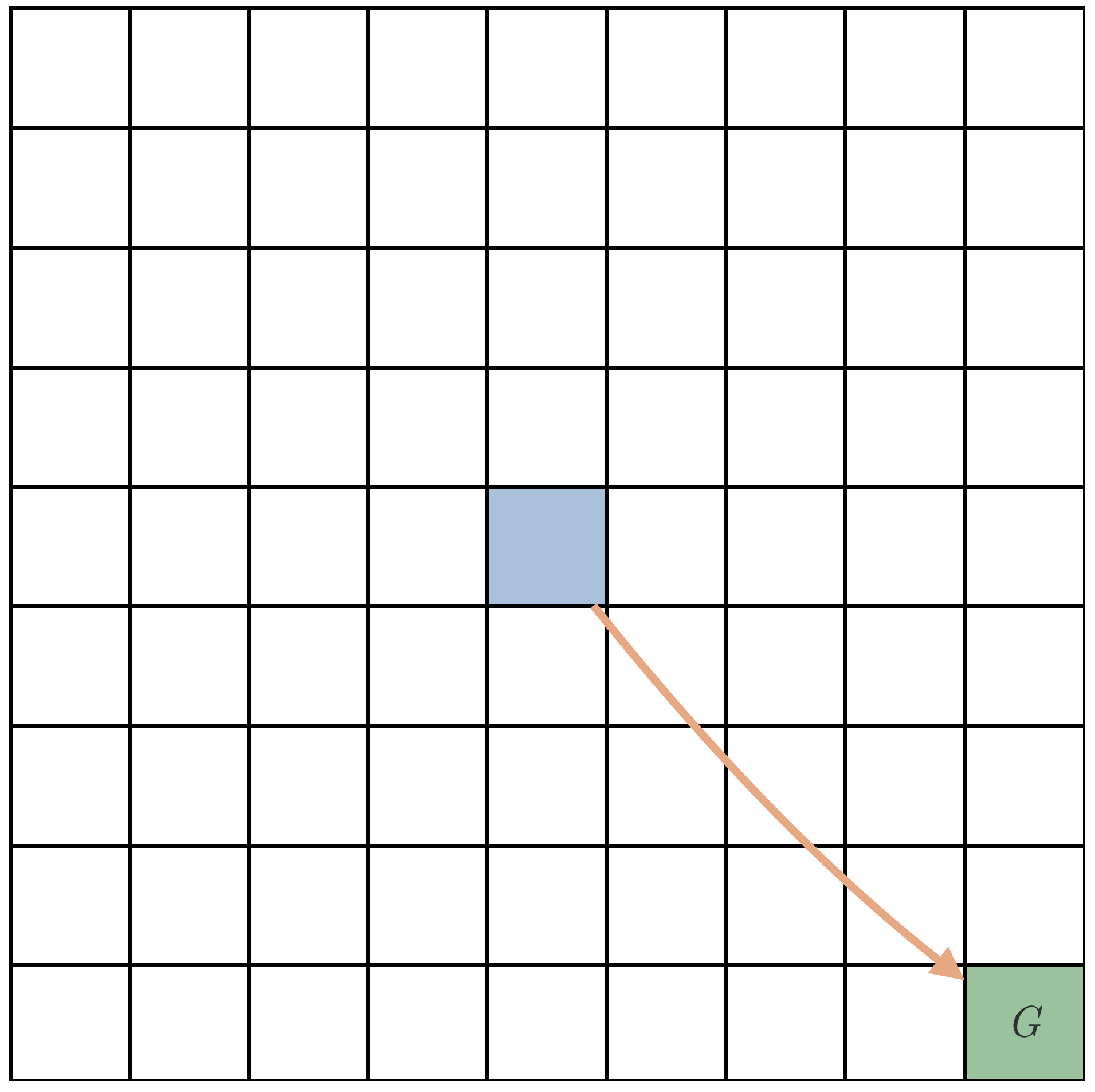}} \hspace{1mm}
    \subfloat[optimal $k = 2$]{\includegraphics[width=0.2\textwidth]{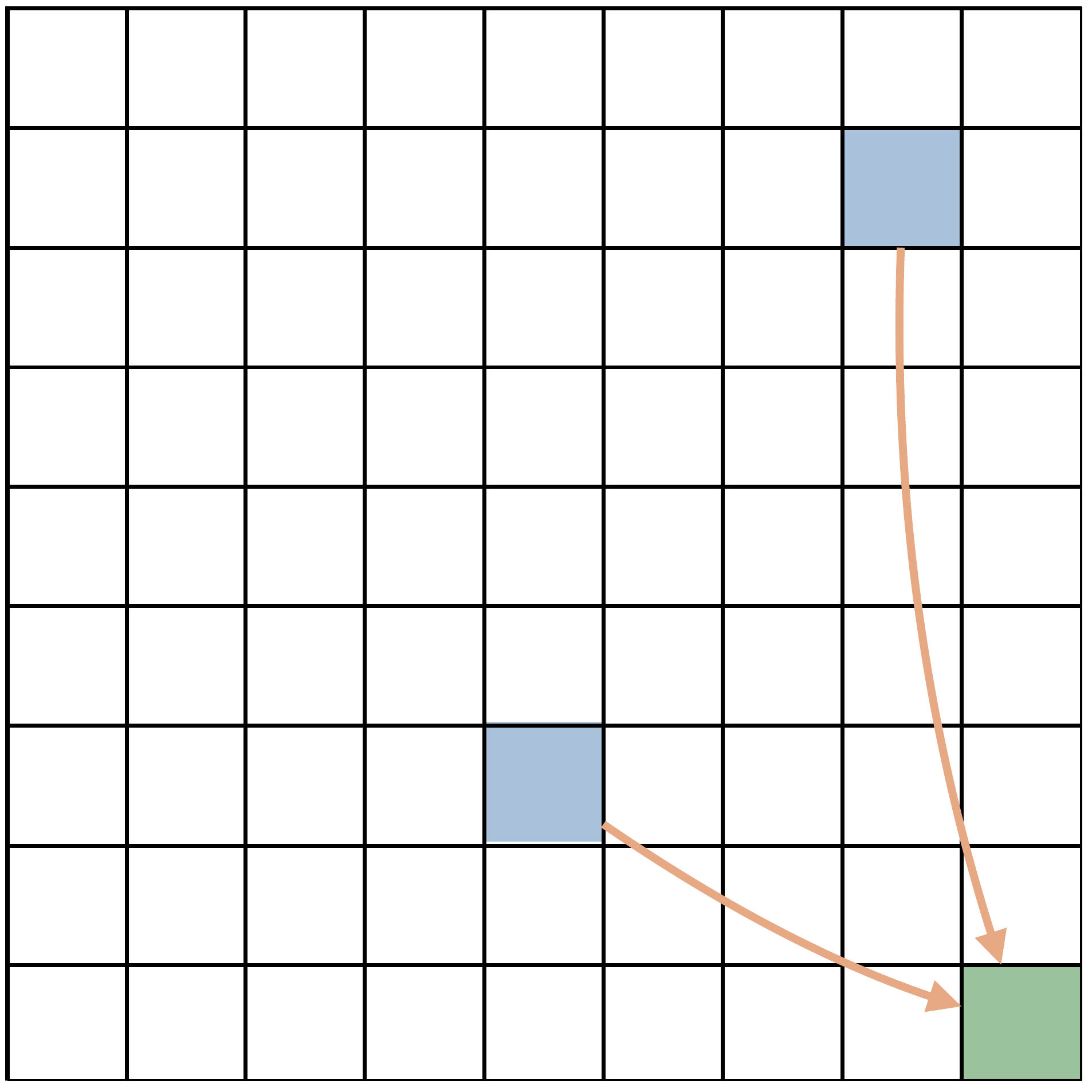}} \hspace{1mm}
    \subfloat[optimal $k = 3$]{\includegraphics[width=0.2\textwidth]{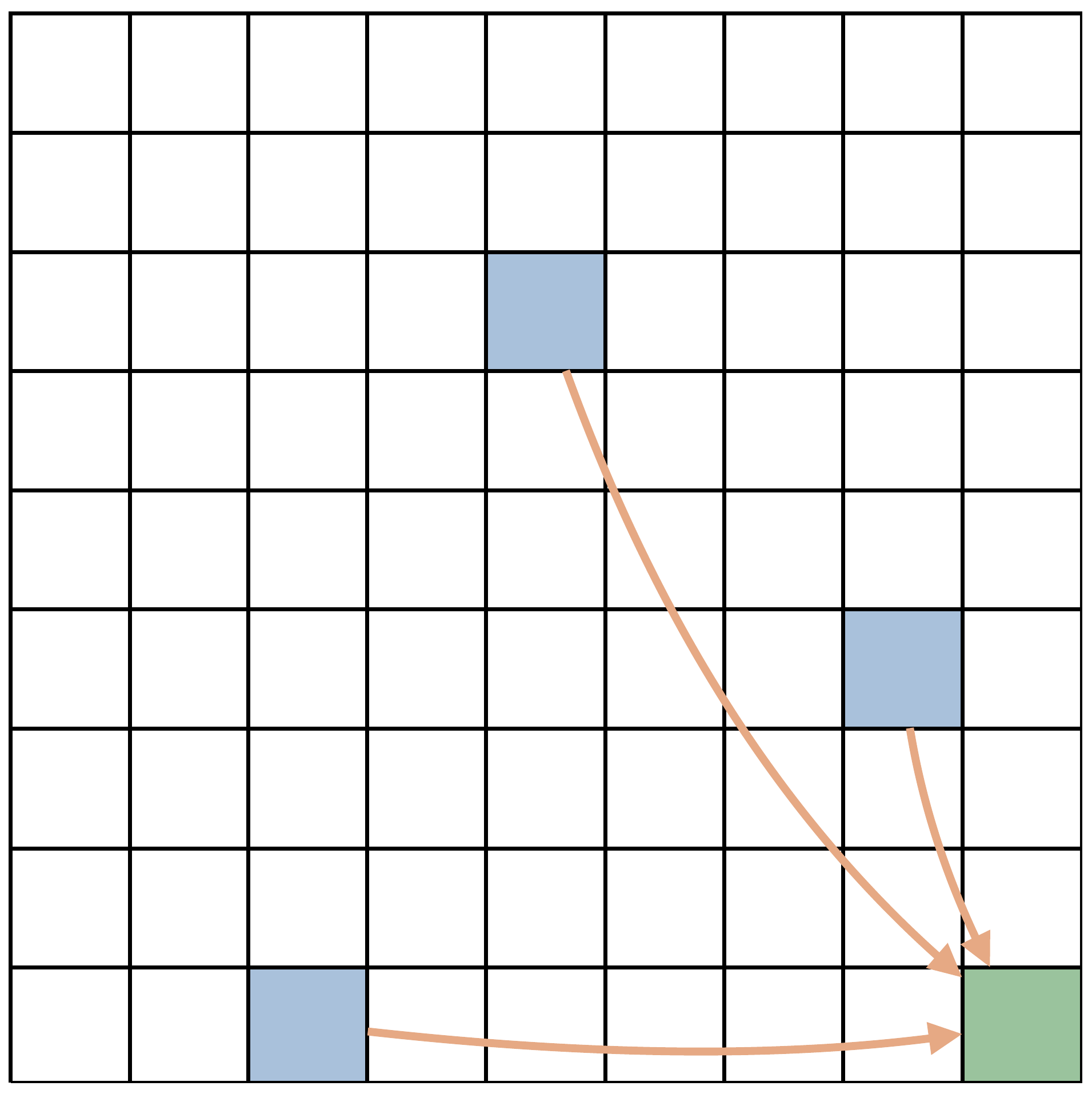}}\hspace{1mm}
    
    \subfloat[approx. $k = 1$]{\includegraphics[width=0.2\textwidth]{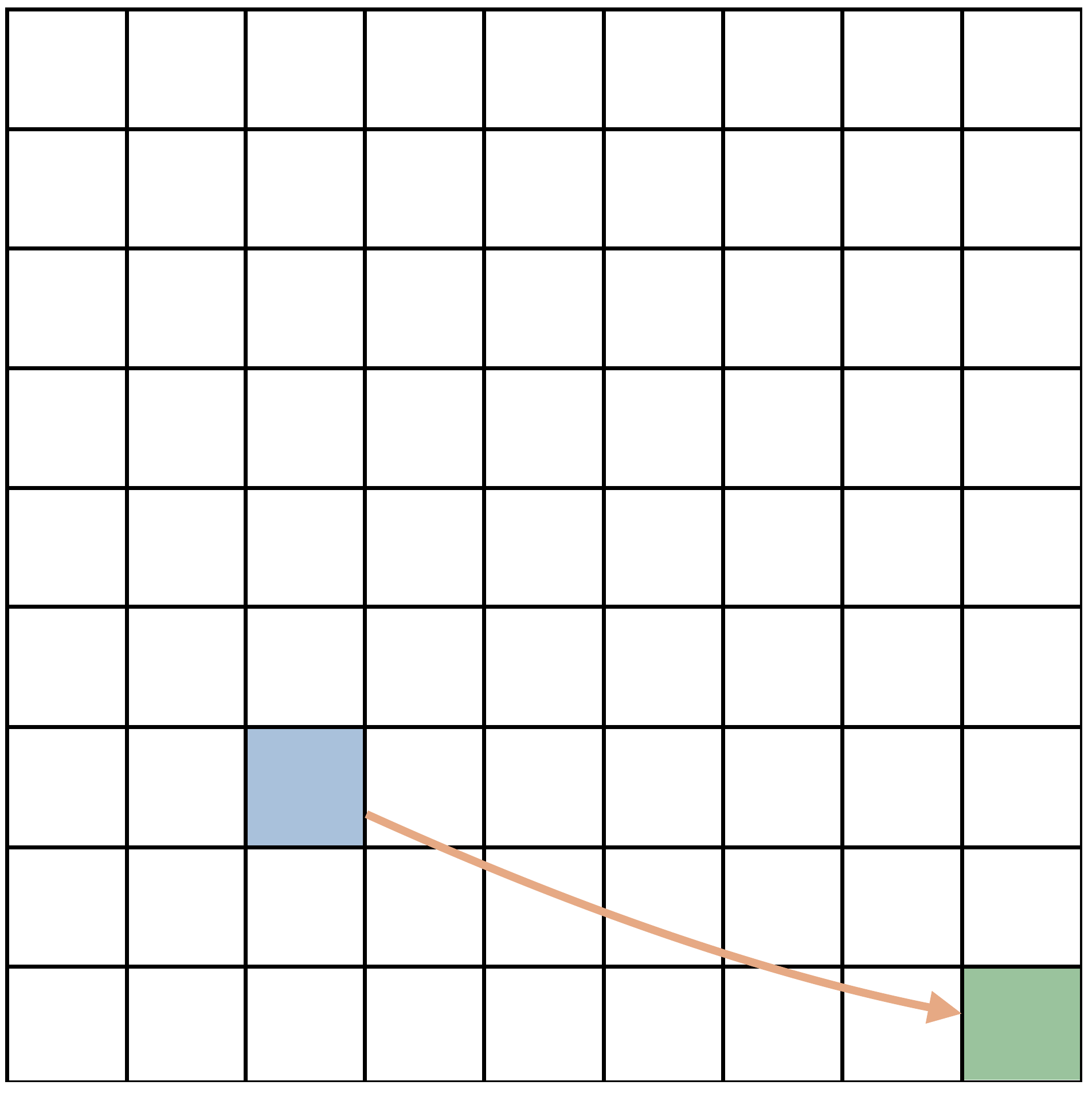}}\hspace{1mm}
    \subfloat[approx. $k = 2$]{\includegraphics[width=0.2\textwidth]{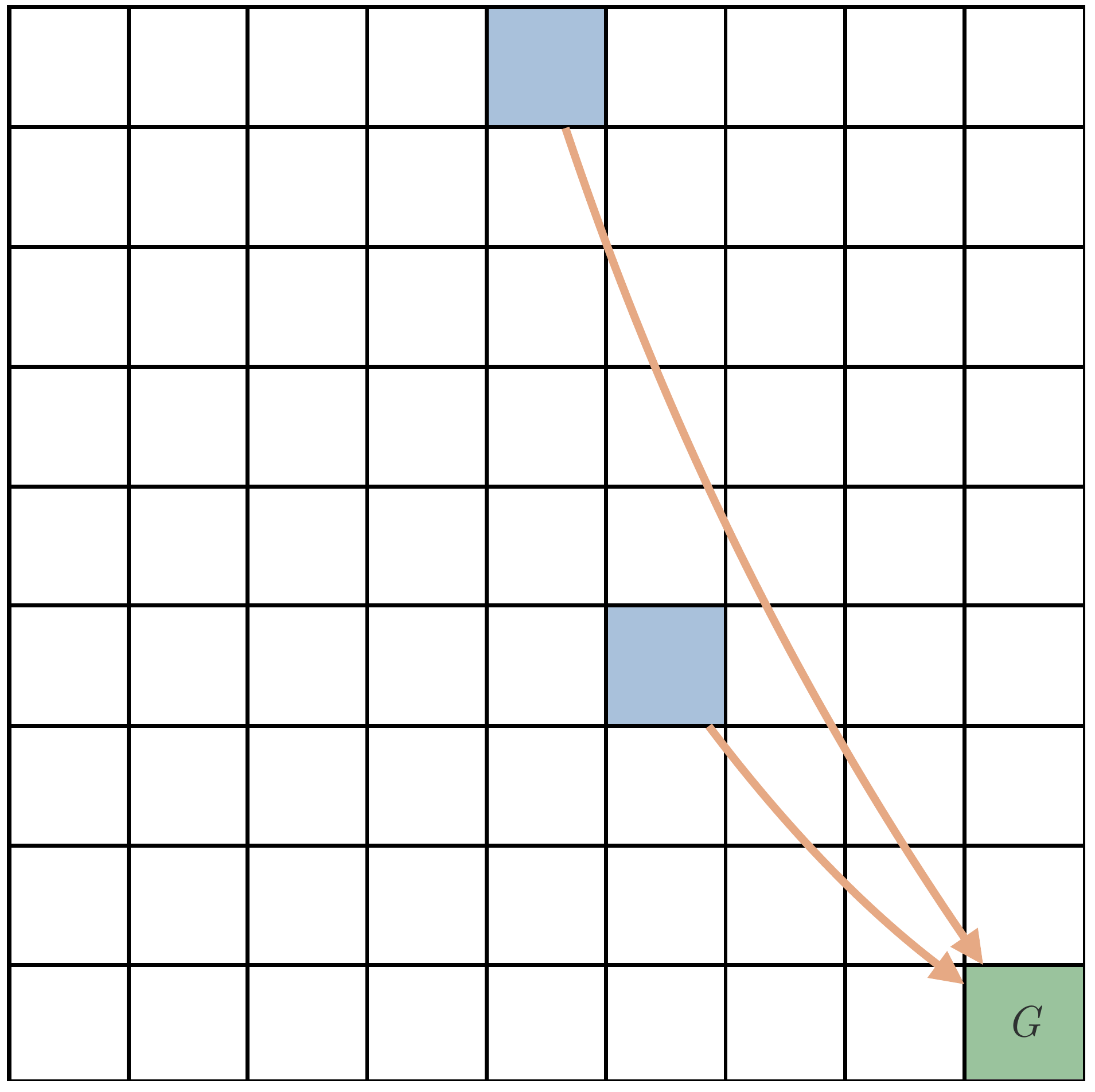}}\hspace{1mm}
    \subfloat[approx. $k = 3$]{\includegraphics[width=0.2\textwidth]{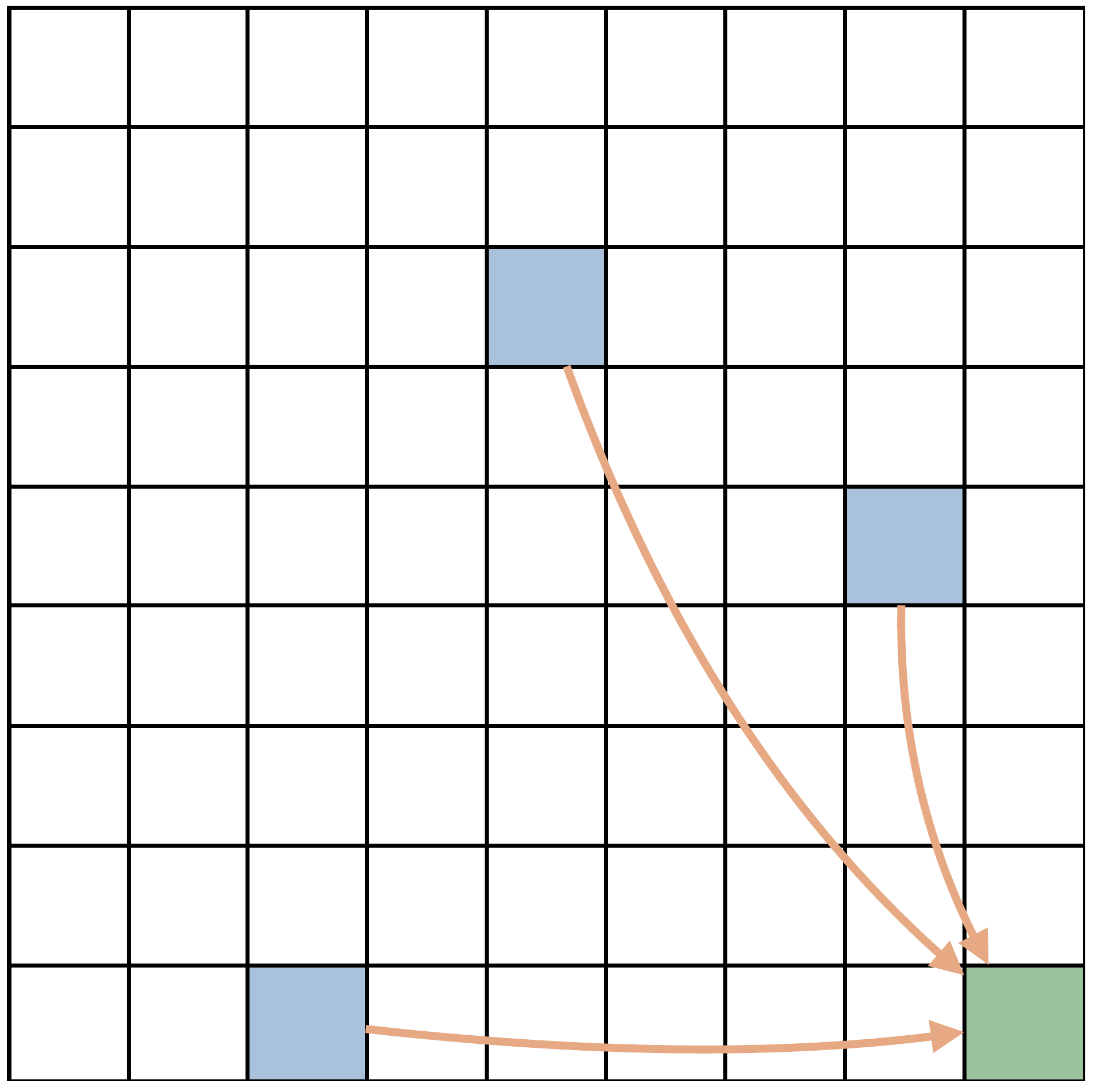}}
    
    \caption{Comparison of the optimal point options for planning vs. bottleneck options proposed for reinforcement learning in the four room domain. Initiating conditions are shown in blue, the goal in green. }
    \label{fig:9x9grid-viz}
\end{figure*}

\end{appendices}

\end{document}